\theoremstyle{plain}
\newtheorem{theorem}{Theorem}[section]
\newtheorem{lemma}[theorem]{Lemma}
\theoremstyle{definition}
\newtheorem{definition}[theorem]{Definition}
\newtheorem{assumption}[theorem]{Assumption}
\theoremstyle{remark}
\newtheorem{remark}[theorem]{Remark}
\newcommand{\R}{\mathbb{R}}
\newcommand{\ours}{\text{PDP-OP}}
\DeclareMathOperator*{\argmin}{arg\,min}
\newcommand{\multiline}[1]{%
    \begin{tabularx}{\dimexpr\linewidth-\ALG@thistlm}[t]{@{}X@{}}
        #1
    \end{tabularx}
}
\def\code#1{\texttt{#1}} 
\newcommand{\minus}{\scalebox{0.75}[1.0]{$-$}}
\author[1]{Krishna Acharya}
\author[2]{Franziska Boenisch}
\author[1]{Rakshit Naidu}
\author[1]{Juba Ziani}
\affil[1]{Georgia Institute of Technology}
\affil[2]{CISPA Helmholtz Center for Information Security}
\date{
    $^1$\{krishna.acharya, rakshitnaidu, jziani3\}@gatech.edu\\%
    $^2$boenisch@cispa.de\\[2ex]%
}
\title{Personalized Differential Privacy for Ridge Regression}
\begin{document}

\maketitle

\begin{abstract}
The increased application of machine learning (ML) in sensitive domains requires protecting the training data through privacy frameworks, such as differential privacy (DP). 
DP requires to specify a uniform privacy level $\varepsilon$ that expresses the maximum privacy loss that each data point in the entire dataset is willing to tolerate.
Yet, in practice, different data points often have different privacy requirements. Having to set one uniform privacy level is usually too restrictive, often forcing a learner to guarantee the stringent privacy requirement, at a large cost to accuracy. 
To overcome this limitation, we introduce our novel Personalized-DP Output Perturbation method (PDP-OP) that enables to train Ridge regression models with individual per data point privacy levels.
We provide rigorous privacy proofs for our PDP-OP as well as accuracy guarantees for the resulting model.
This work is the first to provide such theoretical accuracy guarantees when it comes to personalized DP in machine learning, whereas previous work only provided empirical evaluations. We empirically evaluate PDP-OP on synthetic and real datasets and with diverse privacy distributions. We show that by enabling each data point to specify their own privacy requirement, we can significantly improve the privacy-accuracy trade-offs in DP. We also show that PDP-OP outperforms the personalized privacy techniques of~\citet{jorgensen}.
\end{abstract}

\section{Introduction}

Over the last decade, the amount of private data collected about individuals has experienced an exponential growth. 
As the data is used for computing statistics, training recommender systems, and automated decision-making in sensitive domains, such as medicine, privacy concerns around this data are growing.
The gold standard for analyzing the data with privacy guarantees is Differential Privacy~(DP)~\cite{dp_og}.
DP allows to perform meaningful analyses of the entire dataset while protecting the privacy of individuals.
It guarantees that if any single individual in a dataset were to change their data point, the (distribution of) outcomes of the differentially private mechanism remains roughly the same.
The closeness of the outcomes is parametrized by a privacy parameter $\varepsilon$ that captures the level of privacy. 
This $\varepsilon$ represents the maximal privacy loss that any individual contributing data to the dataset is willing to accept, with small $\varepsilon$ indicating high levels of privacy.

However, DP comes with a major limitation: It requires to set the privacy level $\varepsilon$ uniformly for the entire dataset.
Implicitly, this suggests that all individuals whose data is present in the dataset have the same privacy requirements.
Yet, this is not accurate as individuals were shown to have diverse privacy requirements~\cite{jensen2005privacy,berendt2005privacy, acquistiindiv05}.
By setting a uniform privacy budget in DP, this budget must match the individual whose privacy requirements are the strongest. 
This means that $\varepsilon$ has to correspond to the highest privacy requirement in the dataset.
Thereby---since the implementation of DP usually relies on the addition of noise with higher privacy requiring higher amounts of noise being added---DP often yields unfavorable privacy-accuracy trade-offs~\cite{li2009tradeoff,tramer2020differentially,bagdasaryan2019differential}. 

In this paper, we argue standard DP is overly conservative and propose a new algorithm to train ridge regression with per-individual personalized privacy guarantees.\footnote{In the remainder of this work, we will, without loss of generality, assume that each data point is contributed by a different individual.}
We provide rigorous privacy proofs and accuracy guarantees for our algorithm, thereby, distinguishing ourselves from prior work on machine learning with personalized guarantees~\cite{hdp,jorgensen,boenisch2022individualized,boenisch2023have} that solely provides empirical evaluations. Our personalized privacy techniques also differ from that  of~\citet{boenisch2022individualized,boenisch2023have}, as we describe in more details in Section~\ref{sec:background}.  

In summary, we make the following contributions:
\begin{itemize}
    \item We propose the first personalized DP algorithm specialized to the case of Ridge regression in Section~\ref{sec:algos}, Algorithm~\eqref{Alg:output_boundedlabel}.
    \item \vspace{-1mm} We provide rigorous privacy proofs in Section~\ref{sec:privacy_guarantees} and accuracy guarantees in Section~\ref{sec:utility_guarantees}.
    \item \vspace{-1mm} We perform extensive empirical evaluations in Section~\ref{sec:experiments}. We highlight that i) our algorithm significantly outperforms standard output perturbation-based DP ridge regression in terms of privacy-accuracy trade-offs on multiple datasets and diverse privacy distributions in Section~\ref{sec:experiments_nonpersonalized} and Appendix~\ref{app:experiments_nonpers}. We also show that we outperform the personalized privacy technique of~\citet{jorgensen} in Section~\ref{sec:experiments_jorgensen} and Appendix~\ref{app:experiments_jorgensen}.
\end{itemize}

\section{Preliminaries and Related Work}\label{sec:background}

\paragraph{Differential Privacy.}
\label{sub:DP}
An algorithm $M$ satisfies $\pmb{(\varepsilon)}$\textbf{-DP}~\cite{dp_og}, if for any two datasets $D, D'\subseteq \mathcal{D}$ that differ in any one data point $x$, and any set of outputs~$R$
\begin{align}
\label{align:dp}
    \mathbb{P}\left[M(D) \in R\right] \leq e^\varepsilon \cdot \mathbb{P}\left[M(D') \in R\right].
\end{align}
This definition provides a theoretical upper bound on the influence that a single data point can have on the outcome of the analysis over the whole dataset.
The value $\varepsilon \in \mathbb{R}_+$ denotes the privacy level; with lower values corresponding to stronger privacy guarantees.

\paragraph{Personalized Privacy.}

Personalizing privacy guarantees are highly relevant, given that studies showed how society consists at least of three different groups of individuals, requiring strong, average, or weak privacy protection~\cite{jensen2005privacy, berendt2005privacy, acquistiindiv05}.
Without personalization, when applying DP to datasets that hold data from individuals with different privacy requirements, $\varepsilon$ needs to be set to the lowest $\varepsilon$ encountered among all individuals whose data we choose to use in our computation. This can often yield unfavorable privacy-utility trade-offs, due to having to throw away too much data or to have to use a stringent value of $\varepsilon$ for everyone. Instead, several previous works concurrently introduced the concept of \emph{personalized} DP. It was introduced formally in~\citet{hdp} and~\citet{jorgensen} in the context of central DP; it was also used informally by~\citet{cummings2015accuracy} in the context of mechanism design for data acquisition with local DP. Formally, the definition of personalized DP is the following:

\begin{definition}[$i$-neighboring]
 Two datasets $D$ and $D'$ are neighboring with respect to data point $i$ (or “i-neighbors”) if they differ only in data point $i$.
\end{definition}

\begin{definition}[Personalized DP]\label{def:personalized_dp}
A randomized algorithm $\mathcal{M}$ is $\varepsilon_i$-differentially private with respect to data point $i$, if for any outcome set $O \subset Range\left(\mathcal{M}\right)$ and for all $i$-neighboring databases $D, D'$
\[
\Pr \left[\mathcal{M}(D) \in O \right] \leq \exp(\varepsilon_i) \Pr \left[\mathcal{M}(D') \in O \right].
\]
\end{definition}

Two of the most prevalent techniques for personalized DP are personalized \textit{data sampling} and \textit{sensitivity pre-processing}. 

Both methods change how much the output of a computation depends on a particular data point: the lesser the dependency on a given data point, the more privacy this point gets.
The idea of data sampling for personalized privacy was introduced by~\citet{jorgensen}, and later used in the works of~\citet{niu2021adapdp,boenisch2022individualized,boenisch2023have}. 
Data sampling introduces randomness in the dataset: each data point can be sub-sampled or up-sampled before being fed into a standard DP algorithm. 

Sensitivity pre-processing, which is the approach used in this work, in contrast, can be implemented deterministically. It modifies the query of interest to have different sensitivities for different data points, where sensitivity is a standard notion in DP on how much a computation can change across neighbouring datasets.  

Sensitivity pre-processing for personalized privacy was originally introduced by~\citet{hdp} through linear pre-processing or ``stretching'' of the input data; a caveat of this method is that it requires strong assumptions on how changing the data changes per-user sensitivity.
A general-purpose method for manipulating the sensitivity of a query while maintaining accuracy is provided by~\citet{cummings_preprocessing}, but is unfortunately NP-hard to implement in the general case and is constructive, rather than given in closed form. Specializations of this method for the case of moment estimation have been recently used in~\citet{fallah_ec2022,ziani_satml2023}: both papers rely on weighted moment estimation, where the weight is lower for data points with stronger privacy requirements. A difficulty with such re-weightings is that they often need to be tailored to the specific learning task at hand. This is the approach we take and challenge we face in this work. Finally, \citet{partitioning} proposed two partitioning algorithms that first separate the data in different groups according to privacy requirements, and then process these groups separately. This approach was shown sub-optimal for learning-based applications~\cite{boenisch2022individualized}.

\paragraph{Private Empirical Risk Minimization.} There has been a significant line of work on making linear regression, generalized linear models, and empirical risk minimization (ERM) differentially-private~\citep{chaudhuri2008privacy,chaudhuri2011differentially,kifer2012private,bassily2014private, jain2014near,wang2018revisiting,bassily2019private,chen2020understanding,song2020characterizing,cai2021cost,song2021evading,alabi2022differentially,arora2022differentially}. These papers focus on standard DP, as opposed to personalized DP, and span a relatively large number of different techniques. Most relevant to us are the initial works in this space by~\citet{chaudhuri2008privacy,chaudhuri2011differentially}. In particular, they analyze obtaining DP in regression and ERM through an \emph{output perturbation} technique: ERM is first performed non-privately, then noise is added directly to the non-private estimator. We similarly rely on output perturbation in this paper, noting that this is a good starting point to the study of personalized DP in the context of regression. Most recently, output perturbation saw a new analysis for generalized linear models by~\citet{arora2022differentially}.

\section{Algorithms and Guarantees for Personalized Privacy in Ridge Regression}\label{sec:algos}

\paragraph{Our Setup.}
Consider a dataset $\mathcal{D} = \{ (x_i, y_i) \in \mathcal{X}  \times \mathcal{Y} : i = 1, 2, \cdots, n\}$ consisting of a total $n$ data points.
We assume that features are bounded; without loss of generality, we work with $x_i \in [0,1]^d$ for all $i \in [n]$.
We also assume that the labels are bounded, and w.l.o.g. set $y_i \in [-1,1]$ for all $i \in [n]$. Beyond this, each data point $i \in [n]$ has a \emph{privacy requirement}, in the form of a DP parameter $\varepsilon_i > 0$. The lower the value of $\varepsilon_i$, the more stringent the privacy requirement of data point $i$, as per Definition~\ref{def:personalized_dp}.

Our main focus is Ridge regression. I.e., we are aiming to find a linear model $x^\top \bar{\theta}$, parametrized by $\bar{\theta}$, that predicts the labels as accurately as possible. Our goal is to find the $\bar\theta$ that minimizes the Ridge loss
\[
L(\theta,\lambda) 
= \frac{1}{n} \sum_{i=1}^n \left(y_i - \theta^\top x_i \right)^2 
+ \lambda \| \theta \|^2_2.
\]
However, we cannot release $\bar{\theta}$, as it encodes information about the dataset ${(x_i,y_i)}$. Instead, we provide an estimator $\hat{\theta}$ whose performance on our Ridge loss is good, while at the same time ensuring that we satisfy DP with parameter $\varepsilon_i$ for all data points $i \in [n]$ \emph{simultaneously}.

\paragraph{Our Main Algorithm.} The main idea of our algorithm, ``Personalized-Differentially-Private Output Perturbation'' (or ``PDP-OP'')  is as follows: if we wanted to obtain traditional, non-personalized DP, we could first compute a non-private estimate $\bar{\theta}$, then add well-chosen noise $Z$ for privacy with density $\nu(b) \propto \exp(-\eta \norm{b}_2)$. This takes inspiration from~\citet{chaudhuri2008privacy, chaudhuri2011differentially}; however, our analysis exhibits differences due to differences in our estimators to incorporate personalized DP. Here, to instead obtain personalized DP, we rely on an idea studied by~\citet{cummings_preprocessing}: manipulating or pre-processing the sensitivity (i.e., how much a given data point can change the outcome) of our query with respect to each data point $i$. Our techniques for sensitivity pre-processing are very different from that of~\citet{cummings_preprocessing}: their method algorithmically constructs a ``good'' query with the desired sensitivities (but is intractable in the general case), while we obtain such a query efficiently and in closed-form through carefully manipulating the weight given to each data point in our loss function.

More precisely, our sensitivity pre-processing technique re-weights each data point $i$ by a weight $w_i$. The smaller the weight $w_i$, the smaller the impact the $i$-th data point has on the output model. Intuitively, this means that lower $w_i$'s correspond to less information encoded in the output $\hat{\theta}$ about data point $i$, and better privacy for $i$. When giving different $w_i$ to different data points, we can then ensure different, personalized privacy levels for different data points. Our algorithm is given formally in Algorithm~\ref{Alg:output_boundedlabel}. We then show formally in Section~\ref{sec:privacy_guarantees} how to choose the weights $w_i$ and the noise parameter $\eta$ in order to guarantee personalized DP with respect to privacy preferences $\varepsilon_1, \ldots, \varepsilon_n$.

\begin{algorithm}[H]
\caption{Personalized-Differentially-Private Output Perturbation (PDP-OP) \label{Alg:output_boundedlabel}}

\textbf{Inputs}: Dataset $\mathcal{D} = \{\left(x_i, y_i\right)~\text{for}~i \in [n]\}$; weights vector $\textbf{w} \geq 0$ with $\sum_{i=1}^n w_i = 1$; noise parameter $\eta$. \\
\textbf{Output}: Private estimator $\hat{\theta}$

\begin{algorithmic}[1]
\STATE Compute non-private estimate $\bar{\theta}$ as follows:
\begin{align}\label{eq:weightedLS}
\bar{\theta} 
= \arg\min_{\theta} \sum_{i=1}^n w_i \left(y_i - \theta^\top x_i \right)^2 
+ \lambda \| \theta \|^2_2.
\end{align}
\STATE Sample $Z$ as a random variable with probability density function $\propto \exp(-\eta \norm{b}_2)$
\STATE Return private estimate $\hat{\theta} = \bar{\theta} + Z$
\end{algorithmic}
\end{algorithm}

\begin{remark}[How to sample $Z$]\label{rmk:sampling}
It is known that for probability density function $\nu(b) \propto \exp(-\eta \norm{b}_2)$ for $Z$: 
\begin{itemize}
\item $\Vert Z \Vert_2$ follows a Gamma$(\alpha,\beta)$ distribution with $\alpha = d$ and $\beta = \eta$, which has density $f(r) \propto r^{d-1} \exp{-\eta r}. $\footnote{An easy way to see this informally is the following: the total mass on $\Vert Z \Vert_2 = r$ is proportional to the total mass on all $Z$'s with norm $r$---which is proportional to $\frac{2 \pi^{d/2}}{\Gamma(d/2)} r^{d-1}$ (the surface area of the $d$-dimensional $\ell_2$-sphere)---multiplied by the mass on any single $Z$ of norm $r$---which is proportional to $\exp{-\eta r}$. A more formal proof, omitted here, consists in writing the integration for $P[\Vert Z \Vert_2 \leq R]$, then doing a change of variable to hyper-spherical coordinates in the corresponding integral.}
\item For any given value $r \triangleq \Vert Z \Vert_2$, $Z$ is uniform on the $\ell_2$-sphere of radius $r$. This can be seen immediately as the density only depends on $\Vert Z \Vert$, so all realizations $z$ with the same norm have the same density.
\end{itemize}
In turn, to sample $Z$, it suffices to i) sample the radius $R$ from Gamma$(d,\eta)$, then ii) sample $Y$ uniformly at random from the $\ell_2$-sphere of radius $1$, and set $Z = R Y$. 
\end{remark}

\subsection{Privacy Guarantees}\label{sec:privacy_guarantees}
We now state the personalized privacy guarantee obtained by our algorithm. We remind the reader that this privacy guarantee relies on the observations being normalized such that $x_i \in [0,1]^d$ and $y_i \in [-1,1]$ for all $i \in [n]$:
\begin{theorem}\label{thm:privacy_guarantee_boundedlabel}
Fix privacy specifications $\varepsilon_1, \ldots, \varepsilon_n > 0$. Let $B(\lambda) = \min \left(\frac{1}{\sqrt{\lambda}}, \frac{\sqrt{d}}{\lambda}\right)$. Algorithm~\ref{Alg:output_boundedlabel} with parameters $w_i = \frac{\varepsilon_i}{\sum_{j=1}^n \varepsilon_j}$ for all $i$ and $\eta = \frac{\lambda}{2\sqrt{d} \left(1 + \sqrt{d} B(\lambda)\right)}\sum_{j=1}^n \varepsilon_j$ is $\varepsilon_i$-personalized differentially private with respect to data point $i$ for every $i \in [n]$.
\end{theorem}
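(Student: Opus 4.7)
The plan is to follow the standard output-perturbation template: bound the sensitivity of the non-private estimator $\bar{\theta}$ when a single data point $j$ is replaced, and then use the noise density $\nu(b)\propto \exp(-\eta\|b\|_2)$ to convert this sensitivity into a per-point privacy loss, which we force to equal $\varepsilon_j$ by the choice of $\eta$ and the weights $w_i$.

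First, I would write the first-order optimality conditions for problem~\eqref{eq:weightedLS}: $A\bar{\theta}=b$ with $A=\sum_i w_i x_i x_i^\top + \lambda I \succeq \lambda I$ and $b=\sum_i w_i y_i x_i$. On any $j$-neighbor $D'$ in which $(x_j,y_j)$ is replaced by $(x_j',y_j')$, the corresponding matrices differ from $A,b$ by rank-one updates $A'-A = w_j\bigl(x_j'(x_j')^\top - x_j x_j^\top\bigr)$ and $b'-b = w_j(y_j' x_j' - y_j x_j)$. Subtracting $A'\bar{\theta}'=b'$ from $A\bar{\theta}=b$ and regrouping yields the closed-form sensitivity expression
$$\bar{\theta}-\bar{\theta}' \;=\; w_j\,A^{-1}\!\Bigl[\bigl(x_j'(x_j')^\top - x_j x_j^\top\bigr)\bar{\theta}' \;-\; (y_j' x_j' - y_j x_j)\Bigr].$$
Using $\|A^{-1}\|_{\mathrm{op}}\le 1/\lambda$, $\|x_i\|_2\le\sqrt{d}$, $|y_i|\le 1$, and $\|x x^\top u\|_2 \le \|x\|_2^2\|u\|_2$, the second summand has norm at most $2\sqrt{d}$ and the first at most $2d\,\|\bar{\theta}'\|_2$. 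The one slightly delicate step is the bound on $\|\bar{\theta}'\|_2$: I would give two arguments and take the minimum---(i) comparing the weighted loss at $\bar{\theta}'$ to that at $0$ gives $\lambda\|\bar{\theta}'\|_2^2 \le \sum_i w_i y_i^2 \le 1$, so $\|\bar{\theta}'\|_2\le 1/\sqrt{\lambda}$; (ii) from the closed form $\bar{\theta}'=(A')^{-1}b'$, $\|\bar{\theta}'\|_2\le \|b'\|_2/\lambda \le \sqrt{d}/\lambda$. Together these give $\|\bar{\theta}'\|_2\le B(\lambda)$, and assembling everything yields
$$\|\bar{\theta}-\bar{\theta}'\|_2 \;\le\; \frac{w_j}{\lambda}\bigl(2d\,B(\lambda)+2\sqrt{d}\bigr) \;=\; \frac{2 w_j \sqrt{d}\bigl(1+\sqrt{d}\,B(\lambda)\bigr)}{\lambda}.$$

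The standard output-perturbation argument then closes the proof: for any outcome $\hat{\theta}$, the ratio of output densities under $D$ versus $D'$ equals $\exp\!\bigl(\eta(\|\hat{\theta}-\bar{\theta}'\|_2 - \|\hat{\theta}-\bar{\theta}\|_2)\bigr)$, which by the reverse triangle inequality is at most $\exp(\eta\|\bar{\theta}-\bar{\theta}'\|_2)$. Integrating over any outcome set gives $\varepsilon_j$-personalized DP with respect to point $j$ as long as $\eta\cdot 2 w_j\sqrt{d}(1+\sqrt{d}B(\lambda))/\lambda \le \varepsilon_j$. Substituting $w_j=\varepsilon_j/\sum_k\varepsilon_k$ and $\eta=\lambda\sum_k\varepsilon_k/(2\sqrt{d}(1+\sqrt{d}B(\lambda)))$ turns this into equality for every $j$ simultaneously, which is exactly what the theorem claims.

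The main obstacle is the sensitivity bound itself: the constant it produces controls the noise level (and hence the utility) of the mechanism, so tightness matters. In particular, both bounds on $\|\bar{\theta}'\|_2$ need to be in play---the $1/\sqrt{\lambda}$ bound dominates for large $\lambda$ and the $\sqrt{d}/\lambda$ bound for small $\lambda$---which is precisely what motivates the $B(\lambda)=\min\{1/\sqrt{\lambda},\sqrt{d}/\lambda\}$ appearing in the statement, and handling both regimes uniformly in a single closed-form sensitivity bound is the principal care point.
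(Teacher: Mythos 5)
Your proposal is correct and reaches exactly the sensitivity bound $\|\bar{\theta}-\bar{\theta}'\|_2 \le \frac{2 w_j \sqrt{d}}{\lambda}\bigl(1+\sqrt{d}\,B(\lambda)\bigr)$ that the paper establishes, and the surrounding pieces (the two bounds combined into $B(\lambda)$, the density-ratio argument via the reverse triangle inequality, and the plug-in of $w_j$ and $\eta$) coincide with the paper's. The one place where you take a genuinely different route is the sensitivity step itself. The paper adapts a strong-convexity lemma from Chaudhuri et al.: writing the two losses as $G$ and $G+g$ with $g(\theta)=w_j\bigl((y_j-\theta^\top x_j)^2-(y_j'-\theta^\top x_j')^2\bigr)$, it uses that both objectives are $2\lambda$-strongly convex to get $\|\theta_1-\theta_2\|_2\le \frac{1}{2\lambda}\|\nabla g(\theta_1)\|_2$, and then bounds the gradient of $g$ at the minimizer using $\|\bar\theta\|_2\le B(\lambda)$. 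You instead exploit the closed form of the ridge solution: subtracting the two normal equations $A\bar\theta=b$ and $A'\bar\theta'=b'$ and isolating the rank-one perturbations gives $\bar\theta-\bar\theta'=w_j A^{-1}\bigl[(x_j'(x_j')^\top-x_jx_j^\top)\bar\theta'-(y_j'x_j'-y_jx_j)\bigr]$, which you bound termwise with $\|A^{-1}\|_{\mathrm{op}}\le 1/\lambda$. Both arguments are sound and yield the same constant; the paper's strong-convexity route is the one that would generalize beyond quadratic losses (e.g., to other ERM objectives under the same output-perturbation template), whereas your linear-algebraic derivation is more elementary and self-contained but is tied to the fact that the ridge minimizer solves a linear system. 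No gap to report.
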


Additionally, we show another version of our privacy guarantee that uses additional assumptions on the data and on the best regression parameter absent regularization, when such information is available:
\begin{assumption}[Bounded $\bar{\theta}$]\label{as:boundedtheta}
Let $\bar{\theta}_0 = \argmin_\theta \sum_{i=1}^n w_i \left(y_i - \theta^\top x_i \right)^2$, when $\lambda = 0$. There is a known constant $B$ such that $\norm{\bar{\theta}_0}_2 \leq B$.
\end{assumption}

We incorporate such boundedness assumptions as they have also been used in previous work, such as~\cite{wang2018revisiting,arora2022differentially}.
Our privacy guarantee, under additional Assumption~\ref{as:boundedtheta}, is given by:

\begin{theorem}\label{thm:privacy_guarantee_boundedtheta}
Suppose Assumption~\ref{as:boundedtheta} holds. Fix privacy specifications $\varepsilon_1, \ldots, \varepsilon_n > 0$. Algorithm~\ref{Alg:output_boundedlabel} with parameters $w_i = \frac{\varepsilon_i}{\sum_{j=1}^n \varepsilon_j}$ for all $i$ and $\eta = \frac{\lambda}{2 \sqrt{d} (B\sqrt{d}  + 1)}\sum_{j=1}^n \varepsilon_j$ is $\varepsilon_i$-personalized differentially private with respect to data point $i$ for every $i \in [n]$.
\end{theorem}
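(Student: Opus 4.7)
The plan is to follow the output-perturbation template from Chaudhuri et al., adapted to the weighted objective in Equation~\eqref{eq:weightedLS} and to per-data-point privacy targets. Fix any data point $i$ and any two $i$-neighboring datasets $D, D'$; let $\bar\theta, \bar\theta'$ denote the corresponding minimizers of the weighted ridge loss. The argument has two main pieces: (i) bound the sensitivity $\|\bar\theta - \bar\theta'\|_2$ of the non-private estimator with respect to changing point $i$, and (ii) translate that sensitivity into a personalized DP guarantee through the noise density $\propto \exp(-\eta \|b\|_2)$.

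For step (i), I would invoke strong convexity of $g_D(\theta) = \sum_j w_j (y_j - \theta^\top x_j)^2 + \lambda \|\theta\|_2^2$, whose Hessian has minimum eigenvalue at least $2\lambda$. The standard consequence of $2\lambda$-strong convexity, combined with $\nabla g_D(\bar\theta) = \nabla g_{D'}(\bar\theta') = 0$, gives
$$
\|\bar\theta - \bar\theta'\|_2 \;\le\; \frac{\|\nabla g_{D'}(\bar\theta) - \nabla g_D(\bar\theta)\|_2}{2\lambda}.
$$
Because $D$ and $D'$ agree off of index $i$, the numerator reduces to the single-index difference $2 w_i \bigl[(\bar\theta^\top x_i' - y_i') x_i' - (\bar\theta^\top x_i - y_i) x_i\bigr]$, and each summand is bounded by $(1 + \sqrt d \, \|\bar\theta\|_2)\sqrt d$ using $\|x\|_2 \le \sqrt d$ and $|y| \le 1$.

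The crucial ingredient specific to this theorem is to show $\|\bar\theta\|_2 \le B$ using Assumption~\ref{as:boundedtheta}, which only bounds the unregularized optimizer $\bar\theta_0$. I would do this by a two-way optimality comparison: $g_D(\bar\theta) \le g_D(\bar\theta_0)$ by optimality of $\bar\theta$, while $\bar\theta_0$ minimizes the unregularized weighted squared loss and therefore achieves a squared-loss value no larger than $\bar\theta$'s. Subtracting these two inequalities cancels the squared-loss contributions and leaves $\lambda \|\bar\theta\|_2^2 \le \lambda \|\bar\theta_0\|_2^2 \le \lambda B^2$. Plugging this into the sensitivity bound gives $\|\bar\theta - \bar\theta'\|_2 \le \frac{2 w_i \sqrt d (1 + B\sqrt d)}{\lambda}$.

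For step (ii), the reverse triangle inequality applied to the noise density $\propto \exp(-\eta \|b\|_2)$ shows that the log-ratio of the densities of $\hat\theta = \bar\theta + Z$ on $i$-neighbors is at most $\eta \|\bar\theta - \bar\theta'\|_2$. Substituting the sensitivity bound and $w_i = \varepsilon_i / \sum_j \varepsilon_j$, a direct check confirms that $\eta = \frac{\lambda}{2\sqrt d (B\sqrt d + 1)} \sum_j \varepsilon_j$ makes this quantity exactly $\varepsilon_i$, yielding the claimed $\varepsilon_i$-personalized DP with respect to data point $i$. I expect the most delicate step to be the $\|\bar\theta\|_2 \le B$ derivation, since Assumption~\ref{as:boundedtheta} is stated only for the unregularized solution $\bar\theta_0$; everything else then follows mechanically from strong convexity and the output-perturbation lemma.
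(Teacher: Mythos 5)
Your proposal is correct and follows essentially the same route as the paper's proof: a strong-convexity sensitivity bound of $\frac{2 w_i \sqrt{d}(B\sqrt{d}+1)}{\lambda}$ driven by the single-index gradient difference, the bound $\Vert \bar{\theta} \Vert_2 \leq \Vert \bar{\theta}_0 \Vert_2 \leq B$ obtained by comparing optimality of $\bar{\theta}$ and $\bar{\theta}_0$ (the paper states this as a contradiction, you as a subtraction of the two inequalities---the same argument), and the standard density-ratio calculation for the $\exp(-\eta \Vert b \Vert_2)$ noise. The parameter check at the end matches the paper's exactly.
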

We can easily see that if $B \leq B(\lambda) = \min \left(\frac{1}{\sqrt{\lambda}},\frac{\sqrt{d}}{\lambda}\right)$, this bound adds noise $Z$ with a bigger parameter $\eta$ compared to in Theorem~\ref{thm:privacy_guarantee_boundedlabel}, which corresponds to adding \emph{less} noise. This will lead to better accuracy guarantees in regimes in which we put little weight on the regularization parameter, provided that we know or can estimate such a bound $B$.

\begin{proof}[Proof Sketch] The full proof is given in Appendix~\ref{app:proof_privacy}. The idea of the proof is similar to that of the output perturbation technique of \cite{chaudhuri2011differentially}, adapted to Ridge regression and personalized DP. A major difference comes in the fact that we use different weights $w_i$ for different data points. We examine the first order condition satisfied by $\bar{\theta}$ and note that $\bar{\theta}$ has a bounded gradient difference across two $i$-neighbouring datasets. However, unlike \citet{chaudhuri2011differentially}, the gradient difference here is bounded by a term that is proportional to $w_i$ when the databases are neighbouring in data point $i$. Combined with the fact that the loss function is strongly convex with strong convexity parameter $2 \lambda$, $\bar{\theta}$ cannot change too much across two neighbouring databases by a standard argument and bound its $\ell_2$-sensitivity by a term proportional to $w_i/\lambda$. Adding well-tailored noise for privacy concludes the proof.
\end{proof}

\subsection{Accuracy Guarantees}\label{sec:utility_guarantees}
We now provide theoretical bounds on the accuracy of our framework. We note that we are the first to provide such theoretical accuracy bounds for ridge regression with personalized DP. The framework of~\citet{jorgensen} is relatively general, but said generality prevents them from obtaining worst-case theoretical accuracy bounds, and they focus on an empirical evaluation of the performance (in terms of loss or accuracy) of their sampling framework.

We make the assumption that the label generating process is in fact approximately linear, which is the main use case in which linear regression should be used in the first place. This assumption is common in the literature; it is found for example in the seminal high-dimensional probability book of~\cite{vershynin2018high}. Importantly, note that this assumption is only made in order to characterize the theoretical accuracy of our framework. Our personalized DP bounds of Section~\ref{sec:privacy_guarantees} crucially \textit{do not} rely on Assumption~\ref{as:linearDGP}. 

\begin{assumption}\label{as:linearDGP}
Given a feature vector $x_i$, the label $y_i$ is given by $y_i = x_i^\top \theta^* + Z_i$ where $\theta^* \in \mathbb{R}^d$ and the $Z_i$'s are independent and identically distributed Gaussian variables with mean $0$ and standard deviation $\sigma > 0$.
\end{assumption}

We now provide a bound on how well we recover $\theta^*$, the true data generating process, as closely as possible as a function of our dataset and our choice of privacy parameters. We bound the distance between our estimate $\hat{\theta}$ and the true data generating parameter $\theta^*$ below:

\begin{theorem}[Accuracy of $\hat{\theta}$]\label{thm:accuracy}
Let $\eta$ be chosen as per Theorems~\ref{thm:privacy_guarantee_boundedlabel} absent assumptions, and Theorem~\ref{thm:privacy_guarantee_boundedtheta} under Assumption~\ref{as:boundedtheta}. For any $\delta > 0$, with probability at least $1-\delta$, we have that for any $\lambda > 0$, 
\begin{align*}
\Vert \theta^* - \hat{\theta} \Vert_2 
\leq \frac{\norm{\theta^*} }{1 + \frac{\lambda_{min} \left(\sum_{i=1}^n w_i x_i x_i^\top\right)}{\lambda}} 
+ \frac{1}{\eta} \left(d + \sqrt{\frac{2d}{\delta}} \right) 
 + \frac{\sigma}{\lambda} \sqrt{\frac{2d}{\delta}} \Vert \vec{w} \Vert
\end{align*}
for all $\lambda > 0$, where $\vec{w} = (w_1,\ldots,w_n)$. 
\end{theorem}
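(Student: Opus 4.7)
The plan is to decompose $\theta^* - \hat{\theta}$ into a deterministic bias, a term driven by the observation noise, and a term driven by the privacy noise, and bound each separately. Since $\hat{\theta}=\bar{\theta}+Z$, write the closed form of the weighted Ridge estimator $\bar{\theta}=(A+\lambda I)^{-1}\sum_i w_i x_i y_i$, where $A=\sum_i w_i x_i x_i^\top$. Substituting Assumption~\ref{as:linearDGP}, i.e. $y_i=x_i^\top\theta^*+Z_i$ with $Z_i\sim N(0,\sigma^2)$ i.i.d., gives
\[
\bar\theta-\theta^* \;=\; (A+\lambda I)^{-1}(A\theta^*+\xi)-\theta^* \;=\; -\lambda(A+\lambda I)^{-1}\theta^*\;+\;(A+\lambda I)^{-1}\xi,
\]
where $\xi\triangleq\sum_{i=1}^n w_i Z_i x_i$. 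Two applications of the triangle inequality then yield $\Vert\theta^*-\hat\theta\Vert_2\leq \lambda\Vert(A+\lambda I)^{-1}\theta^*\Vert_2 + \Vert(A+\lambda I)^{-1}\xi\Vert_2 + \Vert Z\Vert_2$.

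For the first (deterministic) term, since $A\succeq 0$, we have $\lambda_{\max}((A+\lambda I)^{-1})=1/(\lambda_{\min}(A)+\lambda)$, which immediately gives the first term in the theorem. For the second term I would use $\Vert(A+\lambda I)^{-1}\xi\Vert_2\leq \Vert\xi\Vert_2/\lambda$ and control $\Vert\xi\Vert_2$ probabilistically. Since $\xi$ is a centered Gaussian vector with covariance $\sigma^2\sum_i w_i^2 x_i x_i^\top$, and $\Vert x_i\Vert_2^2\leq d$ because $x_i\in[0,1]^d$, one has $\mathbb{E}\Vert\xi\Vert_2^2=\sigma^2\sum_i w_i^2\Vert x_i\Vert_2^2\leq \sigma^2 d\Vert\vec{w}\Vert^2$. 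Markov's inequality applied to $\Vert\xi\Vert_2^2$ at level $\delta/2$ produces $\Vert\xi\Vert_2\leq \sigma\sqrt{2d/\delta}\,\Vert\vec{w}\Vert$ with probability at least $1-\delta/2$, giving the third summand of the bound.

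The remaining task is bounding $\Vert Z\Vert_2$. Here I would invoke Remark~\ref{rmk:sampling}, which establishes $\Vert Z\Vert_2\sim\mathrm{Gamma}(d,\eta)$ with mean $d/\eta$ and variance $d/\eta^2$. Applying Chebyshev's inequality and selecting the deviation $t$ so that $d/(\eta^2 t^2)=\delta/2$ gives $t=\sqrt{2d/\delta}/\eta$, so with probability at least $1-\delta/2$, $\Vert Z\Vert_2\leq (d+\sqrt{2d/\delta})/\eta$, which is exactly the middle term of the target bound. A union bound over the two failure events yields the conclusion with overall probability $1-\delta$; the value of $\eta$ specified by Theorem~\ref{thm:privacy_guarantee_boundedlabel} or Theorem~\ref{thm:privacy_guarantee_boundedtheta} is only used to ensure personalized DP and need not be substituted into the accuracy bound.

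The only mildly delicate choices are (i) using Markov on $\Vert\xi\Vert_2^2$ rather than on $\Vert\xi\Vert_2$, which is what produces the clean $\sqrt{2d/\delta}$ dependence and the $\Vert\vec w\Vert$ factor, and (ii) recognizing that the privacy-noise norm has the Gamma distribution from Remark~\ref{rmk:sampling} rather than being an isotropic Gaussian, so that Chebyshev on a scalar suffices and no concentration machinery for multivariate noise is needed. I do not anticipate a deep obstacle; the main risk is simply losing track of the factor of $2$ in the Markov/Chebyshev calibration, which is why I split the failure budget as $\delta/2+\delta/2$ up front.
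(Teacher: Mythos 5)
Your proposal is correct and follows essentially the same route as the paper's proof: the identical bias/observation-noise/privacy-noise decomposition via the closed form of the weighted Ridge estimator, Markov's inequality applied to $\Vert\xi\Vert_2^2$ with $\mathrm{Tr}(\Sigma)\leq d\sigma^2\Vert\vec{w}\Vert^2$, Chebyshev on the Gamma$(d,\eta)$-distributed $\Vert Z\Vert_2$, and a $\delta/2+\delta/2$ union bound. No gaps.
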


\begin{proof}
The proof of Theorem~\ref{thm:accuracy} is in Appendix~\ref{app:proof_accuracy}.
\end{proof}

Our bound starts with a bias term: $\norm{\theta^*}  \big/\left(1 + \lambda_{min} \left(\sum_{i=1}^n w_i x_i x_i^\top\right)/\lambda \right)$. This term controls the bias due to Ridge regression itself, independently of the noise added for privacy and from the noise in the labels, and is unavoidable due to the nature of Ridge regression. First, in the worst case in which $\lambda$ grows larger and larger, Ridge regression recovers $\hat{\theta} \to 0$, and the bias due to regression alone, absent noise and privacy, would converge to $\theta^*$; this is exactly what our bound shows. Second, when $\lambda_{min} \left(\sum_{i=1}^n w_i x_i x_i^\top\right) = 0$, observations are not full-rank, and bias is unavoidable due to the fact that the unregularized regression problem has several solutions, some of which need not be close to $\theta^*$. However, when $\lambda_{min} \left(\sum_{i=1}^n w_i x_i x_i^\top\right)$ becomes larger, the regression problem absent regularization becomes better conditioned and easier, and the impact of the Ridge regression parameter $\lambda$ on the recovered non-private $\bar{\theta}$ is reduced. This is reflected in our expression, as we see that when $\lambda_{min} \left(\sum_{i=1}^n w_i x_i x_i^\top\right)/\lambda$ increases (i.e. $\lambda$ decreases), this bias term starts becoming smaller. The bias term disappears completely in this case as $\lambda \to 0$; this is to be expected, as in case we are running unregularized regression under full rank observations, in which case we know the minimizer is both unique and unbiased.

The second term, $\frac{1}{\eta} \left(d + \sqrt{\frac{2d}{\delta}}\right)$, is due to the noise added for privacy with parameter $\eta$. Importantly, note that this $\eta$ is proportional to $\sum_{j=1}^n \varepsilon_j$, giving us an accuracy bound where the privacy dependency that evolves with $1/\sum_{j=1}^n \varepsilon_j$. Absent personalization, instead, we would have to use a common, stringent privacy level of $\min_j \varepsilon_j$ for all agents $i$; to so, one would typically weight all data points equally and add noise with parameter scaling in $\eta \sim n \min_j \varepsilon_j$ to guarantee privacy under output perturbation. This would lead to a bound evolving in $1/n \min_j \varepsilon_j$. The accuracy bound for personalized privacy could then be significantly worse when some agents have very stringent privacy levels: i.e., when $\min_j \varepsilon_j$ small, but at the same time the average-case privacy level $\frac{1}{n} \sum_j \varepsilon_j$ is more lenient. We show in Section~\ref{sec:experiments} that this effect is not only observed in toy examples and in our theoretical accuracy bound, but also consistently across our experimental results.

Further, in the second term, $\eta$---as seen in Theorems~\ref{thm:privacy_guarantee_boundedlabel} and~\ref{thm:privacy_guarantee_boundedtheta}---is an increasing function of $\lambda$, and the second term is therefore decreasing in $\lambda$. This is in contrast with the bias term, which increases when $\lambda$ increase. The idea is that increasing $\lambda$ increases the weight on the regularization hence increases the bias of our estimator; however, at the same time, it decreases the amount of noise we need to add for privacy (bigger $\eta$ means less noise), leading to a smaller variance term due to the addition of differential privacy. One can in practice pick $\lambda$ to trade-off these two terms, both experimentally or based on our theoretical accuracy bound.

Finally, the third term $\frac{\sigma}{\lambda} \sqrt{\frac{2d}{\delta}} \norm{\vec{w}}$ is due to the fact that labels are noisy hence there are inaccuracies introduced due to the fact that linear regression is not a perfectly accurate model for our setting. As $\sigma$ decreases, the label noise also decreases, the observations become closer to forming a line, and this term grows smaller. The bigger the value of $\lambda$, the less the regression estimator depends on the data, and the lesser the impact of this label noise. Finally, the term has a dependency on the weights $\vec{w}$: the smaller the norm of $\vec{w}$, the better the weights are spread out (with the norm being minimized when $w_1 = \ldots = w_n = 1/n$) across data points, and the better the accuracy as we incorporate information about more and more of the data in our estimator.

\section{Experiments}\label{sec:experiments}
In this section, we evaluate the performance of our algorithm experimentally. Importantly, we highlight that our goal \emph{is not} to evaluate the performance of ``output perturbation'' (which first computes a non-private estimator then adds noise for privacy) for private regression, as this has been done extensively in previous work~\cite{chaudhuri2008privacy,chaudhuri2011differentially}. Rather, we highlight the performance of our re-weighting technique, in particular compared to the absence of data reweighting (which gives the same level of privacy to all data points) and to the sampling-based technique for personalized privacy of~\citet{jorgensen}. For this reason and to isolate the performance of our re-weighting technique versus the sampling of \citet{jorgensen}, we fix our experimental evaluation to have all baselines be based on output perturbation techniques\footnote{The code is available at \url{https://github.com/krishnacharya/pp-ridge}}.

We divide this section into two parts: i) we show how much the addition of personalized privacy improves accuracy compared to non-personalized privacy; ii) we compare our results to~\cite{jorgensen}, and show improvement both in terms of accuracy and variance of the estimator. 

\paragraph{Choice of Privacy Budgets.} To validate our personalized privacy setting, we follow a similar segregation scheme as~\cite{hdp,jorgensen}. We categorize data points into 3 segments, in order of most stringent to less stringent privacy requirements: \emph{conservatives} (high privacy), \emph{mediums} or \emph{pragmatists} (medium privacy), and \emph{liberals} (low privacy). The fraction of conservatives, mediums, and liberals in the population is denoted by $f_c$, $f_m$ and $f_l = 1 - (f_c + f_m)$ respectively. Each segment has their own privacy parameter denoted respectively $\varepsilon_c$, $\varepsilon_m$ and $\varepsilon_l$, where $\varepsilon_c < \varepsilon_m < \varepsilon_l$ (remember that lower $\varepsilon$, means stronger privacy requirement). In our experiments, we assign the personalized privacy budgets to the conservatives and mediums by uniformly sampling from the ranges $[\varepsilon_c, \varepsilon_m], [\varepsilon_m, \varepsilon_l]$ respectively. As for the liberals, they all receive the same single highest privacy budget $\varepsilon_l$. This follows~\citet{utilawareniu2020,jorgensen}.

As default values, we set $f_c=0.34, f_m=0.43, f_l=0.23$, $\varepsilon_c = 0.01, \varepsilon_m = 0.2, \varepsilon_l = 1.0$ unless otherwise specified. 
We provide experiments in Appendix~\ref{app:experiments} where we show how our insights extend as we change these privacy parameters.
\paragraph{Synthetic Data Generation.}
For the synthetic data, we draw $\theta^*$ uniformly over the unit sphere $S^{d-1}$.  Each feature $x$ is drawn uniformly over support $[0,1]^d$, and its corresponding label $y = \frac{x^\top \theta^*}{\sqrt{d}}$: the label then satisfies $|y| \leq \frac{1}{\sqrt{d}} \norm{x}_2 \norm{\theta^*}_2 \leq 1$. We consider perfect linear relationships between $x$ and $y$ with no noise in the labels (i.e., $\sigma = 0$ as per the notations of Section~\ref{sec:utility_guarantees}); we do so to decouple the effect of linear models being an imperfect hypothesis class from the performance of our method. We rely on our real dataset described below, both in the main body and in Appendix~\ref{app:experiments}, for situations in which the relationship between features and labels can only approximately be captured by a linear model.

\paragraph{Real Dataset.} For our experiments on real data, we use the ``Medical Cost'' dataset \cite{kaggleMedicalCost} which looks at an individual medical cost prediction task. Each individual's features are split into three numeric \{\code{age, BMI, \#children}\} and three categorical features \{\code{sex, smoker, region}\}. The dataset also has a real-valued medical \code{charges} column that we use as our label. 

{\bf Data pre-processing.}
We use min-max scaling to normalize the numeric features as well as the label to the range $[0,1]$. For any categorical features, we use standard one-hot-encoding. To deal with affine rather than simply linear relationships in the data, we add a $d+1$-th feature to each feature vector $x$, that corresponds to our model's intercept.

{\bf Metrics.} 
We evaluate the performance of  $\hat{\theta}$ on a held-out test set of size $N_{test}$, using the following metrics : (1)~\textbf{Unregularized test loss:}
$
\sum_{i=1}^{N_{test}} \frac{1}{N_{test}} (y_i - x_i^\top \hat{\theta})^2 
$ and (2)~\textbf{Regularized test loss:} 
$
\sum_{i=1}^{N_{test}} \frac{1}{N_{test}} (y_i -x_i^\top \hat{\theta})^2 + \lambda \norm{\hat{\theta}}_2^2\text{.}
$ 
The $\lambda$ we use in our evaluation is the same $\lambda$ that we use in our training loss and in Algorithm~\ref{Alg:output_boundedlabel}.

\subsection{Improvements over standard Differential Privacy}\label{sec:experiments_nonpersonalized}

Our first results show the improvements in accuracy when we use personalized DP as opposed to standard (or non-personalized) DP. To do so, we compare to what we call the \emph{non-personalized} baseline which provides the same privacy level $\varepsilon$ to all data points. We let $\varepsilon^* = \min_i \varepsilon_i$ is chosen to satisfy the most stringent privacy preferences (remember that smaller $\varepsilon$ means more privacy) among all data points. To implement this baseline, we simply use Algorithm~\ref{Alg:output_boundedlabel}, but with the privacy preference profile being $(\varepsilon^*,\ldots,\varepsilon^*)$. Note that the weights are all the same and equal to $1/n$ and the added noise scales as a function of $\varepsilon^*$. Hence our baseline implementation just follows the standard regression algorithm with output perturbation of~\citet{chaudhuri2008privacy} and~\citet{chaudhuri2011differentially}.

Table~\ref{tab:syn_4_1_plevel_34_43_23} shows the performance of our algorithm versus the non-personalized baseline across varying regularization $\lambda$, while fixing the other parameters $f_c, f_m, \varepsilon_c,~\varepsilon_m$. We note that we get consistent improvements of several order of magnitudes across all values of $\lambda$. The improvement is, for example, roughly of an order of magnitude of $100$ when it comes to both unregularized and regularized mean-squared error on the test set. This shows that leveraging differing privacy preferences across differing data points can lead to huge improvements in terms of privacy-accuracy trade-offs for differential privacy. Table \ref{tab:insurance_4_1_plevel_34_43_23} shows that significant improvements also occur on the real dataset.

\begin{table}[!h]
\centering
\resizebox{0.9\textwidth}{!}{%
\begin{tabular}{|c | cc|cc|}
\hline
\begin{tabular}[c]{@{}c@{}}Regularization\\ parameter\\ Lambda ($\lambda$)\end{tabular} &
  \begin{tabular}[c]{@{}c@{}}Unregularized\\ test loss \\ (\ours)\end{tabular} &
  \begin{tabular}[c]{@{}c@{}}Unregularized\\ test loss \\ (non-personalized)\end{tabular} &
  \begin{tabular}[c]{@{}c@{}}Regularized\\ test loss \\ (\ours)\end{tabular} &
  \begin{tabular}[c]{@{}c@{}}Regularized\\ test loss\\ (non-personalized)\end{tabular} \\
\hline
1.00   & $\mathbf{8.54 \cross 10^{2}}$  & $4.60 \cross 10^{5}$  & $\mathbf{3.32 \cross 10^{3}}$  & $1.77 \cross 10^{6}$ \\
3.00   & $\mathbf{3.88 \cross 10^{1}}$  & $2.08 \cross 10^{4}$  & $\mathbf{3.82 \cross 10^{2}}$  & $2.03 \cross 10^{5}$ \\
5.00   & $\mathbf{9.78}$                  & $5.20 \cross 10^{3}$  & $\mathbf{1.50 \cross 10^{2}}$  & $8.08 \cross 10^{4}$ \\
7.00   & $\mathbf{3.81}$                  & $2.06 \cross 10^{3}$  & $\mathbf{8.36 \cross 10^{1}}$  & $4.46 \cross 10^{4}$ \\
10.00  & $\mathbf{1.49}$                  & $8.18 \cross 10^{2}$  & $\mathbf{4.58 \cross 10^{1}}$  & $2.45 \cross 10^{4}$ \\
15.00  & $\mathbf{5.30 \cross 10^{-1}}$ & $2.75 \cross 10^{2}$  & $\mathbf{2.34 \cross 10^{1}}$  & $1.26 \cross 10^{4}$ \\
20.00  & $\mathbf{2.54 \cross 10^{-1}}$ & $1.29 \cross 10^{2}$  & $\mathbf{1.49 \cross 10^{1}}$  & $8.01 \cross 10^{3}$ \\
25.00  & $\mathbf{1.44 \cross 10^{-1}}$ & $7.45 \cross 10^{1}$  & $\mathbf{1.07 \cross 10^{1}}$  & $5.63 \cross 10^{3}$ \\
50.00  & $\mathbf{2.28 \cross 10^{-2}}$ & $1.11 \cross 10^{1}$  & $\mathbf{3.06}$                  & $1.64 \cross 10^{3}$ \\
75.00  & $\mathbf{9.35 \cross 10^{-3}}$ & 3.78                  & $\mathbf{1.56}$                  & $8.29 \cross 10^{2}$ \\
100.00 & $\mathbf{5.82 \cross 10^{-3}}$ & 1.80                  & $\mathbf{1.01}$                  & $5.36 \cross 10^{2}$ \\
125.00 & $\mathbf{4.39 \cross 10^{-3}}$ & 1.06                  & $\mathbf{7.37 \cross 10^{-1}}$ & $3.91 \cross 10^{2}$ \\
150.00 & $\mathbf{3.68 \cross 10^{-3}}$ & $6.79 \cross 10^{-1}$ & $\mathbf{5.73 \cross 10^{-1}}$ & $3.05 \cross 10^{2}$ \\
175.00 & $\mathbf{3.33 \cross 10^{-3}}$ & $4.76 \cross 10^{-1}$ & $\mathbf{4.67 \cross 10^{-1}}$ & $2.49 \cross 10^{2}$ \\
200.00 & $\mathbf{3.09 \cross 10^{-3}}$ & $3.64 \cross 10^{-1}$ & $\mathbf{3.97 \cross 10^{-1}}$ & $2.10 \cross 10^{2}$ \\
\hline
\end{tabular}
}
\caption{Loss of \ours~compared to standard DP on the synthetic dataset with $d = 30, n = 100$, keeping $\varepsilon_c = 0.01, \varepsilon_m = 0.2, \varepsilon_l = 1.0, f_c = 0.34, f_m = 0.43, f_l = 0.23$.}
\label{tab:syn_4_1_plevel_34_43_23}
\end{table}

\begin{table}[!h]
\centering
\resizebox{0.9\textwidth}{!}{%
\begin{tabular}{|c|cc|cc|}
\hline
\begin{tabular}[c]{@{}c@{}}Regularization\\ parameter\\ Lambda ($\lambda$)\end{tabular} &
  \begin{tabular}[c]{@{}c@{}}Unregularized\\ test loss\\ (PDP-OP)\end{tabular} &
  \begin{tabular}[c]{@{}c@{}}Unregularized\\ test loss\\ (non-personalized)\end{tabular} &
  \begin{tabular}[c]{@{}c@{}}Regularized\\ test loss\\ (PDP-OP)\end{tabular} &
  \begin{tabular}[c]{@{}c@{}}Regularized\\ test loss\\ (non-personalized)\end{tabular} \\
\hline
0.01 & $\mathbf{1.19 \cross 10^5}$    & $2.18 \cross 10^8$   & $\mathbf{1.22 \cross 10^5}$    & $2.31 \cross 10^8$ \\
0.05 & $\mathbf{1.03 \cross 10^3}$    & $1.89 \cross 10^6$   & $\mathbf{1.16 \cross 10^{3}}$  & $2.12 \cross 10^6$ \\
0.10 & $\mathbf{1.34 \cross 10^2}$    & $2.49 \cross 10^{5}$ & $\mathbf{1.70 \cross 10^{2}}$  & $3.08 \cross 10^5$ \\
0.50 & $\mathbf{1.30}$                & $2.40 \cross 10^3$   & $\mathbf{3.03}$                & $5.52 \cross 10^3$ \\
0.60 & $\mathbf{8.10 \cross 10^{-1}}$ & $1.47 \cross 10^3$   & $\mathbf{2.02}$                & $3.65 \cross 10^3$ \\
0.70 & $\mathbf{5.29 \cross 10^{-1}}$ & $9.26 \cross 10^2$   & $\mathbf{1.47}$                & $2.61 \cross 10^3$ \\
0.80 & $\mathbf{3.78 \cross 10^{-1}}$ & $6.36 \cross 10^2$   & $\mathbf{1.11}$                & $1.98 \cross 10^3$ \\
0.90 & $\mathbf{2.78 \cross 10^{-1}}$ & $4.59 \cross 10^2$   & $\mathbf{8.79 \cross 10^{-1}}$ & $1.54 \cross 10^3$ \\
1.00 & $\mathbf{2.15 \cross 10^{-1}}$ & $3.45 \cross 10^{2}$ & $\mathbf{7.12 \cross 10^{-1}}$ & $1.24 \cross 10^3$ \\
2.00 & $\mathbf{6.80 \cross 10^{-2}}$ & $5.18 \cross 10^1$   & $\mathbf{2.26 \cross 10^{-1}}$ & $3.19 \cross 10^2$ \\
3.00 & $\mathbf{5.52 \cross 10^{-2}}$ & $1.73 \cross 10^1$   & $\mathbf{1.39 \cross 10^{-1}}$ & $1.52 \cross 10^2$ \\
5.00 & $\mathbf{5.54 \cross 10^{-2}}$ & 4.49                 & $\mathbf{9.65 \cross 10^{-2}}$ & $6.25 \cross 10^1$ \\
\hline
\end{tabular}%
}
\caption{Loss of \ours~compared to standard DP on the Medical cost dataset, keeping $\varepsilon_c = 0.01, \varepsilon_m = 0.2, \varepsilon_l=1.0, f_c = 0.34, f_m = 0.43, f_l = 0.23$.}
\label{tab:insurance_4_1_plevel_34_43_23}
\end{table}

Appendix~\ref{app:experiments_nonpers} provides more experiments where we vary parameters $f_c,~\varepsilon_c,~\varepsilon_m,~n$ on the synthetic and real datasets.

\subsection{Comparison to \citet{jorgensen}}\label{sec:experiments_jorgensen}

\begin{table*}[!h]
\centering
\resizebox{0.9\textwidth}{!}{%
\begin{tabular}{|c | cccc|cccc |}
\hline
\begin{tabular}[c]{@{}c@{}}Regularization\\ parameter\\ Lambda ($\lambda$)\end{tabular} &
  \begin{tabular}[c]{@{}c@{}}Unregularized\\ test loss \\ (\ours)\end{tabular} &
  \begin{tabular}[c]{@{}c@{}}Unregularized\\ test loss\\ (Jorgensen max)\end{tabular} &
  \begin{tabular}[c]{@{}c@{}}Unregularized\\ test loss\\ (Jorgensen mean)\end{tabular} &
  \begin{tabular}[c]{@{}c@{}}Unregularized\\ non-private \\ test loss\end{tabular} &
  \begin{tabular}[c]{@{}c@{}}Regularized\\ test loss \\ (\ours)\end{tabular} &
  \begin{tabular}[c]{@{}c@{}}Regularized\\ test loss\\ (Jorgensen max)\end{tabular} &
  \begin{tabular}[c]{@{}c@{}}Regularized\\ test loss\\ (Jorgensen mean)\end{tabular} &
  \begin{tabular}[c]{@{}c@{}}Regularized\\ non-private \\ test loss\end{tabular} \\
\hline
1.00 &
  $\mathbf{8.54 \cross 10^{2}}$ &
  $1.09 \cross 10^{3}$ &
  $1.89 \cross 10^{3}$ &
  $5.73 \cross 10^{-32}$ &
  $\mathbf{3.32 \cross 10^{3}}$ &
  $4.16 \cross 10^{3}$ &
  $7.55 \cross 10^{3}$ &
  $2.01 \cross 10^{-3}$ \\
3.00 &
  $\mathbf{3.88 \cross 10^{1}}$ &
  $4.82 \cross 10^{1}$ &
  $8.64 \cross 10^{1}$ &
  $5.73 \cross 10^{-32}$ &
  $\mathbf{3.82 \cross 10^{2}}$ &
  $4.79 \cross 10^{2}$ &
  $8.64 \cross 10^{2}$ &
  $2.11 \cross 10^{-3}$ \\
5.00 &
  $\mathbf{9.78}$ &
  $1.21 \cross 10^{1}$ &
  $2.15 \cross 10^{1}$ &
  $5.73 \cross 10^{-32}$ &
  $\mathbf{1.50 \cross 10^{2}}$ &
  $1.89 \cross 10^{2}$ &
  $3.40 \cross 10^{2}$ &
  $2.17 \cross 10^{-3}$ \\
7.00 &
  $\mathbf{3.81}$ &
  4.81 &
  8.85 &
  $5.73 \cross 10^{-32}$ &
  $\mathbf{8.36 \cross 10^{1}}$ &
  $1.05 \cross 10^{2}$ &
  $1.89 \cross 10^{2}$ &
  $2.20 \cross 10^{-3}$ \\
10.00 &
  $\mathbf{1.49}$ &
  1.88 &
  3.44 &
  $5.73 \cross 10^{-32}$ &
  $\mathbf{4.58 \cross 10^{1}}$ &
  $5.75 \cross 10^{1}$ &
  $1.03 \cross 10^{2}$ &
  $2.24 \cross 10^{-3}$ \\
15.00 &
  $\mathbf{5.30 \cross 10^{-1}}$ &
  $6.61 \cross 10^{-1}$ &
  1.16 &
  $5.73 \cross 10^{-32}$ &
  $\mathbf{2.34 \cross 10^{1}}$ &
  $2.97 \cross 10^{1}$ &
  $5.33\cross 10^{1}$ &
  $2.28 \cross 10^{-3}$ \\
20.00 &
  $\mathbf{2.54 \cross 10^{-1}}$ &
  $3.12 \cross 10^{-1}$ &
  $5.64 \cross 10^{-1}$ &
  $5.73 \cross 10^{-32}$ &
  $\mathbf{1.49 \cross 10^{1}}$ &
  $1.87 \cross 10^{1}$ &
  $3.39 \cross 10^{1}$ &
  $2.31 \cross 10^{-3}$ \\
25.00 &
  $\mathbf{1.44 \cross 10^{-1}}$ &
  $1.81 \cross 10^{-2}$ &
  $3.27 \cross 10^{-1}$ &
  $5.73 \cross 10^{-32}$ &
  $\mathbf{1.07 \cross 10^{1}}$ &
  $1.31 \cross 10^{1}$ &
  $2.39 \cross 10^{1}$ &
  $2.33 \cross 10^{-3}$ \\
50.00 &
  $\mathbf{2.28 \cross 10^{-2}}$ &
  $2.82 \cross 10^{-2}$ &
  $4.92 \cross 10^{-2}$ &
  $5.73 \cross 10^{-32}$ &
  $\mathbf{3.06}$ &
  3.83 &
  6.95 &
  $2.37 \cross 10^{-3}$ \\
75.00 &
  $\mathbf{9.35 \cross 10^{-3}}$ &
  $1.12 \cross 10^{-2}$ &
  $1.81 \cross 10^{-2}$ &
  $5.73 \cross 10^{-32}$ &
  $\mathbf{1.56}$ &
  1.95 &
  3.56 &
  $2.39 \cross 10^{-3}$ \\
100.00 &
  $\mathbf{5.82 \cross 10^{-3}}$ &
  $6.61 \cross 10^{-3}$ &
  $1.01 \cross 10^{-2}$ &
  $5.73 \cross 10^{-32}$ &
  $\mathbf{1.01}$ &
  1.26 &
  2.28 &
  $2.40 \cross 10^{-3}$ \\
200.00 &
  $\mathbf{3.09 \cross 10^{-3}}$ &
  $3.25 \cross 10^{-3}$ &
  $3.92 \cross 10^{-3}$ &
  $5.73 \cross 10^{-32}$ &
  $\mathbf{3.97 \cross 10^{-1}}$ &
  $4.97 \cross 10^{-1}$ &
  $9.02 \cross 10^{-1}$ &
  $2.42 \cross 10^{-3}$ \\
300.00 &
  $\mathbf{2.68 \cross 10^{-3}}$ &
  $2.76 \cross 10^{-3}$ &
  $3.04 \cross 10^{-3}$ &
  $5.73 \cross 10^{-32}$ &
  $\mathbf{2.42 \cross 10^{-1}}$ &
  $3.02 \cross 10^{-1}$ &
  $5.46 \cross 10^{-1}$ &
  $2.42 \cross 10^{-3}$ \\
400.00 &
  $\mathbf{2.56 \cross 10^{-3}}$ &
  $2.60 \cross 10^{-3}$ &
  $2.76 \cross 10^{-3}$ &
  $5.73 \cross 10^{-32}$ &
  $\mathbf{1.75 \cross 10^{-1}}$ &
  $2.16 \cross 10^{-1}$ &
  $3.92 \cross 10^{-1}$ &
  $2.42 \cross 10^{-3}$ \\
500.00 &
  $\mathbf{2.51 \cross 10^{-3}}$ &
  $2.54 \cross 10^{-3}$ &
  $2.62 \cross 10^{-3}$ &
  $5.73 \cross 10^{-32}$ &
  $\mathbf{1.36 \cross 10^{-1}}$ &
  $1.70 \cross 10^{-1}$ &
  $3.06 \cross 10^{-1}$ &
  $2.43 \cross 10^{-3}$ \\
\hline
\end{tabular}%
}
\caption{Lower loss compared to \citet{jorgensen} on the synthetic dataset with $d = 30,~n = 100$, keeping $\varepsilon_c = 0.01, \varepsilon_m = 0.2, \varepsilon_l=1.0, f_c = 0.34, f_m = 0.43, f_l = 0.23$.}
\label{tab:acc_syn_4_2_1_plevel_34_43_23}
\end{table*}

\begin{table*}[!h]
\centering
\resizebox{0.9\textwidth}{!}{
\begin{tabular}{|c|cccc|cccc|}
\hline
\begin{tabular}[c]{@{}c@{}}Regularization\\ parameter\\ Lambda ($\lambda$)\end{tabular} &
  \begin{tabular}[c]{@{}c@{}}Unregularized\\ test loss\\ (PDP-OP)\end{tabular} &
  \begin{tabular}[c]{@{}c@{}}Unregularized\\ test loss\\ (Jorgensen max)\end{tabular} &
  \begin{tabular}[c]{@{}c@{}}Unregularized\\ test loss\\ (Jorgensen mean)\end{tabular} &
  \begin{tabular}[c]{@{}c@{}}Unregularized\\ non-private\\ test loss\end{tabular} &
  \begin{tabular}[c]{@{}c@{}}Regularized\\ test loss\\ (PDP-OP)\end{tabular} &
  \begin{tabular}[c]{@{}c@{}}Regularized\\ test loss\\ (Jorgensen max)\end{tabular} &
  \begin{tabular}[c]{@{}c@{}}Regularized\\ test loss\\ (Jorgensen mean)\end{tabular} &
  \begin{tabular}[c]{@{}c@{}}Regularized\\ non-private\\ test loss\end{tabular} \\
\hline
0.01 &
  $\mathbf{1.19 \cross 10^5}$ &
  $1.50 \cross 10^5$ &
  $2.87 \cross 10^5$ &
  $9.43 \cross 10^{-3}$ &
  $\mathbf{1.22 \cross 10^5}$ &
  $1.52 \cross 10^5$ &
  $2.93 \cross 10^5$ &
  $1.08 \cross 10^{-2}$ \\
0.05 &
  $\mathbf{1.03 \cross 10^3}$ &
  $1.26 \cross 10^3$ &
  $2.43 \cross 10^3$ &
  $9.43 \cross 10^{-3}$ &
  $\mathbf{1.16 \cross 10^3}$ &
  $1.42 \cross 10^3$ &
  $2.75 \cross 10^3$ &
  $1.51 \cross 10^{-2}$ \\
0.10 &
  $\mathbf{1.34 \cross 10^2}$ &
  $1.66 \cross 10^2$ &
  $3.14 \cross 10^2$ &
  $9.43 \cross 10^{-3}$ &
  $\mathbf{1.70 \cross 10^2}$ &
  $2.11 \cross 10^2$ &
  $4.00 \cross 10^2$ &
  $1.90 \cross 10^{-2}$ \\
0.50 &
  $\mathbf{1.30}$ &
  $1.67$ &
  3.10 &
  $9.43 \cross 10^{-3}$ &
  $\mathbf{3.03}$ &
  3.80 &
  7.19 &
  $3.46 \cross 10^{-2}$ \\
0.60 &
  $\mathbf{8.10 \cross 10^{-1}}$ &
  $9.89 \cross 10^{-1}$ &
  1.85 &
  $9.43 \cross 10^{-3}$ &
  $\mathbf{2.02}$ &
  2.50 &
  4.76 &
  $3.67 \cross 10^{-2}$ \\
0.70 &
  $\mathbf{5.29 \cross 10^{-1}}$ &
  $6.55 \cross 10^{-1}$ &
  1.23 &
  $9.43 \cross 10^{-3}$ &
  $\mathbf{1.47}$ &
  1.80 &
  3.43 &
  $3.86 \cross 10^{-2}$ \\
0.80 &
  $\mathbf{3.78 \cross 10^{-1}}$ &
  $4.65 \cross 10^{-1}$ &
  $8.71 \cross 10^{-1}$ &
  $9.43 \cross 10^{-3}$ &
  $\mathbf{1.11}$ &
  1.39 &
  2.63 &
  $4.02 \cross 10^{-2}$ \\
0.90 &
  $\mathbf{2.78 \cross 10^{-1}}$ &
  $3.42 \cross 10^{-1}$ &
  $6.30 \cross 10^{-1}$ &
  $9.43 \cross 10^{-3}$ &
  $\mathbf{8.79 \cross 10^{-1}}$ &
  1.08 &
  2.03 &
  $4.17 \cross 10^{-2}$ \\
1.00 &
  $\mathbf{2.15 \cross 10^{-1}}$ &
  $2.61 \cross 10^{-1}$ &
  $4.76 \cross 10^{-1}$ &
  $9.43 \cross 10^{-3}$ &
  $\mathbf{7.12 \cross 10^{-1}}$ &
  $8.70 \cross 10^{-1}$ &
  1.62 &
  $4.30 \cross 10^{-2}$ \\
2.00 &
  $\mathbf{6.80 \cross 10^{-2}}$ &
  $7.53 \cross 10^{-2}$ &
  $1.06 \cross 10^{-1}$ &
  $9.43 \cross 10^{-3}$ &
  $\mathbf{2.26 \cross 10^{-1}}$ &
  $2.65 \cross 10^{-1}$ &
  $4.59 \cross 10^{-1}$ &
  $5.19 \cross 10^{-2}$ \\
3.00 &
  $\mathbf{5.52 \cross 10^{-2}}$ &
  $5.72 \cross 10^{-2}$ &
  $6.87 \cross 10^{-2}$ &
  $9.43 \cross 10^{-3}$ &
  $\mathbf{1.39 \cross 10^{-1}}$ &
  $1.59 \cross 10^{-1}$ &
  $2.52 \cross 10^{-1}$ &
  $5.69 \cross 10^{-2}$ \\
5.00 &
  $\mathbf{5.54 \cross 10^{-2}}$ &
  $5.61 \cross 10^{-2}$ &
  $5.89 \cross 10^{-2}$ &
  $9.43 \cross 10^{-3}$ &
  $\mathbf{9.65 \cross 10^{-2}}$ &
  $1.05 \cross 10^{-1}$ &
  $1.43 \cross 10^{-1}$ &
  $6.27 \cross 10^{-2}$ \\
\hline
\end{tabular}
}
\caption{Lower loss compared to \citet{jorgensen} on the MedicalCost dataset, keeping $\varepsilon_c = 0.01, \varepsilon_m = 0.2, \varepsilon_l=1.0, f_c = 0.34, f_m = 0.43, f_l = 0.23$. }
\label{tab:insurance_4_2_1_plevel_34_43_23}
\end{table*}

\begin{figure*}[!h]
\vspace{-6mm}
\centering
    \subfloat[Unregularized Loss ($d=30, n=100$)]{
        \centering
        \includegraphics[width=0.37\textwidth]{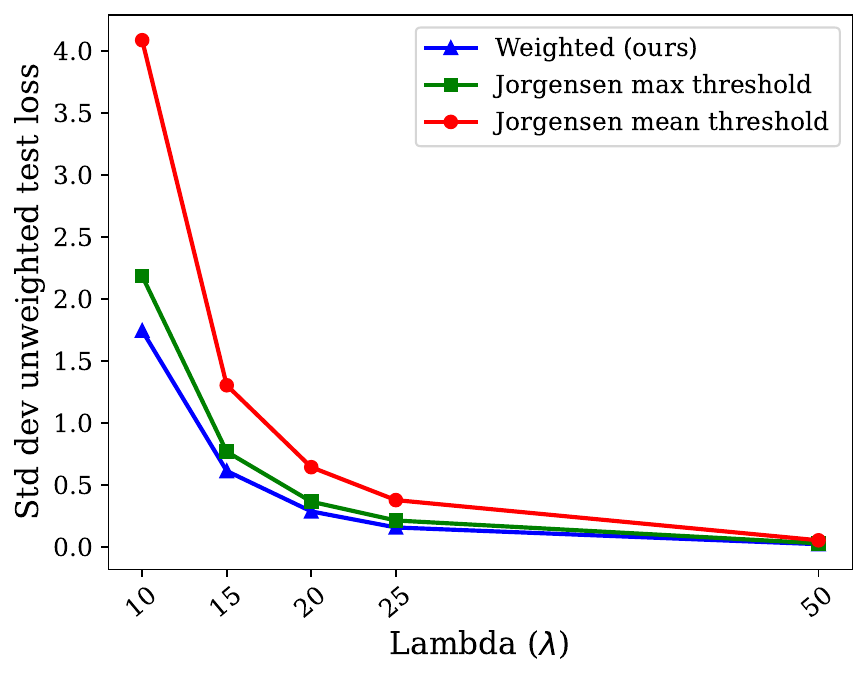}
    }
    \hspace{15mm}
    \subfloat[Regularized Loss ($d=30, n=100$)]{
        \includegraphics[width=0.37\textwidth]{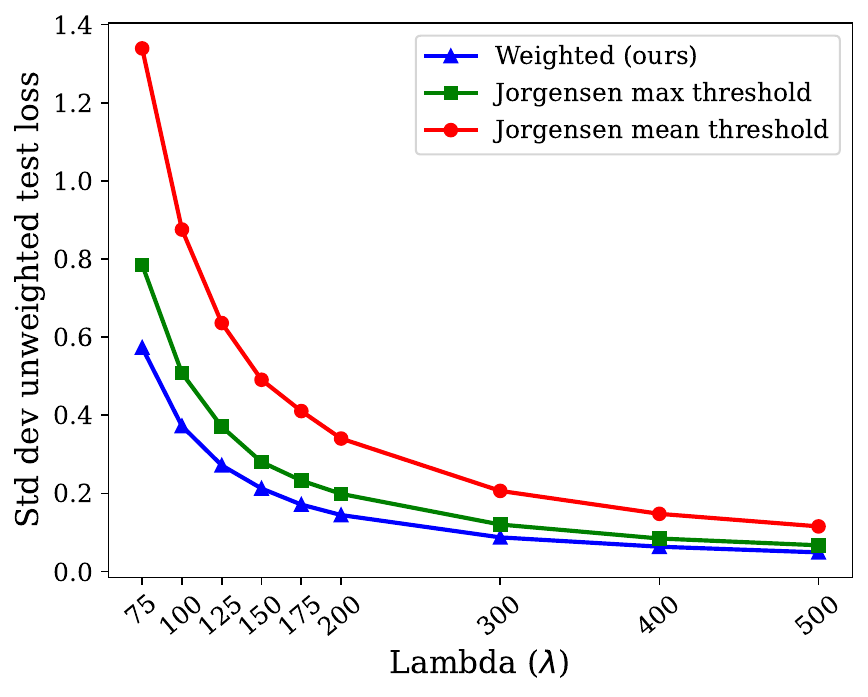}
    }
    \caption{Lower standard deviation compared to~\cite{jorgensen} on the synthetic dataset, while varying the regularization parameter $\lambda$, keeping $\varepsilon_c = 0.01, \varepsilon_m = 0.2, \varepsilon_l=1.0, f_c = 0.34, f_m = 0.43, f_l = 0.23$. 
}\label{fig:syn_jorgensen_lambda_std}
\end{figure*}

\begin{figure*}[!h]
\centering
    \subfloat[Unregularized Loss]{
        \centering
        \includegraphics[width=0.37\textwidth]{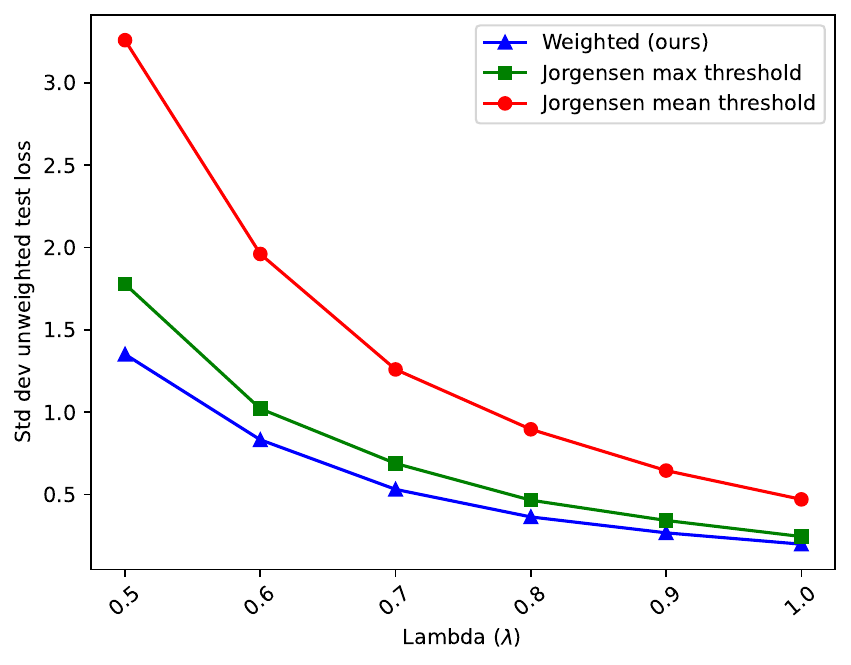}
    }
    \hspace{15mm}
    \subfloat[Regularized Loss]{
        \includegraphics[width=0.37\textwidth]{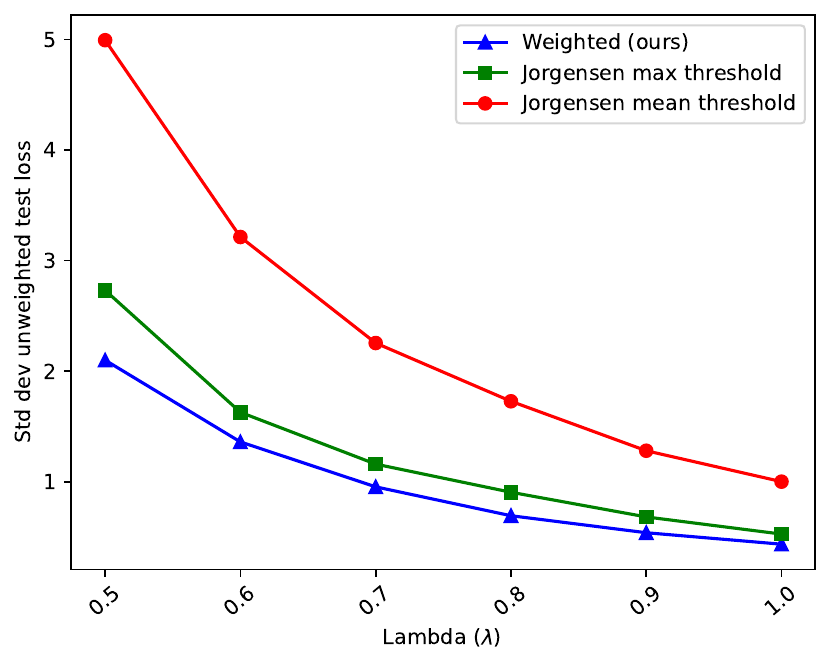}
    }
    \caption{Lower standard deviation compared to~\cite{jorgensen} on the Medical cost dataset, while varying the regularization parameter $\lambda$, keeping $\varepsilon_c = 0.01, \varepsilon_m = 0.2,\varepsilon_l=1.0, f_c = 0.34, f_m = 0.43, f_l = 0.23$.  
}\label{fig:insurance_jorgensen_lambda_std}
\end{figure*}

\paragraph{Experimental setup.}
We compare our approach to that of~\citet{jorgensen}, who proposed the first algorithm for personalized differential privacy in the central privacy model. Their approach is the following: first, they pick a threshold $t$. Then, they sample each data point $i$ in $D$ with probability $\frac{\exp{\varepsilon_i}-1}{\exp{t}-1}$ if $\varepsilon_i < t$, and probability $1$ otherwise. Lastly, they run a standard non-personalized algorithm that is $t$-differentially private. We fix the non-personalized algorithm to follow the output perturbation technique described in~\citet{chaudhuri2008privacy,chaudhuri2011differentially} to keep comparisons apple-to-apple and only compare the impact of our re-weighting versus the thresholding then sub-sampling approach of~\citet{jorgensen}. We implement the non-personalized estimator as per the baseline described in Section~\ref{sec:experiments_nonpersonalized}. For the choice of $t$, we try out both $t = \max_i \varepsilon_i$ (that we refer to as ``Jorgensen max'' or ``max threshold'') and $t = \frac{1}{n} \sum_i \varepsilon_i$ (that refer to as ``Jorgensen mean'' or ``mean threshold''). Both these choices of $t$ were proposed by~\citet{jorgensen} itself.

In the rest of this section, we present our experimental results. We note that our framework consistently leads to improved performance over \cite{jorgensen}. This is seen in two ways: first, our unregularized and regularized losses are consistently lower than those of Jorgensen for the vast majority of choices of instance parameters and of regularization parameter $\lambda$. This show that our re-weighting method is consistently more accurate compared to the subsampling method of~\citet{jorgensen} when it comes to accuracy. Beyond this, we also note that our results are more consistent: the standard deviation of the loss of our technique is also lower than that of \citet{jorgensen}. I.e., our results are more consistent across different runs of the algorithms and different realizations of the noise. 

Intuitively, one of the advantages of our technique over that of \citet{jorgensen} is that by re-weighting instead of sampling, we do not discard any of our dataset; we believe this is one potential source of improvement. Another, more subtle reason, may be that our framework adds all noise $Z$ centrally, at the end of the computation, while \citet{jorgensen} adds noise both locally (when sub-sampling data points) and centrally. It is well understood that adding noise centrally (i.e., within the computation) in differential privacy leads to better privacy-accuracy trade-offs than adding noise locally (i.e., at the level of each data point), which may be another reason for our improvements.

\paragraph{Improvements in loss.} Table~\ref{tab:acc_syn_4_2_1_plevel_34_43_23} provides a snapshot of our performance versus that of the baseline of~\citet{jorgensen}, for our default choice of privacy parameters. We perform consistently better than \citet{jorgensen} across both our metrics (unregularized and regularized loss), with improvements in loss of up to roughly $20$ percent under the max threshold. \citet{jorgensen}'s results when using the mean threshold instead are consistently worse. 
Table \ref{tab:insurance_4_2_1_plevel_34_43_23} shows that similar insights hold on our real dataset.
We provide additional experiments where we change the parameters of the problem such as $f_c,~\varepsilon_c,~\varepsilon_m,~n$ and when we consider our real dataset in Appendix~\ref{app:experiments_jorgensen}. 

\paragraph{Improvements in variability of the results.}

Figure~\ref{fig:syn_jorgensen_lambda_std} and ~\ref{fig:insurance_jorgensen_lambda_std} show the standard deviation of our loss compared to that of~\citet{jorgensen}, estimated across $10,000$ runs for each technique on the synthetic and real datasets respectively. The figure clearly highlights how our method exhibits less variability, leading to more consistent results across different runs and realizations of the noise.

 We also provide additional experiments where we change the parameters of the problem such as $f_c,~\varepsilon_c,~\varepsilon_m,~n$ and when we consider our real dataset in Appendix~\ref{app:experiments_jorgensen}, and note there that our insights still hold across these experiments.

\section{Conclusion and Future Work}
We proposed a new algorithm, Personalized-DP Output Perturbation (PDP-OP), which allows to train Ridge regression models with individual per-data point privacy requirements. 
We formally prove \ours's~personalized privacy guarantees and provide rigorous and theoretical results for the accuracy guarantees of our framework. We are in fact the first to provide a theoretical accuracy guarantee for personalized-DP methods in machine learning, to the best of our knowledge.
Our empirical evaluation on synthetic and real datasets highlights that PDP-OP significantly outperforms non-personalized DP, highlighting the need for personalized DP to vastly improve privacy-accuracy trade-offs in private ML. We also show that we outperform previous techniques for personalized DP, showing the advantages of using re-weighting over sub-sampling techniques.

The current paper aims to provide initial algorithms for personalized DP tailored to the case of Ridge regression. We chose to use \emph{output perturbation} as a simple starting point to provide initial insights and algorithms into personalized DP, and the benefits of data re-weighting over data sub-sampling. We, however, believe that there is still some leeway to improve the privacy-accuracy trade-offs of linear regression with personalized DP. In future work, we will incorporate our re-weighting technique with more advanced techniques for private regression, such as \emph{objective perturbation}, \emph{summary statistic perturbation}, or \emph{private gradient descent} (see related work for more details and examples).

\bibliography{sample}
\bibliographystyle{plainnat}


\newpage
\appendix
\onecolumn


\section{Proof of Main Results}
\subsection{Full Proof of Theorems~\ref{thm:privacy_guarantee_boundedlabel} and~\ref{thm:privacy_guarantee_boundedtheta}}\label{app:proof_privacy}

\paragraph{Preliminaries: Bound on $\Vert \bar{\theta} \Vert$ for Algorithm~\ref{Alg:output_boundedlabel} \label{Prelim-boundthetabar}} First, we show that the norm of $\bar{\theta}$ is bounded for Algorithm~\ref{Alg:output_boundedlabel}. This bound will be useful in bounding the gradient difference across two neighbouring databases in the proof of the privacy guarantee of Algorithm~\ref{Alg:output_boundedlabel}.

\begin{lemma}
\label{lem:weightedLSoptnorm}
    For any $\sum_{i=1}^n w_i = 1, w_i \geq 0$, the unconstrained minimizer of \eqref{eq:weightedLS}, $\bar{\theta}$, satisfies: 
    \[
    \norm{\bar{\theta}}_2 \leq \frac{1}{\sqrt{\lambda}}.
    \]
\end{lemma}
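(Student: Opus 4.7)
The plan is to compare the value of the regularized weighted loss at the minimizer $\bar\theta$ with its value at the candidate point $\theta = 0$. Since $\bar\theta$ is the unconstrained minimizer, any other choice (in particular $\theta = 0$) can only yield a larger objective value, so we will get an upper bound on the loss at $\bar\theta$. Because $\lambda \|\bar\theta\|_2^2$ is one of the two non-negative terms in that loss, this upper bound will directly translate into a bound on $\|\bar\theta\|_2$.

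More precisely, first I would plug $\theta = 0$ into \eqref{eq:weightedLS} and observe that the regularization term vanishes and the residual term becomes $\sum_{i=1}^n w_i y_i^2$. Using the setup assumption $y_i \in [-1,1]$ together with $w_i \geq 0$ and $\sum_i w_i = 1$, this quantity is at most $\sum_i w_i = 1$. Thus the optimal value of the weighted regularized loss is at most $1$.

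Next, I would lower bound the loss at $\bar\theta$ itself. Since both the weighted squared residual $\sum_i w_i (y_i - \bar\theta^\top x_i)^2$ and the regularizer $\lambda \|\bar\theta\|_2^2$ are non-negative (again using $w_i \geq 0$), we have $L(\bar\theta,\lambda) \geq \lambda \|\bar\theta\|_2^2$. Combining this with the optimality inequality $L(\bar\theta,\lambda) \leq L(0,\lambda) \leq 1$ yields $\lambda \|\bar\theta\|_2^2 \leq 1$, so $\|\bar\theta\|_2 \leq 1/\sqrt{\lambda}$.

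There is essentially no hard step here: the argument is a one-line optimality comparison that exploits the boundedness of the labels and the normalization $\sum_i w_i = 1$. The only thing to be careful about is making sure the boundedness hypothesis on $y_i$ is used explicitly (otherwise the bound on $L(0,\lambda)$ fails) and that non-negativity of the $w_i$ is invoked to drop the residual term when lower-bounding $L(\bar\theta,\lambda)$.
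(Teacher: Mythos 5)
Your proposal is correct and follows essentially the same argument as the paper: compare the objective at $\bar\theta$ with its value at $\theta = 0$, bound $L_w(0) = \sum_{i} w_i y_i^2 \leq 1$ using $y_i \in [-1,1]$ and $\sum_i w_i = 1$, and drop the non-negative residual term to conclude $\lambda \norm{\bar\theta}_2^2 \leq 1$. No gaps.
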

\begin{proof}
Let the weighted loss be defined as:
\[
L_w(\theta) \triangleq \sum_{i=1}^n w_i \left(y_i - \theta^\top x_i \right)^2 
+ \lambda \| \theta \|^2_2.
\]
For any $\theta \in \R^d$, $L_w(\bar{\theta}) \leq L_w(\theta)$. Therefore,
\[
\lambda \norm{\bar\theta}^2 \leq L_w(\bar{\theta}) \leq L_w(0) = \sum_{i=1}^n w_i y_i^2 \leq 1.
\]
\end{proof}

\begin{lemma}
\label{lem:weightedLSoptnorm}
    For any $\sum_{i=1}^n w_i = 1, w_i \geq 0$, the unconstrained minimizer of \eqref{eq:weightedLS}, $\bar\theta$, satisfies $\norm{\bar\theta}_2 \leq \frac{\sqrt{d}}{\lambda}$.
\end{lemma}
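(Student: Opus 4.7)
The plan is to use the first-order optimality condition for $\bar\theta$ to get an explicit formula for it, then bound its norm directly. Since the weighted Ridge loss $L_w(\theta) = \sum_i w_i (y_i - \theta^\top x_i)^2 + \lambda \|\theta\|_2^2$ is strongly convex, the unique unconstrained minimizer satisfies $\nabla L_w(\bar\theta) = 0$, which rearranges to
\[
\left(\lambda I + \sum_{i=1}^n w_i x_i x_i^\top \right) \bar\theta \;=\; \sum_{i=1}^n w_i y_i x_i.
\]
Since $\sum_i w_i x_i x_i^\top$ is positive semi-definite, the matrix on the left is invertible with smallest eigenvalue at least $\lambda$, so its inverse has operator norm at most $1/\lambda$.

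Next, I would bound the right-hand side in $\ell_2$: by the triangle inequality and the assumptions $|y_i| \le 1$ and $x_i \in [0,1]^d$ (which gives $\|x_i\|_2 \le \sqrt{d}$),
\[
\left\| \sum_{i=1}^n w_i y_i x_i \right\|_2 \;\le\; \sum_{i=1}^n w_i |y_i| \|x_i\|_2 \;\le\; \sqrt{d} \sum_{i=1}^n w_i \;=\; \sqrt{d}.
\]
Combining the two bounds via submultiplicativity of the operator norm gives $\|\bar\theta\|_2 \le \sqrt{d}/\lambda$, which is the claim.

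I do not expect any serious obstacle here; the proof is a routine calculation once one writes the normal equations. The only subtlety is choosing the right bound on $\|x_i\|_2$ (namely $\sqrt{d}$, coming from $x_i \in [0,1]^d$) — this is what produces the $\sqrt{d}$ factor in the numerator and distinguishes this bound from the earlier $1/\sqrt{\lambda}$ bound, which was obtained by comparing $L_w(\bar\theta)$ to $L_w(0)$ and using only $|y_i| \le 1$. Together with the previous lemma, these two bounds give the combined bound $\|\bar\theta\|_2 \le B(\lambda) = \min(1/\sqrt{\lambda}, \sqrt{d}/\lambda)$ used in Theorem~\ref{thm:privacy_guarantee_boundedlabel}.
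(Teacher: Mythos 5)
Your proof is correct and follows essentially the same route as the paper's: derive the normal equations $\left(\lambda I + \sum_{i=1}^n w_i x_i x_i^\top\right)\bar\theta = \sum_{i=1}^n w_i y_i x_i$, bound the operator norm of the inverse by $1/\lambda$ via the smallest eigenvalue, and bound $\bigl\| \sum_{i=1}^n w_i y_i x_i \bigr\|_2 \le \sqrt{d}$ using $|y_i|\le 1$, $\|x_i\|_2 \le \sqrt{d}$, and $\sum_i w_i = 1$. Your closing remark about how this combines with the $1/\sqrt{\lambda}$ bound to give $B(\lambda)$ also matches the paper's use of the lemma.
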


\begin{proof}
    We first find the closed form for $\bar{\theta}$. $\bar{\theta}$ is simply the unique solution to the unconstrained minimization problem 
\[
\argmin_{\theta \in \R^d}\sum_{i=1}^n w_i \left(y_i - \theta^\top x_i \right)^2 + \lambda \| \theta \|^2_2.
\]
Therefore, 
\[
\bar{\theta} = \argmin_{\theta \in \R^d}\sum_{i=1}^n w_i \left(y_i - \theta^\top x_i \right)^2 + \lambda \| \theta \|^2_2.
\]
Taking the first order-condition, we note that we must have 
\[
 -2 \sum_{i=1}^n w_i x_i \left(y_i - x_i^\top \bar{\theta}\right) + 2 \lambda \theta = 0.
\]
This can be rewritten as 
\[
2 \left(\sum_{i=1}^n w_i x_i x_i^\top + \lambda I \right) \bar{\theta} = 2 \sum_{i=1}^n w_i x_i y_i.
\]
Because $\lambda > 0$ and $\sum_{i=1}^n w_i x_i x_i^\top$ is positive semi-definite, $\sum_{i=1}^n w_i x_i x_i^\top + \lambda I$ is invertible, and we obtain the following closed-form expression for $\bar{\theta}$:
\[
\bar{\theta} = \left(\sum_{i=1}^n w_i x_i x_i^\top + \lambda I \right)^{-1} \sum_{i=1}^n w_i x_i y_i.
\] 

Taking the $\ell_2$-norm of both sides, we obtain, letting $\lambda_{min}(M),~\lambda_{max} (M)$ respectively denote the lowest and highest eigenvalues of any given matrix $M$:
\begin{align}
    \norm{\bar{\theta}}_2 &\leq 
    \lambda_{max} \left(\left(\lambda I + \sum_{i=1}^n w_i x_i x_i^\top \right)^{-1}\right) \norm{\sum_{i=1}^n w_i y_i x_i}_2 \label{eq_operator norm} \\
    &\leq \lambda_{max} \left(\left(\lambda I + \sum_{i=1}^n w_i x_i x_i^\top \right)^{-1}\right) \cdot \sqrt{d} \label{eq_labelvecsumnorm is 1}\\
    &\leq \frac{1}{\lambda_{min} \left(\lambda I + \sum_{i=1}^n w_i x_i x_i^\top\right)} \cdot \sqrt{d}\\
    &\leq \frac{\sqrt{d}}{\lambda},
\end{align}
where the second inequality follows from $\norm{x_i}_2 \leq \sqrt{d}$
\end{proof}


Under additional Assumption~\ref{as:boundedtheta}, note that $\norm{\bar\theta} \leq \norm{\bar{\theta}_0}$: indeed, if not, $\bar{\theta}_0$ is a better solution to the Ridge problem with parameter $\lambda$, noting that it has both i) lower---and in fact optimal---unregularized loss and ii) lower $\ell_2$-penalization. Then, we have $\Vert \bar\theta \Vert \leq B$. In the rest of the proof, we let $B(\lambda) \triangleq \min \left(\frac{1}{\sqrt{\lambda}},\frac{\sqrt{d}}{\lambda}\right)$ absent assumptions, and  $B(\lambda) \triangleq B$ under Assumption~\ref{as:boundedtheta}. Now, in both cases, we have $\norm{\bar\theta} \leq B(\lambda)$.




\paragraph{Sensitivity analysis of $\bar{\theta}$ through strong convexity and bounded gradients} We start with the following lemma that will help us bound the gradient different over two minimization problem: one over loss function $G(\theta)$ and one over modified loss function $G(\theta) +g(\theta)$, corresponding to the losses on neighboring databases $X'$ and $X$. The lemma is a slight modification from Lemma 7 of~\cite{chaudhuri2011differentially} that works with the gradient of $g(\theta)$ evaluated at a specific well-chosen point, instead of the maximum norm for the gradient of $g(\theta)$ over $\mathbb{R}^d$

  \begin{lemma}
 \label{Lemma:SC-outputpertubation}
     Let $G(\theta)$ and $g(\theta)$ be two vector valued, continuous, differentiable functions with $G(\theta)$ and $G(\theta) + g(\theta)$ both $\gamma$-strongly convex. Let $\theta_1 = \argmin_\theta G(\theta) + g(\theta)$, $\theta_2 = \argmin_\theta G(\theta)$, then
     \[ 
     \norm{\theta_1 - \theta_2}_2 \leq \frac{1}{\gamma} \norm{\nabla g(\theta_1)}_2.
     \]
 \end{lemma}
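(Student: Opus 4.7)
The plan is to exploit first-order optimality conditions together with the defining inequality of strong convexity, which together force the two minimizers to be close whenever the perturbation $g$ has small gradient at $\theta_1$.

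First I would write down the first-order conditions. Since $G$ and $G+g$ are differentiable and strongly convex (hence their minimizers are unique interior critical points), we have $\nabla G(\theta_2) = 0$ and $\nabla G(\theta_1) + \nabla g(\theta_1) = 0$, so $\nabla G(\theta_1) = -\nabla g(\theta_1)$. Next I would invoke the standard co-coercivity-style consequence of $\gamma$-strong convexity of $G$, namely
\[
\bigl(\nabla G(\theta_1) - \nabla G(\theta_2)\bigr)^\top (\theta_1 - \theta_2) \geq \gamma \, \norm{\theta_1 - \theta_2}_2^2.
\]
Substituting the two first-order identities collapses the left-hand side into $-\nabla g(\theta_1)^\top (\theta_1 - \theta_2)$, yielding
\[
\gamma \, \norm{\theta_1 - \theta_2}_2^2 \leq -\nabla g(\theta_1)^\top (\theta_1 - \theta_2) \leq \norm{\nabla g(\theta_1)}_2 \cdot \norm{\theta_1 - \theta_2}_2,
\]
by Cauchy--Schwarz. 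Dividing through by $\norm{\theta_1 - \theta_2}_2$ (after treating the trivial $\theta_1 = \theta_2$ case separately, for which the inequality holds vacuously) gives the claimed bound.

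There is essentially no hard step here: the only subtlety is that we use the gradient of $g$ evaluated at the single point $\theta_1$ rather than a uniform bound on $\nabla g$ over $\mathbb{R}^d$, which is exactly what the first-order condition at $\theta_1$ delivers for free. This is precisely the refinement over \cite{chaudhuri2011differentially} that the authors flag, and it is the reason the strongly convex inequality must be applied in the specific direction above (taking the inner product with $\theta_1 - \theta_2$ rather than trying to bound $\nabla g$ globally). The only place one must be careful is in verifying that strong convexity of $G$ alone (not of $G + g$) suffices for the middle inequality, which it does since the co-coercivity step uses only $G$.
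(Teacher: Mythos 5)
Your proposal is correct and follows essentially the same route as the paper: first-order conditions at both minimizers, the strong-monotonicity consequence of $\gamma$-strong convexity of $G$ applied in the direction $\theta_1-\theta_2$, and Cauchy--Schwarz. The one refinement you flag---evaluating $\nabla g$ only at $\theta_1$ rather than bounding it globally---is exactly the point the paper makes, so there is nothing to add.
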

 
\begin{proof}
Note that $\theta_1$ and $\theta_2$ must satisfy the first order conditions, i.e. 
\begin{align}\label{eq:foc}
\nabla G(\theta_2) = 0 = \nabla G(\theta_1) + \nabla g(\theta_1).
\end{align}
Further, by strong convexity of $G$ with parameter $\gamma$, we have 
\begin{align*}
\gamma \norm{\theta_1-\theta_2}^2 
&\leq \left(\nabla G(\theta_2) - \nabla G(\theta_1)\right)^\top (\theta_2 - \theta_1) 
\\&= \nabla g(\theta_1)^\top (\theta_2 - \theta_1)
\\&\leq \norm{\nabla g(\theta_1)} \cdot \norm{\theta_1-\theta_2},
\end{align*}
where line 2 follows from Equation~\eqref{eq:foc} and line 3 follows from Cauchy-Schwarz.
\end{proof}

This allows us to bound the sensitivity with respect to agent $i$ of the non-private Ridge regression minimizer $\bar{\theta}$ as a function of $w_i$. Note that we define the $\ell_2$ sensitivity of $\bar{\theta}$ with respect to agent $i$ as
\[
\Delta_i \bar{\theta} = \max_{D,D'~\text{i-neighboring}} \Vert \bar\theta(D) - \bar{\theta}(D') \Vert_2,
\]
where $\bar{\theta}(D) = \argmin_{\theta} \sum_{i=1}^n w_i (y_i - \theta^\top x_i)^2 + \lambda \Vert \theta \Vert_2^2$. The sensitivity bound is then given by:

 \begin{theorem}
     Let $\bar{\theta} = \argmin_{\theta \in \R^d}\sum_{i=1}^n w_i \left(y_i - \theta^\top x_i \right)^2 + \lambda \| \theta \|^2_2$. $\bar{\theta}$ has $\ell_2$-sensitivity $(\Delta \theta)_i$ with respect to agent $i$ that satisfies $\Delta_i \bar{\theta}\leq \frac{2 \sqrt{d} w_i}{\lambda} \left(\sqrt{d} B(\lambda) + 1 \right)$.
 \end{theorem}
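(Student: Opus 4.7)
The idea is to apply Lemma~\ref{Lemma:SC-outputpertubation} with two copies of the weighted Ridge loss corresponding to $i$-neighboring datasets. Fix $D$ and $D'$ differing only at data point $i$; write the replaced entry as $(x_i,y_i)$ in $D$ and $(x_i',y_i')$ in $D'$. Define
\[
G(\theta) = \sum_{j=1}^n w_j (y_j - \theta^\top x_j)^2 + \lambda \|\theta\|_2^2, \qquad g(\theta) = w_i (y_i' - \theta^\top x_i')^2 - w_i (y_i - \theta^\top x_i)^2,
\]
so that $G$ is the weighted Ridge loss on $D$ and $G+g$ is the weighted Ridge loss on $D'$. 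Both functions are $2\lambda$-strongly convex (the quadratic data-fitting terms are convex, and the regularizer $\lambda\|\theta\|_2^2$ alone already contributes $\gamma = 2\lambda$). Setting $\theta_2 = \bar\theta(D)$ and $\theta_1 = \bar\theta(D')$, the lemma gives
\[
\|\bar\theta(D) - \bar\theta(D')\|_2 \;\leq\; \frac{1}{2\lambda}\,\|\nabla g(\bar\theta(D'))\|_2.
\]

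It then remains to bound $\|\nabla g(\bar\theta(D'))\|_2$. A direct computation yields
\[
\nabla g(\theta) = -2 w_i\, x_i'\,(y_i' - \theta^\top x_i') + 2 w_i\, x_i\,(y_i - \theta^\top x_i).
\]
I would apply the triangle inequality and then bound each factor using the problem's standing assumptions together with the preliminary norm bound $\|\bar\theta(D')\|_2 \leq B(\lambda)$ (established in the Preliminaries section, and also available under Assumption~\ref{as:boundedtheta}): one has $\|x_i\|_2,\|x_i'\|_2 \leq \sqrt{d}$, $|y_i|,|y_i'|\leq 1$, and at $\theta = \bar\theta(D')$ the residual obeys $|y - \theta^\top x| \leq 1 + \sqrt{d}\,B(\lambda)$ by Cauchy--Schwarz. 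Each of the two terms in $\nabla g$ is therefore bounded in $\ell_2$-norm by $2 w_i \sqrt{d}\,(1 + \sqrt{d}\,B(\lambda))$, so summing gives $\|\nabla g(\bar\theta(D'))\|_2 \leq 4 w_i \sqrt{d}\,(1 + \sqrt{d}\,B(\lambda))$. Dividing by $2\lambda$ produces exactly the claimed bound $\Delta_i \bar\theta \leq \tfrac{2 w_i \sqrt{d}}{\lambda}(1 + \sqrt{d}\,B(\lambda))$.

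The only delicate step is insisting on evaluating $\nabla g$ at the specific point $\theta_1 = \bar\theta(D')$ rather than taking a global supremum over $\mathbb{R}^d$: the residual $y - \theta^\top x$ is unbounded in $\theta$, so a naive Lipschitz-style bound on $g$ would fail. This is precisely why the preliminary norm bound on the weighted Ridge minimizer was established first, and why the custom version of the strong-convexity sensitivity lemma (Lemma~\ref{Lemma:SC-outputpertubation}) was stated with the gradient evaluated at $\theta_1$ rather than with a global supremum as in the original formulation of~\cite{chaudhuri2011differentially}. Everything else is routine bookkeeping.
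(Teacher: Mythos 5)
Your proposal is correct and follows essentially the same route as the paper: the same decomposition into $G$ and $g$ (up to which of the two neighboring datasets plays which role), the same $2\lambda$-strong-convexity argument via the modified sensitivity lemma with the gradient evaluated at the relevant minimizer, and the same use of the preliminary bound $\|\bar\theta\|_2 \le B(\lambda)$ together with $\|x\|_2\le\sqrt{d}$ and $|y|\le 1$ to control $\|\nabla g\|_2$, yielding the identical constant. The arithmetic checks out, and your remark about why the gradient must be evaluated at the specific minimizer rather than bounded over all of $\mathbb{R}^d$ is exactly the point the paper's version of the lemma is designed to address.
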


 \begin{proof}
Consider two neighbouring databases $X = ((x_1,y_1),\ldots, (x_n,y_n))$ and $X = ((x_1,y_1),\ldots, (x_i',y_i'), \ldots, (x_n,y_n))$ that only differ in the data of agent $i$.
Let $G(\theta) + g(\theta) = \sum_{j = 1}^n w_j \left(y_j - \theta^{\top} x_j \right)^2 + \lambda \norm{\theta}_2^2$, and $g(\theta) = w_i \left(y_i - \theta^{\top} x_i \right)^2- w_i \left(y_i' - \theta^{\top} x_i' \right)^2$.

First, we note that 
\[
G(\theta) + g(\theta) = \sum_{j} w_j \left(y_j - \theta^{\top} x_j \right)^2 + \lambda \norm{\theta}_2^2
\]
is the Ridge loss on database $D$, while 
\begin{align*}
G(\theta)  
&= \sum_{j} w_j \left(y_j - \theta^{\top} x_j \right)^2 + \lambda \norm{\theta}_2^2 - g(\theta) 
\\&= \sum_{j} w_j \left(y_j - \theta^{\top} x_j \right)^2 - w_i \left(y_i - \theta^{\top} x_i \right)^2 +  w_i \left(y_i' - \theta^{\top} x_i' \right)^2 + \lambda \norm{\theta}_2^2 
\\&=\sum_{j \neq i} w_j \left(y_j - \theta^{\top} x_j \right)^2 +  w_i \left(y_i' - \theta^{\top} x_i' \right)^2 + \lambda \norm{\theta}_2^2.
\end{align*}
is the loss on database $D'$. Further, both objectives are $2 \lambda$-strongly convex due to $\ell_2$-norm penalty term. Now, we have that $\nabla g(\theta) = 2w_i \left((\theta^\top x_i - y_i) x_i - (\theta^\top x_i' - y_i') x_i'\right) = 2w_i \left(\theta^\top x_i x_i - \theta^\top x_i' x_i' - y_i x_i + y_i' x_i'\right)$. Since $|y| \leq 1$ per our model, we have
\begin{align}
 \norm{\nabla g(\theta)}_2 \leq 2w_i (|\theta^\top x_i'| \Vert x_i' \Vert + |\theta^\top x_i| \Vert x_i \Vert + 2 \sqrt{d}). \nonumber
\end{align}
By the preliminaries section of this proof, we have that $\bar\theta(D)$, the minimizer of $G(\theta) + g(\theta)$, has norm at most $B(\lambda)$. On top of this, per our model, we have $\norm{x} \leq \sqrt{d}$ (since $x \in [0,1]^d)$. Therefore, $|\bar\theta(D)^\top x_i|, |\bar\theta(D')^\top x_i'| \leq \sqrt{d} B(\lambda)$ by Cauchy–Schwarz, and we have:

\begin{align}
\norm{\nabla g(\bar\theta)}_2
  \leq 2 w_i\left(2 d B(\lambda) + 2 \sqrt{d}\right). \label{eq:equivalentdomain}
\end{align}

 From lemma~\ref{Lemma:SC-outputpertubation}, we have that

\[
\Delta_i \bar{\theta}\leq \frac{1}{2 \lambda} \norm{\nabla g(\bar\theta)}_2.
\]
which becomes $\Delta_i \bar{\theta}\leq \frac{2 w_i \sqrt{d}}{\lambda} \left( \sqrt{d} B(\lambda) + 1 \right)$.
 \end{proof}

\paragraph{Implications for privacy} 

\begin{lemma}
    Algorithms~\ref{Alg:output_boundedlabel} is $\varepsilon_i$-differentially private for agent $i$ for all $i$ with 
    \[
    \varepsilon_i = \frac{2 w_i \eta \sqrt{d}}{\lambda} (\sqrt{d} B(\lambda)+1).
    \]
\end{lemma}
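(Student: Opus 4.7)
The plan is to combine the $\ell_2$-sensitivity bound on $\bar{\theta}$ with a standard density-ratio computation for the multivariate Laplace-style noise $Z$. Specifically, for any fixed $i$ and any pair of $i$-neighboring datasets $D, D'$, I will show that the ratio of output densities at any point $\hat{\theta}$ is bounded by $\exp(\eta \cdot \Delta_i \bar{\theta})$, and then plug in the sensitivity bound from the preceding theorem.

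First, I would observe that $\hat{\theta} = \bar{\theta}(D) + Z$, so the density of the mechanism's output at a point $t \in \mathbb{R}^d$ under input $D$ is $\nu(t - \bar{\theta}(D)) \propto \exp\!\bigl(-\eta \, \|t - \bar{\theta}(D)\|_2\bigr)$. For $i$-neighboring $D, D'$, taking the ratio of densities yields
\begin{equation*}
\frac{\Pr[M(D) = t]}{\Pr[M(D') = t]} = \exp\!\Bigl(\eta \bigl(\|t - \bar{\theta}(D')\|_2 - \|t - \bar{\theta}(D)\|_2\bigr)\Bigr).
\end{equation*}
By the reverse triangle inequality, $\bigl|\|t - \bar{\theta}(D')\|_2 - \|t - \bar{\theta}(D)\|_2\bigr| \leq \|\bar{\theta}(D) - \bar{\theta}(D')\|_2 \leq \Delta_i \bar{\theta}$, so the ratio is at most $\exp(\eta \, \Delta_i \bar{\theta})$. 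Integrating this pointwise bound over any measurable output set $O$ (or equivalently invoking the standard mechanism-density lemma) gives $\Pr[M(D) \in O] \leq \exp(\eta \, \Delta_i \bar{\theta}) \Pr[M(D') \in O]$.

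Finally, I would substitute the sensitivity bound established just above, namely $\Delta_i \bar{\theta} \leq \frac{2 w_i \sqrt{d}}{\lambda}\bigl(\sqrt{d}\, B(\lambda) + 1\bigr)$, to conclude that the mechanism satisfies Definition~\ref{def:personalized_dp} for data point $i$ with parameter $\varepsilon_i = \frac{2 w_i \eta \sqrt{d}}{\lambda}(\sqrt{d}\, B(\lambda) + 1)$, as required. This argument holds simultaneously for every $i \in [n]$ since the sensitivity bound was proven for an arbitrary index.

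\textbf{Where the work lies.} The density-ratio calculation itself is entirely routine; the genuine content was already absorbed into the sensitivity analysis via Lemma~\ref{Lemma:SC-outputpertubation} and the use of strong convexity of the weighted Ridge objective, combined with the norm bound $\|\bar{\theta}\| \leq B(\lambda)$. Consequently, I do not anticipate any obstacle in this final lemma beyond carefully tracking the constant factor $2\sqrt{d}(\sqrt{d}B(\lambda)+1)/\lambda$ so that it matches the statement, and being precise that the triangle-inequality step gives a one-sided bound valid for every $t$, which is exactly what is needed for the $\varepsilon_i$-DP definition.
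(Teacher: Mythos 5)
Your proposal is correct and follows essentially the same route as the paper: a pointwise density-ratio bound for the $\exp(-\eta\|\cdot\|_2)$ noise via the (reverse) triangle inequality, followed by substitution of the $\ell_2$-sensitivity bound $\Delta_i\bar{\theta} \leq \frac{2 w_i \sqrt{d}}{\lambda}(\sqrt{d}B(\lambda)+1)$. The only cosmetic difference is that you make the integration over measurable output sets explicit, which the paper leaves implicit.
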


\begin{proof}
Let $D, D'$ be two datasets differing only in agent $i$'s data. Let us call out mechanism $M$. Let $L(\theta,D)$ be the Ridge regression loss on database $D$ evaluated at $\theta$, and let $\bar{\theta}(D) = \argmin_\theta L(\theta, D)$. For any given outcome $o$, we have that 
\[
\frac{P\left[M(D) = o\right]}{P\left[M(D') = o\right]} 
= \frac{P\left[\bar{\theta}(D) + Z = o\right]}{P\left[\bar{\theta}(D') + Z = o\right]} 
= \frac{P\left[Z = o - \bar{\theta}(D)\right]}{P\left[Z = o - \bar{\theta}(D')  \right]}.
\]
Noting that the probability density function is proportional to $f(z) \propto \exp \left(-\eta \Vert z \Vert_2\right)$, this can be written as 
\begin{align*}
\frac{P\left[M(D) = o\right]}{P\left[M(D') = o\right]} 
&= \exp \left(-\eta \norm{o -\bar{\theta}(D)} + \eta \norm{o -  \bar{\theta}(D')}_2 \right)
\\&\leq \exp \left(\eta \norm{\bar{\theta}(D') -\bar{\theta}(D)}_2\right)
\\&\leq \exp \left(\eta \cdot \Delta_i \bar{\theta} \right).
\end{align*}
where the second-to-last step comes from the triangle inequality and the last step comes from the definition of $\ell_2$-sensitivity with respect to agent $i$. 
\end{proof}

We can now conclude the proof. Picking $w_i = \frac{\varepsilon_i}{\sum_{j=1}^n \varepsilon_j}$ and $\eta = \frac{\lambda}{2 \sqrt{d}(\sqrt{d}B(\lambda) + 1)}\sum_{j=1}^n \varepsilon_j$, we get the result. Indeed:
\begin{itemize}
\item The weights are positive and immediately satisfy, $\sum_i w_i = 1$, as required per our algorithm.
\item The level of privacy obtained by agent $i$ is 
\[
\frac{2w_i \eta}{\lambda} \sqrt{d} (\sqrt{d} B(\lambda)+1) = \frac{2}{\lambda} \frac{\varepsilon_i}{\sum_{j=1}^n \varepsilon_j} \cdot \frac{\lambda}{2 \sqrt{d} (\sqrt{d} B(\lambda) + 1)} \left(\sum_{j=1}^n \varepsilon_j\right) \cdot \sqrt{d} (\sqrt{d} B(\lambda)+1) = \varepsilon_i.
\]
\end{itemize}

\subsection{Proof of Theorem~\ref{thm:accuracy}}\label{app:proof_accuracy}
We start by deriving a closed-form expression for $\bar{\theta}$. Note that for Algorithm~\ref{Alg:output_boundedlabel}, $\bar{\theta}$ is simply the unique solution to the unconstrained minimization problem 
\[
\argmin_{\theta \in \R^d}\sum_{i=1}^n w_i \left(y_i - \theta^\top x_i \right)^2 + \lambda \| \theta \|^2_2.
\]
Therefore, 
\[
\bar{\theta} = \argmin_{\theta \in \R^d}\sum_{i=1}^n w_i \left(y_i - \theta^\top x_i \right)^2 + \lambda \| \theta \|^2_2.
\]
Taking the first order-condition, we note that we must have 
\[
 -2 \sum_{i=1}^n w_i x_i \left(y_i - x_i^\top \bar{\theta}\right) + 2 \lambda \theta = 0.
\]
This can be rewritten as 
\[
2 \left(\sum_{i=1}^n w_i x_i x_i^\top + \lambda I \right) \bar{\theta} = 2 \sum_{i=1}^n w_i x_i y_i.
\]
Because $\lambda > 0$ and $\sum_{i=1}^n w_i x_i x_i^\top$ is positive semi-definite, $\sum_{i=1}^n w_i x_i x_i^\top + \lambda I$ is invertible, and we obtain the following closed-form expression for $\bar{\theta}$:
\[
\bar{\theta} = \left(\sum_{i=1}^n w_i x_i x_i^\top + \lambda I \right)^{-1} \sum_{i=1}^n w_i x_i y_i.
\]
We now rewrite this closed-form expression as a function of $\theta^*$, leveraging the fact that $y_i = x_i^\top \theta^* + Z_i$ for $Z_i \sim N(0,\sigma^2)$ for all observations $i \in [n]$. This yields:
\begin{align*}
\bar{\theta} 
&= \left(\lambda I + \sum_{i=1}^n w_i x_i x_i^\top \right)^{-1} \sum_{i=1}^n w_i x_i y_i\\
\\&= \left(\lambda I + \sum_{i=1}^n w_i x_i x_i^\top \right)^{-1} \sum_{i=1}^n w_i x_i (x_i^\top \theta^* + Z_i)
\\& = \left(\lambda I + \sum_{i=1}^n w_i x_i x_i^\top \right)^{-1} \left(\sum_{i=1}^n w_i x_i x_i^\top \theta^* + \sum_{i=1}^n w_i x_i Z_i\right)
\\&= \left(\lambda I + \sum_{i=1}^n w_i x_i x_i^\top \right)^{-1} \left( \left(\lambda I + \sum_{i=1}^n w_i x_i x_i^\top\right) \theta^* + \sum_{i=1}^n w_i x_i Z_i - \lambda \theta^*\right)
\\& = \theta^* + \left(\lambda I + \sum_{i=1}^n w_i x_i x_i^\top \right)^{-1} \left(\sum_{i=1}^n w_i x_i Z_i - \lambda \theta^*\right).
\end{align*}

In turn, we have that the distance between $\theta^*$ and $\hat{\theta}$ satisfies, where $Z$ is the random variable added for privacy as per Algorithms~\ref{Alg:output_boundedlabel}:
\begin{align*}
\left\Vert \hat{\theta} - \theta^* \right\Vert
&= \left\Vert \left(\lambda I + \sum_{i=1}^n w_i x_i x_i^\top \right)^{-1} \left(\sum_{i=1}^n w_i x_i Z_i - \lambda \theta^*\right) + Z \right \Vert
\\&\leq \left\Vert \left(\lambda I + \sum_{i=1}^n w_i x_i x_i^\top \right)^{-1} \left(\sum_{i=1}^n w_i x_i Z_i - \lambda \theta^*\right) \right\Vert + \left\Vert Z \right \Vert
\\&\leq \frac{1}{\lambda + \lambda_{min} \left(\sum_{i=1}^n w_i x_i x_i^\top\right)} \left\Vert \left(\sum_{i=1}^n w_i x_i Z_i - \lambda \theta^*\right) \right\Vert + \left\Vert Z \right \Vert
\\&\leq \frac{1}{1 + \frac{\lambda_{min} \left(\sum_{i=1}^n w_i x_i x_i^\top\right)}{\lambda}} \cdot \norm{\theta^*}
+ \frac{1}{\lambda} \left\Vert \sum_{i=1}^n w_i x_i Z_i \right\Vert + \left\Vert Z \right \Vert. 
\end{align*}

To conclude the proof, we write concentration inequalities on $\left\Vert \sum_{i=1}^n w_i x_i Z_i \right\Vert$ and on $\left\Vert Z \right \Vert$: 
\begin{itemize} 
\item For $Z$: we know per Remark~\ref{rmk:sampling} that $\Vert Z \Vert_2$ follows a gamma distribution with parameters $d$ and $\eta$; therefore, it has mean $\frac{d}{\eta}$ and variance $\frac{d}{\eta^2}$. By Chebyshev, we obtain that with probability at most $\delta/2$, $\Vert Z \Vert \geq \frac{d}{\eta} + \sqrt{\frac{2}{\delta}} \cdot \frac{\sqrt{d}}{\eta}$
\item For $\left\Vert \sum_{i=1}^n w_i x_i Z_i \right\Vert$: first note that $S = \sum_{i=1}^n w_i x_i Z_i$ is a multivariate Gaussian random variable. It has mean $0$ since the $Z_i$'s have mean $0$, and covariance 
\[
\Sigma = \sum_{i=1}^n \sum_j w_i w_j Cov(Z_i, Z_j) x_i x_j^\top = \sigma^2 \sum_{i=1}^n w_i^2 x_i x_i^\top, 
\]
where the equality comes from $Cov(Z_i,Z_i) = \sigma^2$ and $Cov(Z_i,Z_j) = 0$ for $i \neq j$ by independence. Note that then since $S(k)$ has mean $0$ for all $k \in [d]$,
\[
E\left[\Vert S \Vert^2\right] = \sum_{k=1}^d E \left[S(k)^2 \right] = \sum_{k=1}^d Cov(S(k),S(k)) = \sum_{k=1}^d \Sigma_{kk} = Tr(\Sigma). 
\]

Now, by Markov's inequality, we have that $P\left[\Vert S \Vert^2 \geq \frac{2Tr(\Sigma)}{\delta}\right] \leq \delta/2$, or equivalently $P\left[\Vert S \Vert \geq \sqrt{\frac{2Tr(\Sigma)}{\delta}}\right] \leq \delta/2$. To conclude the proof, we simply need to compute the trace of covariance matrix $\Sigma$. We have that
\[
Tr(\Sigma) = \sigma^2 \sum_{i=1}^n w_i^2 Tr(x_i x_i^\top ) = \sigma^2 \sum_{i=1}^n w_i^2 \sum_{k=1}^d x_{ik}^2 = \sigma^2 \sum_{i=1}^n w_i^2 \Vert x_i \Vert^2 \leq d \sigma^2 \sum_{i=1}^n w_i^2,
\]
using that $\Vert x \Vert \leq \sqrt{d}$. This directly implies that 
\[
P\left[\Vert S \Vert \geq \sqrt{\frac{2 d \sigma^2 \Vert \vec{w} \Vert^2}{\delta}}\right] \leq P\left[\Vert S \Vert \geq \sqrt{\frac{2 Tr(\Sigma)}{\delta}}\right] \leq \delta/2.
\]
Therefore, we have that $\frac{1}{\lambda}\Vert \sum_{i=1}^n w_i x_i Z_i \Vert \geq \frac{\sigma }{\lambda}\sqrt{\frac{2d}{\delta}} \Vert \vec{w} \Vert$ with probability at most $\delta/2$.

\end{itemize}
The result follows by union bound over the randomness of both $Z$ and $\sum_{i=1}^n w_i x_i Z_i$. 

\section{Additional Experiments}\label{app:experiments}
\subsection{Comparison to non-personalized differential privacy}\label{app:experiments_nonpers}
In Tables~\ref{tab:impact_epsc_syn_4_1_plevel_34_43_23} and~\ref{tab:impact_epsc_insurance_4_1_plevel_34_43_23} (respectively on our synthetic and real dataset) where we change the privacy level $\varepsilon_c$ for the conservative users. We observe that using personalized privacy still performs significantly better, but the performance improvements diminish as $\varepsilon_c$ increases. This is not surprising: as $\varepsilon_c$ increases, there is less and less variability across users' privacy levels, leading to less of a need for personalized privacy. Non-personalized privacy estimators start working better as the amount of noise they must add, which scales as a function of $\varepsilon_c$, starts largely decreasing. 
\begin{table}[!h]
\centering
\resizebox{0.8\textwidth}{!}{%
\begin{tabular}{|c | cc|cc|}
\hline
\begin{tabular}[c]{@{}c@{}}Privacy level \\ of Conservatives\\ ($\varepsilon_c$)\end{tabular} &
  \begin{tabular}[c]{@{}c@{}}Unregularized\\ test loss \\ (\ours)\end{tabular} &
  \begin{tabular}[c]{@{}c@{}}Unregularized\\ test loss \\ (non-personalized)\end{tabular} &
  \begin{tabular}[c]{@{}c@{}}Regularized\\ test loss \\ (\ours)\end{tabular} &
  \begin{tabular}[c]{@{}c@{}}Regularized\\ test loss\\ (non-personalized)\end{tabular} \\
\hline
0.01 & $\mathbf{1.44 \cross 10^{-2}}$ & 3.38              & $\mathbf{1.15}$                  & $1.05 \cross 10^{3}$ \\
0.05 & $\mathbf{1.41 \cross 10^{-2}}$ & $2.09 \cross 10^{-1}$ & $\mathbf{1.05}$                  & $6.08 \cross 10^{1}$   \\
0.10 & $\mathbf{1.39 \cross 10^{-2}}$ & $4.76 \cross 10^{-2}$ & $\mathbf{1.00}$                  & $1.16 \cross 10^{1}$   \\
0.20 & $\mathbf{1.41 \cross 10^{-2}}$ & $3.40 \cross 10^{-2}$ & $\mathbf{1.03}$                  & 7.21    \\
0.30 & $\mathbf{1.34 \cross 10^{-2}}$ & $2.08 \cross 10^{-2}$ & $\mathbf{8.74 \cross 10^{-1}}$ & 1.20    \\
0.40 & $\mathbf{1.32 \cross 10^{-2}}$ & $1.65 \cross 10^{-2}$ & $\mathbf{8.29 \cross 10^{-1}}$ & 1.84    \\
0.50 & $\mathbf{1.29 \cross 10^{-2}}$ & $1.45 \cross 10^{-2}$ & $\mathbf{7.26 \cross 10^{-1}}$ & 1.20  \\
\hline
\end{tabular}
}
\caption{Lower loss compared to standard DP on the synthetic dataset, while varying $\varepsilon_c$ (privacy level of the conservative users), keeping $f_c=0.54, f_m=0.37, \varepsilon_m = 0.5$ (same parameters as shown in~\cite{jorgensen}) and $\lambda = 100$.}
\label{tab:impact_epsc_syn_4_1_plevel_34_43_23}
\end{table}

\begin{table}[!h]
\centering
\resizebox{0.8\textwidth}{!}{%
\begin{tabular}{|c|cc|cc|}
\hline
\begin{tabular}[c]{@{}c@{}}Privacy level\\ of Conservatives\\ ($\varepsilon_c$)\end{tabular} &
  \begin{tabular}[c]{@{}c@{}}Unregularized\\ test loss\\ (PDP-OP)\end{tabular} &
  \begin{tabular}[c]{@{}c@{}}Unregularized\\ test loss\\ (non-personalized)\end{tabular} &
  \begin{tabular}[c]{@{}c@{}}Regularized\\ test loss\\ (PDP-OP)\end{tabular} &
  \begin{tabular}[c]{@{}c@{}}Regularized\\ test loss\\ (non-personalized)\end{tabular} \\
\hline
0.01 & $\mathbf{2.27 \cross 10^{-1}}$ & $2.76 \cross 10^{2}$  & $\mathbf{7.43 \cross 10^{-1}}$ & $9.87 \cross 10^{2}$  \\
0.05 & $\mathbf{2.28 \cross 10^{-1}}$ & $2.00 \cross 10^{1}$  & $\mathbf{7.32 \cross 10^{-1}}$ & $7.19 \cross 10^{1}$  \\
0.10 & $\mathbf{2.12 \cross 10^{-1}}$ & 5.05                  & $\mathbf{6.92 \cross 10^{-1}}$ & $1.80 \cross 10^{1}$  \\
0.20 & $\mathbf{1.97 \cross 10^{-1}}$ & 1.28                  & $\mathbf{6.42 \cross 10^{-1}}$ & 4.54                  \\
0.30 & $\mathbf{1.83 \cross 10^{-1}}$ & $5.84 \cross 10^{-1}$ & $\mathbf{5.75 \cross 10^{-1}}$ & 2.06                  \\
0.40 & $\mathbf{1.66 \cross 10^{-1}}$ & $3.48 \cross 10^{-1}$ & $\mathbf{5.26 \cross 10^{-1}}$ & 1.17                  \\
0.50 & $\mathbf{1.55 \cross 10^{-1}}$ & $2.33 \cross 10^{-1}$ & $\mathbf{4.90 \cross 10^{-1}}$ & $7.61 \cross 10^{-1}$ \\
\hline
\end{tabular}%
}
\caption{Lower loss compared to standard DP on the Medical cost dataset while varying $\varepsilon_c$ (privacy level of the conservative users), keeping $f_c=0.54, f_m=0.37, \varepsilon_m = 0.5$ (same parameters as shown in~\cite{jorgensen}) and $\lambda = 1$(analogous to Table~\ref{tab:impact_epsc_syn_4_1_plevel_34_43_23})}
\label{tab:impact_epsc_insurance_4_1_plevel_34_43_23}
\end{table}

In Tables~\ref{tab:impact_epsm_syn_4_1_plevel_34_43_23} and~\ref{tab:impact_epsm_insurance_4_1_plevel_34_43_23}, we change the privacy level $\varepsilon_m$ for the pragmatist (or medium privacy) users. We observe once again that using personalized privacy performs significantly better. The performance improvements are consistent across the board, noting that there is still a need for personalized privacy as we still have significant variability across user privacy preferences, with $34$ percent of users requiring a stringent privacy level of $0.01$ and $23$ percent a privacy level of $1$.

\begin{table}[!h]
\centering
\resizebox{0.8\textwidth}{!}{
\begin{tabular}{|c | cc|cc|}
\hline
\begin{tabular}[c]{@{}c@{}}Privacy level \\ of pragmatists \\ ($\varepsilon_m$)\end{tabular} &
  \begin{tabular}[c]{@{}c@{}}Unregularized\\ test loss \\ (\ours)\end{tabular} &
  \begin{tabular}[c]{@{}c@{}}Unregularized\\ test loss \\ (non-personalized)\end{tabular} &
  \begin{tabular}[c]{@{}c@{}}Regularized\\ test loss \\ (\ours)\end{tabular} &
  \begin{tabular}[c]{@{}c@{}}Regularized\\ test loss\\ (non-personalized)\end{tabular} \\
\hline
0.05 & $\mathbf{2.11 \cross 10^{-2}}$ & 9.68                 & $\mathbf{3.23}$ & $2.95 \cross 10^{3}$ \\
0.10 & $\mathbf{1.88 \cross 10^{-2}}$ & 5.76                 & $\mathbf{2.55}$ & $1.76 \cross 10^{3}$ \\
0.15 & $\mathbf{1.89 \cross 10^{-2}}$ & 8.46                 & $\mathbf{2.56}$ & $2.60 \cross 10^{3}$ \\
0.20 & $\mathbf{1.79 \cross 10^{-2}}$ & 4.84                 & $\mathbf{2.25}$ & $1.49 \cross 10^{3}$ \\
0.25 & $\mathbf{1.66 \cross 10^{-2}}$ & 3.17                 & $\mathbf{1.83}$ & $9.78 \cross 10^{2}$ \\
0.30 & $\mathbf{1.57 \cross 10^{-2}}$ & 1.48                 & $\mathbf{1.53}$ & $4.52 \cross 10^{2}$ \\
0.35 & $\mathbf{1.58 \cross 10^{-2}}$ & $6.45 \cross 10^{-1}$ & $\mathbf{1.61}$ & $1.97 \cross 10^{2}$ \\
0.40 & $\mathbf{1.48 \cross 10^{-2}}$ & 2.44                 & $\mathbf{1.33}$ & $7.56 \cross 10^{2}$ \\
0.45 & $\mathbf{1.47 \cross 10^{-2}}$ & 3.21                 & $\mathbf{1.25}$ & $9.86 \cross 10^{2}$ \\
0.50 & $\mathbf{1.44 \cross 10^{-2}}$ & 3.38                 & $\mathbf{1.15}$ & $1.05 \cross 10^{3}$ \\
\hline
\end{tabular}
}
\caption{Lower loss compared to standard DP on the synthetic dataset, while varying $\varepsilon_m$ (privacy level of the pragmatists), keeping $f_c=0.54, f_m=0.37, \varepsilon_c = 0.01$ (same parameters as shown in~\cite{jorgensen}) and $\lambda = 100$.}
\label{tab:impact_epsm_syn_4_1_plevel_34_43_23}
\end{table}

\begin{table}[!h]
\centering
\resizebox{0.8\textwidth}{!}{
\begin{tabular}{|c|cc|cc|}
\hline
\begin{tabular}[c]{@{}c@{}}Privacy level\\ of Conservatives\\ ($\varepsilon_m$)\end{tabular} &
  \begin{tabular}[c]{@{}c@{}}Unregularized\\ test loss\\ (PDP-OP)\end{tabular} &
  \begin{tabular}[c]{@{}c@{}}Unregularized\\ test loss\\ (non-personalized)\end{tabular} &
  \begin{tabular}[c]{@{}c@{}}Regularized\\ test loss\\ (PDP-OP)\end{tabular} &
  \begin{tabular}[c]{@{}c@{}}Regularized\\ test loss\\ (non-personalized)\end{tabular} \\
\hline
0.05 & $\mathbf{5.66 \cross 10^{-1}}$ & $4.99 \cross 10^{2}$ & $\mathbf{1.94}$                & $1.75 \cross 10^{3}$ \\
0.10 & $\mathbf{5.18 \cross 10^{-1}}$ & $4.99 \cross 10^{2}$ & $\mathbf{1.76}$                & $1.78 \cross 10^{3}$ \\
0.15 & $\mathbf{4.50 \cross 10^{-1}}$ & $5.02 \cross 10^{2}$ & $\mathbf{1.55}$                & $1.76 \cross 10^{3}$ \\
0.20 & $\mathbf{4.00\cross 10^{-1}}$  & $4.81 \cross 10^{2}$ & $\mathbf{1.38}$                & $1.71 \cross 10^{3}$ \\
0.25 & $\mathbf{3.60 \cross 10^{-1}}$ & $4.80 \cross 10^{2}$ & $\mathbf{1.22}$                & $1.70 \cross 10^{3}$ \\
0.30 & $\mathbf{3.27 \cross 10^{-1}}$ & $3.94 \cross 10^{2}$ & $\mathbf{1.10}$                & $1.43 \cross 10^{3}$ \\
0.35 & $\mathbf{2.85 \cross 10^{-1}}$ & $4.82 \cross 10^{2}$ & $\mathbf{9.75 \cross 10^{-1}}$ & $1.75 \cross 10^{3}$ \\
0.40 & $\mathbf{2.69 \cross 10^{-1}}$ & $4.83 \cross 10^{2}$ & $\mathbf{9.05 \cross 10^{-1}}$ & $1.71 \cross 10^{3}$ \\
0.45 & $\mathbf{2.53 \cross 10^{-1}}$ & $4.36 \cross 10^{2}$ & $\mathbf{8.28 \cross 10^{-1}}$ & $1.61 \cross 10^{3}$ \\
0.50 & $\mathbf{2.27 \cross 10^{-1}}$ & $2.76 \cross 10^{2}$ & $\mathbf{7.43 \cross 10^{-1}}$ & $9.87 \cross 10^{2}$ \\
\hline
\end{tabular}
}
\caption{Lower loss compared to standard DP on the Medical cost dataset, while varying $\varepsilon_m$ (privacy level of the pragmatists), keeping $f_c=0.54, f_m=0.37, \varepsilon_c = 0.01$ (same parameters as shown in~\cite{jorgensen}) and $\lambda = 1$. (analogous to Table~\ref{tab:impact_epsm_syn_4_1_plevel_34_43_23})}
\label{tab:impact_epsm_insurance_4_1_plevel_34_43_23}
\end{table}

In Tables~\ref{tab:impact_fc_syn_4_1_plevel_34_43_23} and \ref{tab:impact_fc_insurance_4_1_plevel_34_43_23}, we change the fraction of conservative users $f_c$. We note that even then, our personalized privacy framework still yields consistent and significant improvements over non-personalized privacy. This is because the existence of users with $\varepsilon_c = 0.01$ forces the non-personalized privacy estimate to still add noise that scales with this most stringent privacy requirement. 

\begin{table}[!h]
\centering
\resizebox{0.8\textwidth}{!}{%
\begin{tabular}{|c | cc|cc|}
\hline
\begin{tabular}[c]{@{}c@{}}Fraction of \\ Conservative \\ Users\\ ($f_c$)\end{tabular} &
  \begin{tabular}[c]{@{}c@{}}Unregularized\\ test loss \\ (\ours)\end{tabular} &
  \begin{tabular}[c]{@{}c@{}}Unregularized\\ test loss \\ (non-personalized)\end{tabular} &
  \begin{tabular}[c]{@{}c@{}}Regularized\\ test loss \\ (\ours)\end{tabular} &
  \begin{tabular}[c]{@{}c@{}}Regularized\\ test loss\\ (non-personalized)\end{tabular} \\
\hline
0.1 & $\mathbf{9.02 \cross 10^{-3}}$ & $7.87 \cross 10^{-1}$ & $\mathbf{4.91 \cross 10^{-1}}$ & $2.43 \cross 10^{2}$ \\
0.2 & $\mathbf{9.41 \cross 10^{-3}}$ & $3.80 \cross 10^{-1}$ & $\mathbf{6.51 \cross 10^{-1}}$ & $1.14 \cross 10^{2}$ \\
0.3 & $\mathbf{1.02 \cross 10^{-2}}$ & 4.56                  & $\mathbf{8.76 \cross 10^{-1}}$ & $1.41 \cross 10^{3}$ \\
0.4 & $\mathbf{1.14 \cross 10^{-2}}$ & 3.82                  & \textbf{1.27}                  & $1.17 \cross 10^{3}$ \\
0.5 & $\mathbf{1.30 \cross 10^{-2}}$ & 7.00                  & \textbf{1.75}                  & $2.14 \cross 10^{3}$ \\
0.6 & $\mathbf{1.69 \cross 10^{-2}}$ & 9.59                  & \textbf{2.92}                  & $2.90 \cross 10^{3}$ \\
\hline
\end{tabular}%
}
\caption{Lower loss compared to standard DP on the synthetic dataset, while varying $f_c$ (fraction of conservative users) for $f_m=0.37,~f_l = 1 - f_c - f_m,~\varepsilon_c=0.01,~\varepsilon_m=0.2$, $\varepsilon_l=1.0, \lambda = 100$}
\label{tab:impact_fc_syn_4_1_plevel_34_43_23}
\end{table}

\begin{table}[!h]
\centering
\resizebox{0.8\textwidth}{!}{%
\begin{tabular}{|c|cc|cc|}
\hline
\begin{tabular}[c]{@{}c@{}}Fraction of\\ Conservative\\ Users\\ ($f_c$)\end{tabular} &
  \begin{tabular}[c]{@{}c@{}}Unregularized\\ test loss\\ (PDP-OP)\end{tabular} &
  \begin{tabular}[c]{@{}c@{}}Unregularized\\ test loss\\ (non-personalized)\end{tabular} &
  \begin{tabular}[c]{@{}c@{}}Regularized\\ test loss\\ (PDP-OP)\end{tabular} &
  \begin{tabular}[c]{@{}c@{}}Regularized\\ test loss\\ (non-personalized)\end{tabular} \\
\hline
0.1 & $\mathbf{1.17 \cross 10^{-1}}$ & $4.95 \cross 10^{2}$ & $\mathbf{3.47 \cross 10^{-1}}$ & $1.77 \cross 10^{3}$ \\
0.2 & $\mathbf{1.45 \cross 10^{-1}}$ & $4.89 \cross 10^{2}$ & $\mathbf{4.46 \cross 10^{-1}}$ & $1.74 \cross 10^{3}$ \\
0.3 & $\mathbf{1.82 \cross 10^{-1}}$ & $5.00 \cross 10^{2}$ & $\mathbf{5.78 \cross 10^{-1}}$ & $1.79 \cross 10^{3}$ \\
0.4 & $\mathbf{2.35 \cross 10^{-1}}$ & $4.80 \cross 10^{2}$ & $\mathbf{7.71 \cross 10^{-1}}$ & $1.73 \cross 10^{3}$ \\
0.5 & $\mathbf{3.45 \cross 10^{-1}}$ & $4.93 \cross 10^{2}$ & $\mathbf{1.16}$                & $1.80 \cross 10^{3}$ \\
0.6 & $\mathbf{5.61 \cross 10^{-1}}$ & $4.19 \cross 10^{2}$ & $\mathbf{1.92}$                & $1.51 \cross 10^{3}$ \\
\hline
\end{tabular}%
}
\caption{Lower loss compared to standard DP on the Medical cost dataset, while varying $f_c$(fraction of conservative users) $f_m = 0.37, f_l = 1 \minus f_c \minus f_m, \varepsilon_c = 0.01, \varepsilon_m = 0.2$, $\varepsilon_l = 1.0, \lambda = 1$.}
\label{tab:impact_fc_insurance_4_1_plevel_34_43_23}
\end{table}

Finally, in Tables~\ref{tab:impact_n_syn_4_1_plevel_34_43_23} and \ref{tab:impact_n_insurance_4_1_plevel_34_43_23}, we fix a single value of the regularization parameter $\lambda$ and of the distribution of privacy requirements of the users and show how our results evolve as the number of samples we feed our algorithm increases. Unsurprisingly, the more samples we have access to, the better the performance of both our personalized privacy approach as well as the non-personalized baseline. Our approach continues to see significant and consistent improvements compared to the non-personalized baseline.

\begin{table}[!h]
\centering
\resizebox{0.8\textwidth}{!}{
\begin{tabular}{|c|cc|cc|}
\hline
\begin{tabular}[c]{@{}c@{}}Fraction of\\ training samples\\ ($n$)\end{tabular} &
  \begin{tabular}[c]{@{}c@{}}Unregularized\\ test loss\\ (PDP-OP)\end{tabular} &
  \begin{tabular}[c]{@{}c@{}}Unregularized\\ test loss\\ (non-personalized)\end{tabular} &
  \begin{tabular}[c]{@{}c@{}}Regularized\\ test loss\\ (PDP-OP)\end{tabular} &
  \begin{tabular}[c]{@{}c@{}}Regularized\\ test loss\\ (non-personalized)\end{tabular} \\
\hline
0.1 & $\mathbf{4.28 \cross 10^{-1}}$ & $1.22 \cross 10^{1}$ & $\mathbf{1.32 \cross 10^{2}}$ & $3.73 \cross 10^{3}$ \\
0.2 & $\mathbf{1.41 \cross 10^{-1}}$ & $1.15 \cross 10^{2}$ & $\mathbf{4.02 \cross 10^{1}}$ & $3.45 \cross 10^{4}$ \\
0.3 & $\mathbf{5.01 \cross 10^{-2}}$ & 8.97                 & $\mathbf{1.46 \cross 10^{1}}$ & $2.89 \cross 10^{3}$ \\
0.4 & $\mathbf{4.64 \cross 10^{-2}}$ & 2.37                 & $\mathbf{1.27 \cross 10^{1}}$ & $7.24 \cross 10^{2}$ \\
0.5 & $\mathbf{3.49 \cross 10^{-2}}$ & $2.14 \cross 10^{1}$ & $\mathbf{8.11}$               & $6.61 \cross 10^{3}$ \\
0.6 & $\mathbf{2.12 \cross 10^{-2}}$ & $1.59 \cross 10^{1}$ & $\mathbf{5.16}$               & $5.06 \cross 10^{3}$ \\
0.7 & $\mathbf{2.00 \cross 10^{-2}}$ & 7.97                 & $\mathbf{4.66}$               & $2.52 \cross 10^{3}$ \\
0.8 & $\mathbf{1.64 \cross 10^{-2}}$ & $1.03 \cross 10^{1}$ & $\mathbf{3.36}$               & $2.95 \cross 10^{3}$ \\
0.9 & $\mathbf{1.10 \cross 10^{-2}}$ & 6.36                 & $\mathbf{2.52}$               & $1.93 \cross 10^{3}$ \\
1.0 & $\mathbf{1.01 \cross 10^{-2}}$ & 1.98                 & $\mathbf{2.16}$               & $6.17 \cross 10^{2}$ \\
\hline
\end{tabular}
}
\caption{Lower loss compared to standard DP on the synthetic dataset, while varying the fraction of the training set samples we use. For example, here $n = 0.3$ means that we use a 0.3 fraction of the training set. We fix $f_c = 0.34, f_m = 0.43, f_l = 0.23, \varepsilon_c = 0.01, \varepsilon_m = 0.2, \varepsilon_l = 1.0$ and $\lambda = 100$.}
\label{tab:impact_n_syn_4_1_plevel_34_43_23}
\end{table}

\begin{table}[!h]
\centering
\resizebox{0.8\textwidth}{!}{%
\begin{tabular}{|c|cc|cc|}
\hline
\begin{tabular}[c]{@{}c@{}}Fraction of\\ training samples\\ ($n$)\end{tabular} &
  \begin{tabular}[c]{@{}c@{}}Unregularized\\ test loss\\ (PDP-OP)\end{tabular} &
  \begin{tabular}[c]{@{}c@{}}Unregularized\\ test loss\\ (non-personalized)\end{tabular} &
  \begin{tabular}[c]{@{}c@{}}Regularized\\ test loss\\ (PDP-OP)\end{tabular} &
  \begin{tabular}[c]{@{}c@{}}Regularized\\ test loss\\ (non-personalized)\end{tabular} \\
\hline
0.1 & $\mathbf{1.83 \cross 10^1}$    & $1.45 \cross 10^4$ & $\mathbf{6.64 \cross 10^{1}}$  & $5.21 \cross 10^4$ \\
0.2 & $\mathbf{4.63}$                & $6.62 \cross 10^3$ & $\mathbf{1.65 \cross 10^{1}}$  & $2.37 \cross 10^4$ \\
0.3 & $\mathbf{2.05}$                & $5.23 \cross 10^3$ & $\mathbf{7.29}$                & $1.85 \cross 10^4$ \\
0.4 & $\mathbf{1.21}$                & $1.60 \cross 10^3$ & $\mathbf{4.28}$                & $5.77 \cross 10^3$ \\
0.5 & $\mathbf{7.85 \cross 10^{-1}}$ & $1.96 \cross 10^3$ & $\mathbf{2.75}$                & $6.98 \cross 10^3$ \\
0.6 & $\mathbf{5.40 \cross 10^{-1}}$ & $1.36 \cross 10^3$ & $\mathbf{1.84}$                & $4.91 \cross 10^3$ \\
0.7 & $\mathbf{4.02 \cross 10^{-1}}$ & $9.64 \cross 10^2$ & $\mathbf{1.38}$                & $3.47 \cross 10^3$ \\
0.8 & $\mathbf{3.16 \cross 10^{-1}}$ & $7.26 \cross 10^2$ & $\mathbf{1.06}$                & $2.60 \cross 10^3$ \\
0.9 & $\mathbf{2.53 \cross 10^{-1}}$ & $6.15 \cross 10^2$ & $\mathbf{8.56 \cross 10^{-1}}$ & $2.24 \cross 10^3$ \\
1.0 & $\mathbf{2.14 \cross 10^{-1}}$ & $3.82 \cross 10^2$ & $\mathbf{6.99 \cross 10^{-1}}$ & $1.36 \cross 10^3$ \\
\hline
\end{tabular}
}
\caption{Lower loss compared to standard DP on the Medical cost dataset, while varying the number of samples we use. For example, here $n=0.3$ means that we use a $0.3$ fraction of the training set. We fix $f_c = 0.34, f_m = 0.43, f_l = 0.23, \varepsilon_c = 0.01, \varepsilon_m = 0.2, \varepsilon_l = 1.0$ and $\lambda = 1$.}
\label{tab:impact_n_insurance_4_1_plevel_34_43_23}
\end{table}

\subsection{Comparison to~\citet{jorgensen}}\label{app:experiments_jorgensen}
We provide additional experimental results that further the comparison of our \ours~algorithm with that of~\citet{jorgensen}. 

\paragraph{Improvements in loss.} In Figure~\ref{fig:syn_jorgensen_epsc} and Figure~\ref{fig:insurance_jorgensen_epsc} we vary the privacy level $\varepsilon_c$ for the conservative users, and observe that our algorithm always leads to a lower loss compared to \citet{jorgensen}. We further note that the performance improvements diminish as $\varepsilon_c$ increases, as there is less variability in the users' privacy levels and we get closer to the non-personalized case. 

\begin{figure}[!h]
\centering
\subfloat[Unregularized Loss ($\lambda = 50$)]{
    \includegraphics[width=0.4\textwidth]{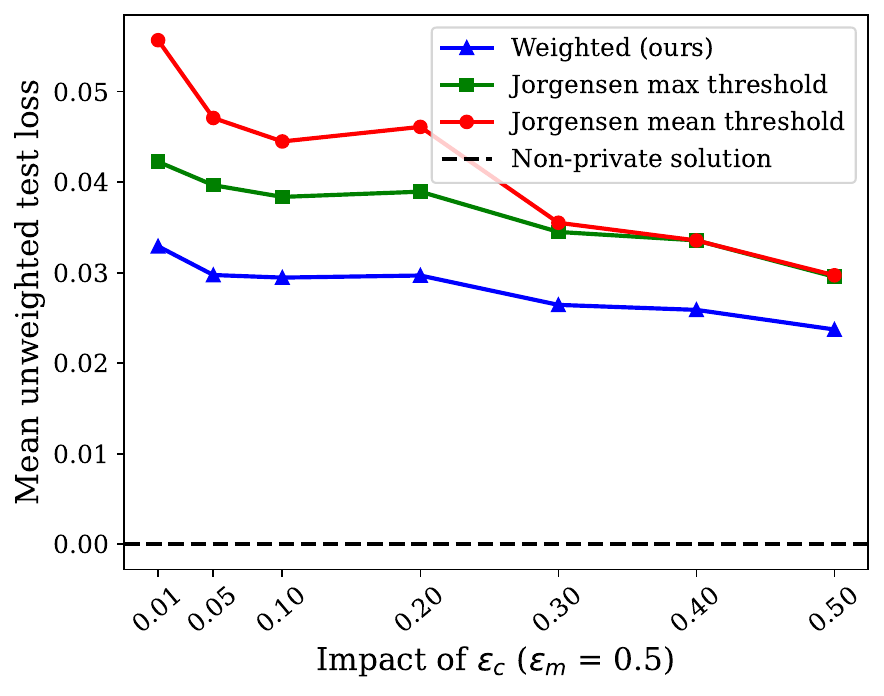}
} 
\subfloat[Regularized Loss ($\lambda = 50$)]{
    \includegraphics[width=0.4\textwidth]{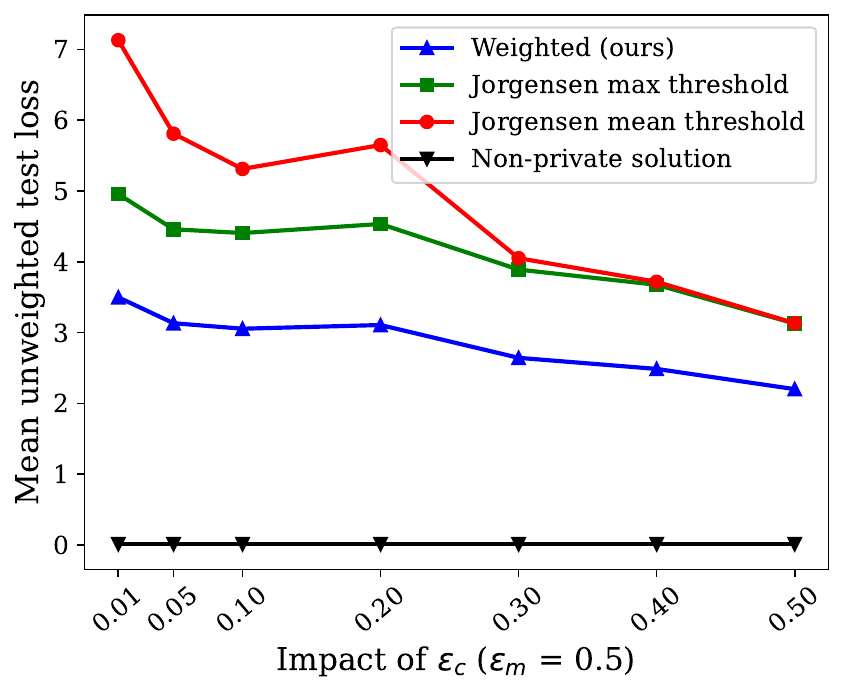}
}
\caption{Lower loss compared to \citet{jorgensen} on the synthetic dataset while varying $\varepsilon_c$ (privacy level of the conservative users) , keeping $\varepsilon_m = 0.5,~\varepsilon_l=1.0,~f_c = 0.54,~f_m = 0.37, f_l = 0.09$. \label{fig:syn_jorgensen_epsc}} 
\end{figure}

\begin{figure}[!h]
\centering
\subfloat[Unregularized Loss ($\lambda = 1$)]{
    \includegraphics[width=0.4\textwidth]{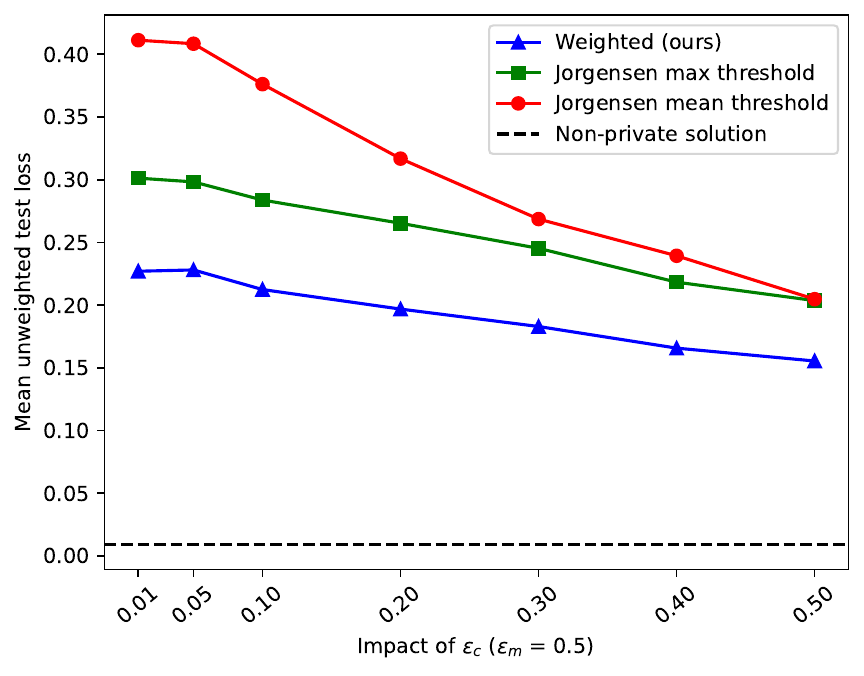}
} 
\subfloat[Regularized Loss ($\lambda = 1$)]{
    \includegraphics[width=0.4\textwidth]{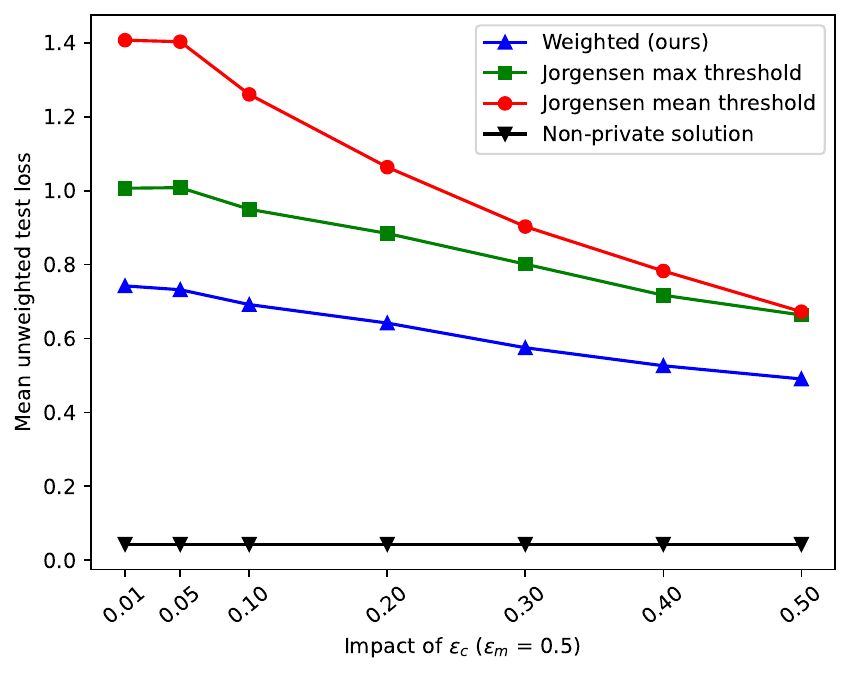}
}
\caption{Lower loss compared to \citet{jorgensen} on the Medical costs dataset while varying $\varepsilon_c$ (privacy level of the conservative users), keeping $\varepsilon_m = 0.5, \varepsilon_l=1.0, f_c = 0.54, f_m = 0.37, f_l = 0.09$. \label{fig:insurance_jorgensen_epsc}}
\end{figure}

Figures~\ref{fig:syn_jorgensen_epsm} and~\ref{fig:insurance_jorgensen_epsm} show similar insights when varying the parameter $\varepsilon_m$, that controls the privacy level of medium-privacy (or pragmatic) users.

\begin{figure*}[h]
\centering
\subfloat[Unregularized Loss ($\lambda = 50$)]{
\includegraphics[width=0.4\textwidth]{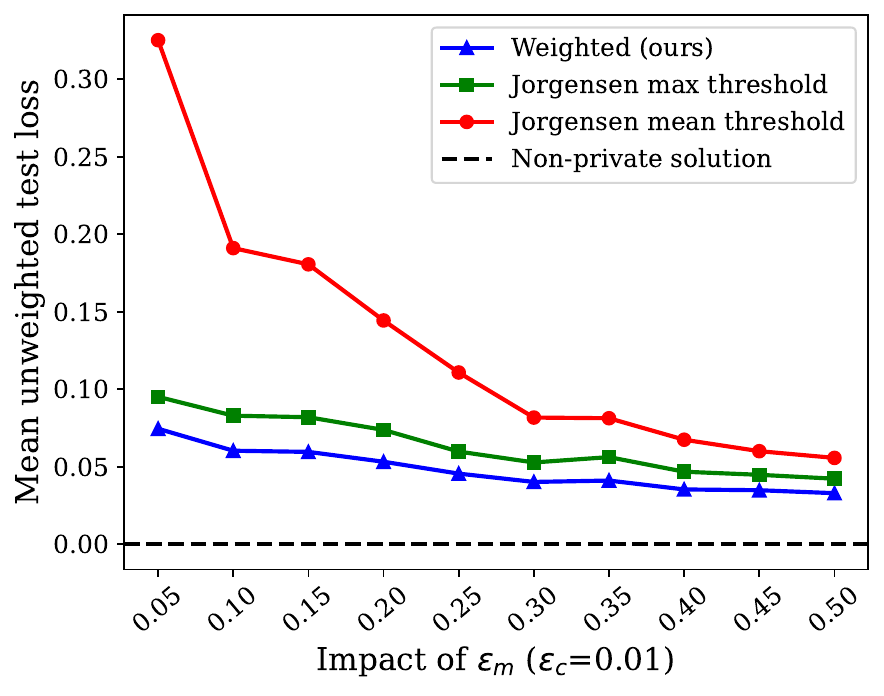}
}
\subfloat[Regularized Loss ($\lambda = 50$)]{
\includegraphics[width=0.4\textwidth]{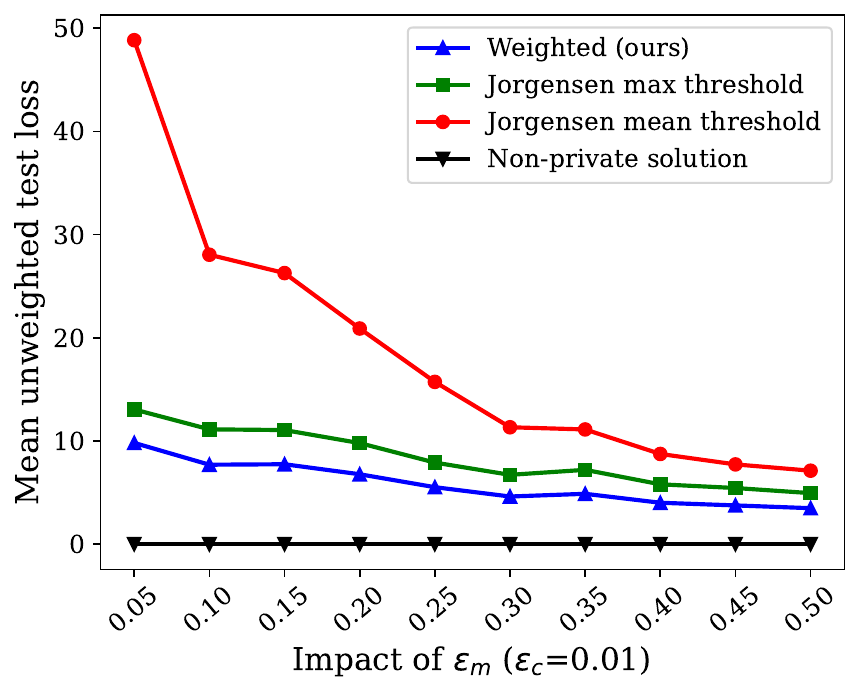}
}
\caption{Lower loss compared to \citet{jorgensen} on the synthetic dataset while varying $\varepsilon_m$ (privacy level of pragmatists), keeping $\varepsilon_c = 0.01, \varepsilon_l=1.0, f_c = 0.54, f_m = 0.37, f_l = 0.09$.}\label{fig:syn_jorgensen_epsm}
\end{figure*}

\begin{figure*}[h]
\centering
\subfloat[Unregularized Loss ($\lambda = 1$)]{
\includegraphics[width=0.4\textwidth]{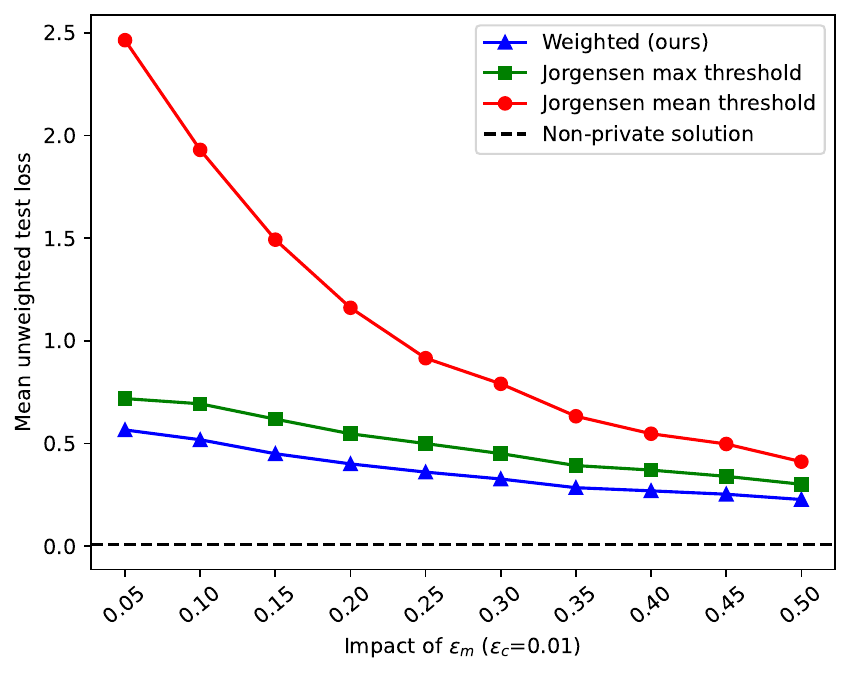}
}
\subfloat[Regularized Loss ($\lambda = 1$)]{
\includegraphics[width=0.4\textwidth]{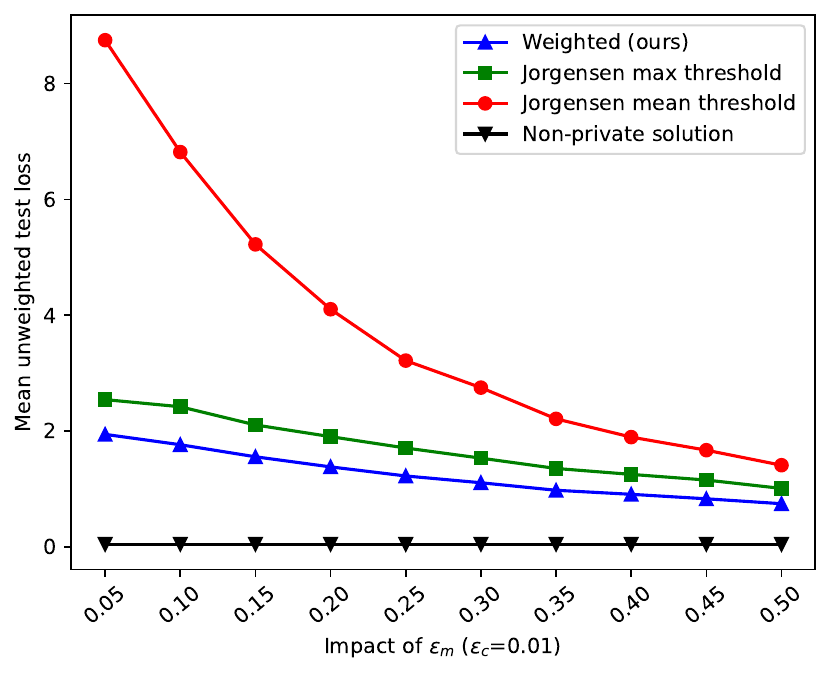}
}
\caption{Lower loss compared to \citet{jorgensen} for the Medical costs dataset while varying $\varepsilon_m$ (privacy level of pragmatists), keeping $\varepsilon_c = 0.01, \varepsilon_l=1.0, f_c = 0.54, f_m = 0.37, f_l = 0.09$.}\label{fig:insurance_jorgensen_epsm}
\end{figure*}

In Figures~\ref{fig:syn_jorgensen_fc} and~\ref{fig:insurance_jorgensen_fc} we vary the fraction of users with strong privacy requirements. We remark that our relative performance improvement compared to~\citet{jorgensen} becomes bigger as a larger fraction of users has high privacy requirements, highlighting the benefit of our framework in stringent privacy regimes. 

\begin{figure*}[!h]
\centering
\subfloat[Unregularized Loss ($\lambda = 100$)]{
\includegraphics[width=0.4\textwidth]{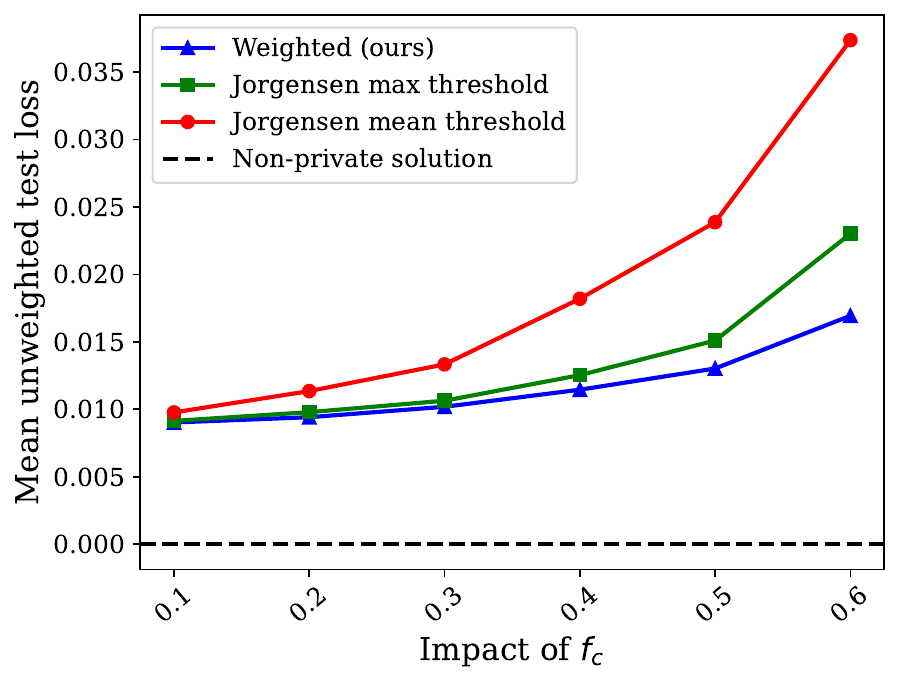}
}
\subfloat[Regularized Loss ($\lambda = 100$)]{
\includegraphics[width=0.4\textwidth]{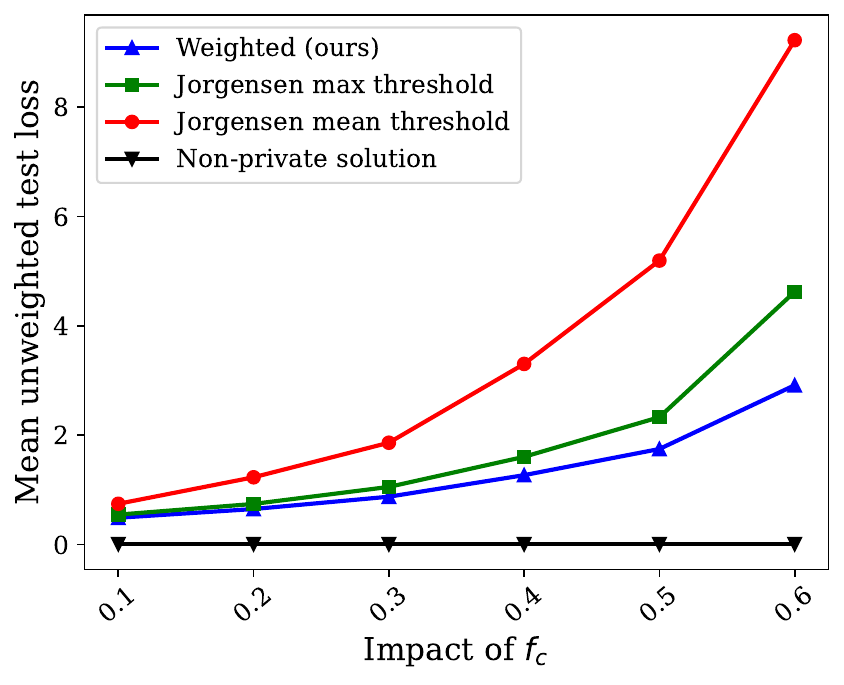}
}
\caption{Lower loss compared to \citet{jorgensen} on the synthetic dataset while varying $f_c$(fraction of conservative users), keeping $\varepsilon_c=0.01, \varepsilon_m = 0.2, \varepsilon_l=1.0, f_m = 0.37, f_l = 1 - f_m - f_c = 0.63 - f_c$.}\label{fig:syn_jorgensen_fc}
\end{figure*}

\begin{figure*}[!h]
\centering
\subfloat[Unregularized Loss ($\lambda = 0.5$)]{
\includegraphics[width=0.4\textwidth]{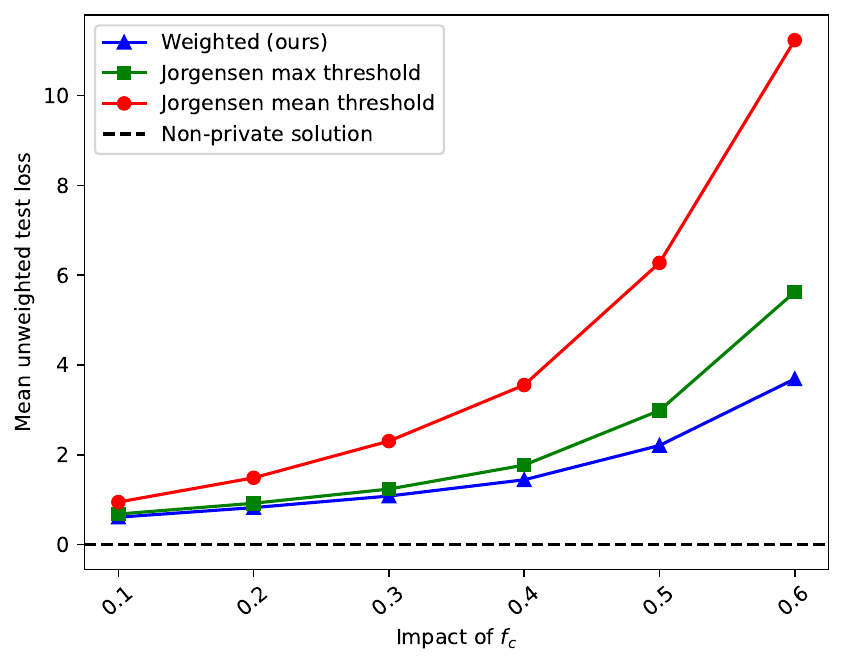}}
\subfloat[Regularized Loss ($\lambda = 0.5$)]{
\includegraphics[width=0.4\textwidth]{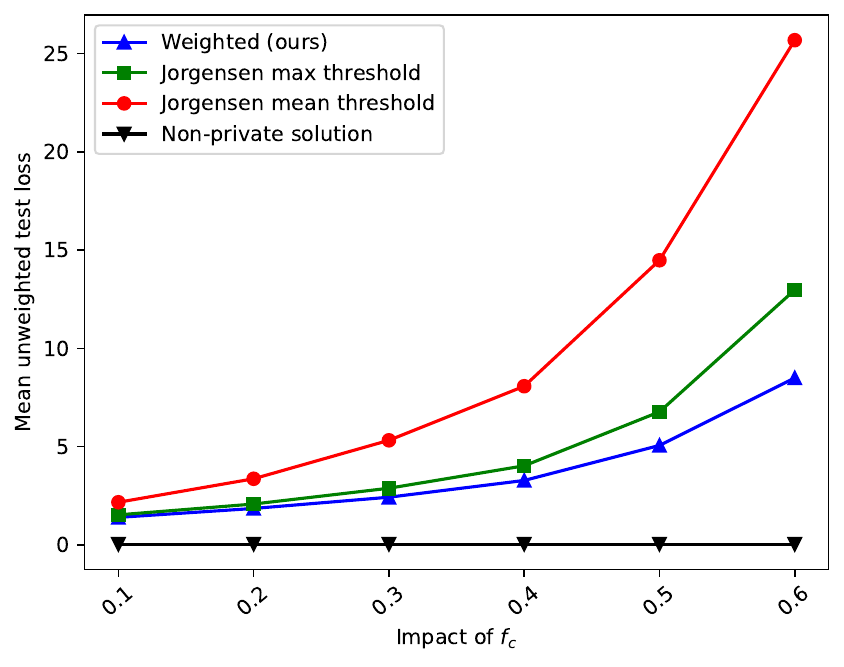}}
\caption{Lower loss compared to \citet{jorgensen} on the Medical costs dataset while $f_c$ (fraction of conservative users), keeping $\varepsilon_c=0.01, \varepsilon_m = 0.2, \varepsilon_l=1.0, f_m = 0.37, f_l = 1 - f_m - f_c = 0.63 - f_c$.}\label{fig:insurance_jorgensen_fc}
\end{figure*}

Finally, Figures~\ref{fig:syn_jorgensen_n} and~\ref{fig:insurance_jorgensen_n} vary the fraction of the training set used. We note that our loss improvements compared to~\citet{jorgensen} become less noticeable as $n$ increases. This is perhaps unsurprising, as the more data points we use, the lesser the impact of adding noise for privacy is, and there is much less leeway for improvement across different techniques for privacy. 

\paragraph{Improvements in variability.} We start with Figures~\ref{fig:syn_jorgensen_epsc_std} and ~\ref{fig:insurance_jorgensen_epsc_std} where we vary the privacy level $\varepsilon_c$ for the conservative users. Then, on Figures~\ref{fig:syn_jorgensen_epsm_std}   and~\ref{fig:insurance_jorgensen_epsm_std}, we focus on the case of varying $\varepsilon_m$. On Figures~\ref{fig:syn_jorgensen_fc_std} and~\ref{fig:insurance_jorgensen_fc_std}, we vary the fraction of users that have high privacy requirements. Finally, on Figures~\ref{fig:syn_jorgensen_n_std} and~\ref{fig:insurance_jorgensen_n_std}, we vary  the fraction of the training set used. On all figures, we note that the standard deviation of the loss of our PDP-OP algorithm is lower than \citet{jorgensen}.

\begin{figure*}[!h]
\centering
\subfloat[Unregularized Loss ($\lambda = 100$)]{
\includegraphics[width=0.4\textwidth]{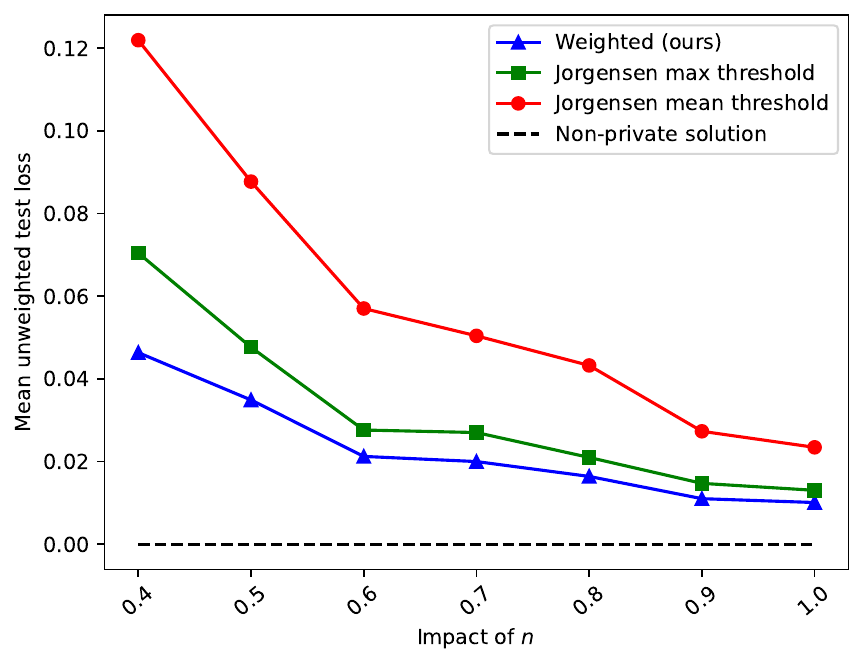}
}
\subfloat[Regularized Loss ($\lambda = 100)$)]{
\includegraphics[width=0.4\textwidth]{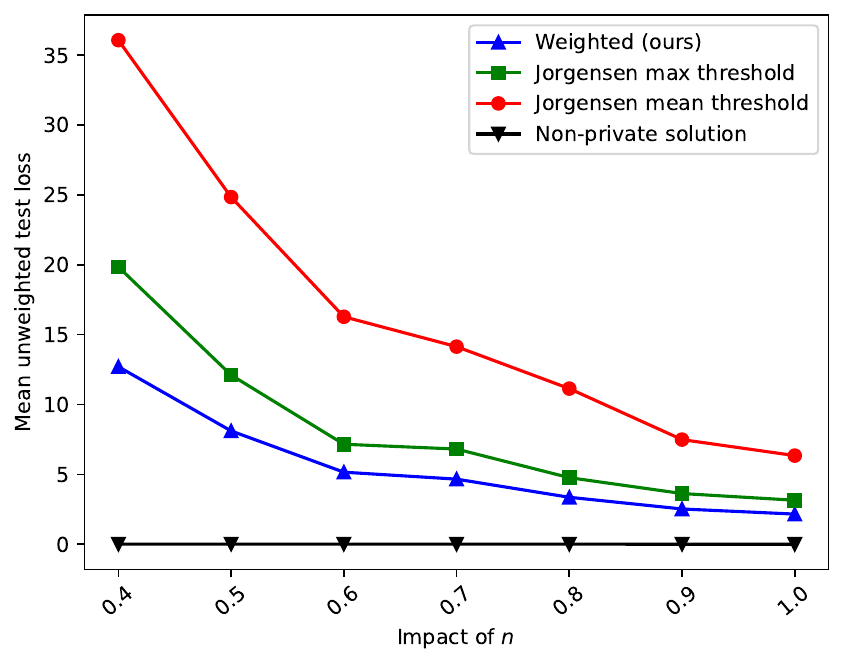}
}
    \caption{Lower loss compared to \citet{jorgensen} on the synthetic dataset while varying the parameter $n$ (the fraction of training samples used), keeping $\varepsilon_c=0.01, \varepsilon_m = 0.2, \varepsilon_c=1.0, f_c = 0.34, f_m = 0.43, f_l = 0.23$. }\label{fig:syn_jorgensen_n}
\end{figure*}

\begin{figure*}[!h]
\centering
\subfloat[Unregularized Loss ($\lambda = 1$)]{
\includegraphics[width=0.4\textwidth]{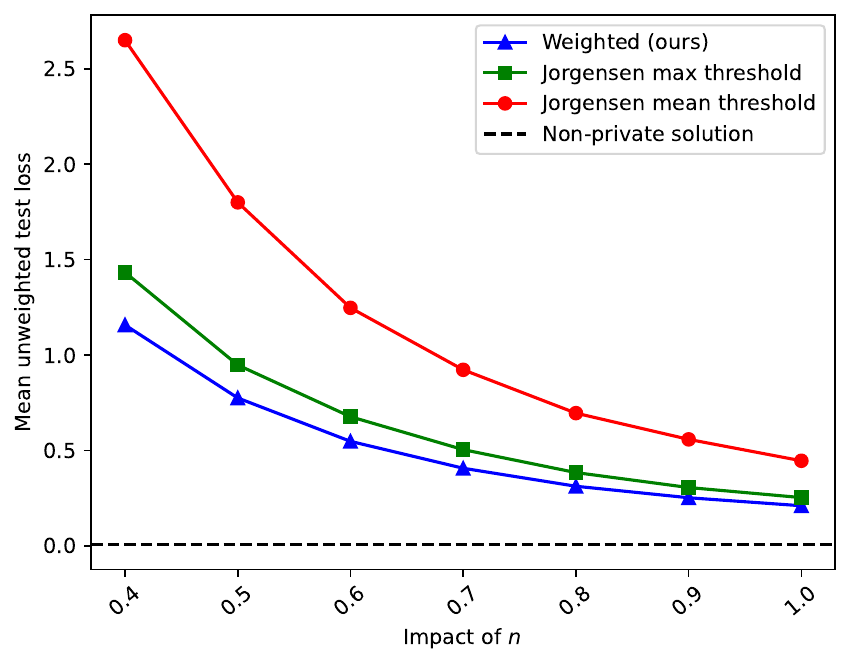}
}
\subfloat[Regularized Loss ($\lambda = 1)$)]{
\includegraphics[width=0.4\textwidth]{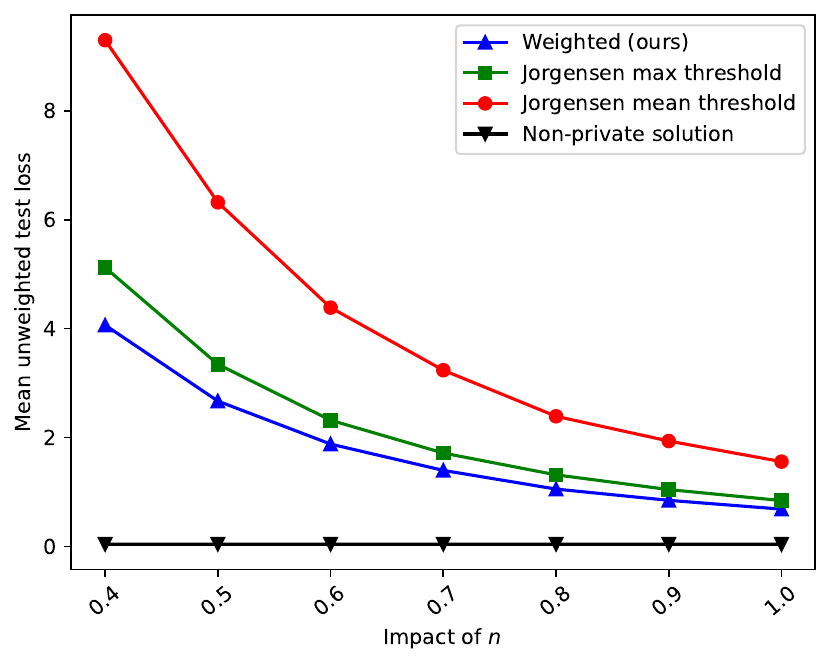}
}
    \caption{Lower loss compared to \citet{jorgensen} on the Medical costs dataset while varying $n$ (the fraction of training samples used), keeping $\varepsilon_c=0.01, \varepsilon_m = 0.2, \varepsilon_c=1.0, f_c = 0.34, f_m = 0.43$. }\label{fig:insurance_jorgensen_n}
\end{figure*}

\begin{figure*}[!h]
    \centering
    \subfloat[Unregularized Loss ($\lambda = 50$)] {
    \includegraphics[width=0.4\textwidth]{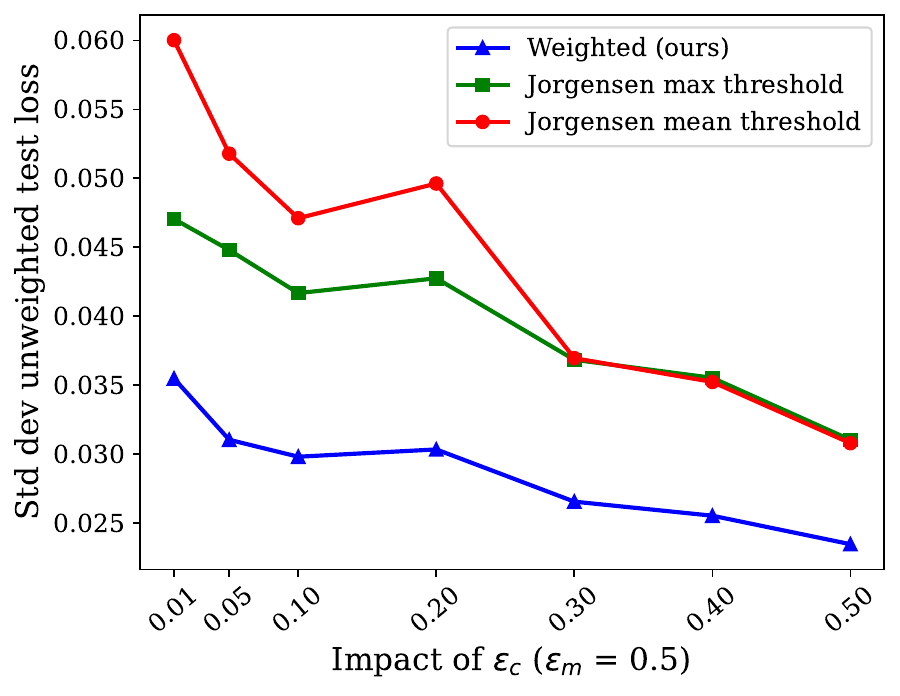}
    }
    \subfloat[Regularized Loss ($\lambda = 50$)]{\includegraphics[width=0.4\textwidth]{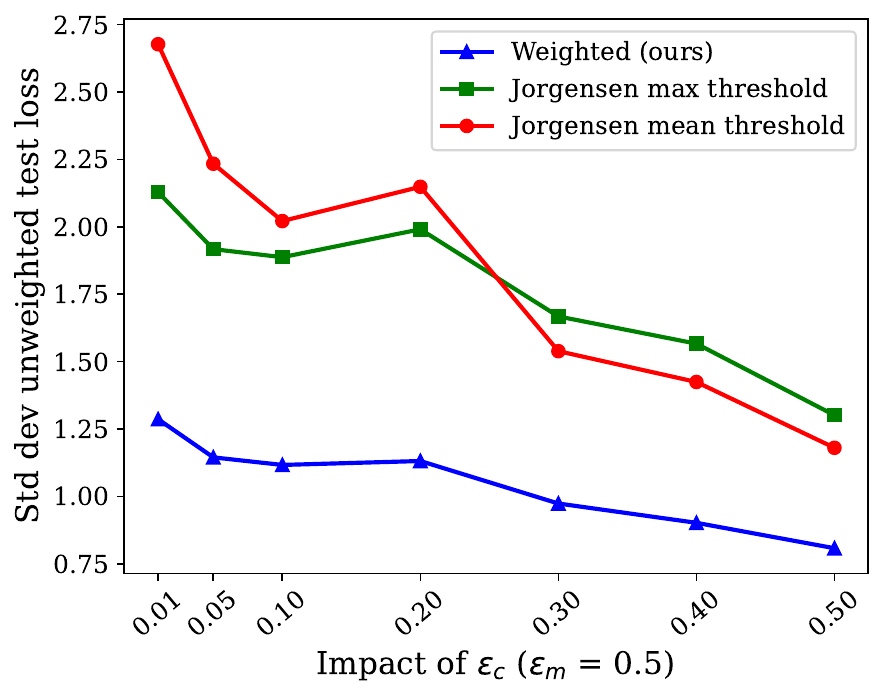}
    }
    \caption{Lower standard deviation compared to~\cite{jorgensen} on the synthetic dataset, while varying the $\varepsilon_c$ (the privacy level of conservative users), keeping $\varepsilon_m = 0.5, \varepsilon_l=1.0, f_c = 0.54, f_m = 0.37, f_l = 0.09$.}\label{fig:syn_jorgensen_epsc_std}
\end{figure*}

\begin{figure*}[!h]
    \centering
    \subfloat[Unregularized Loss ($\lambda = 1$)] {
    \includegraphics[width=0.4\textwidth]{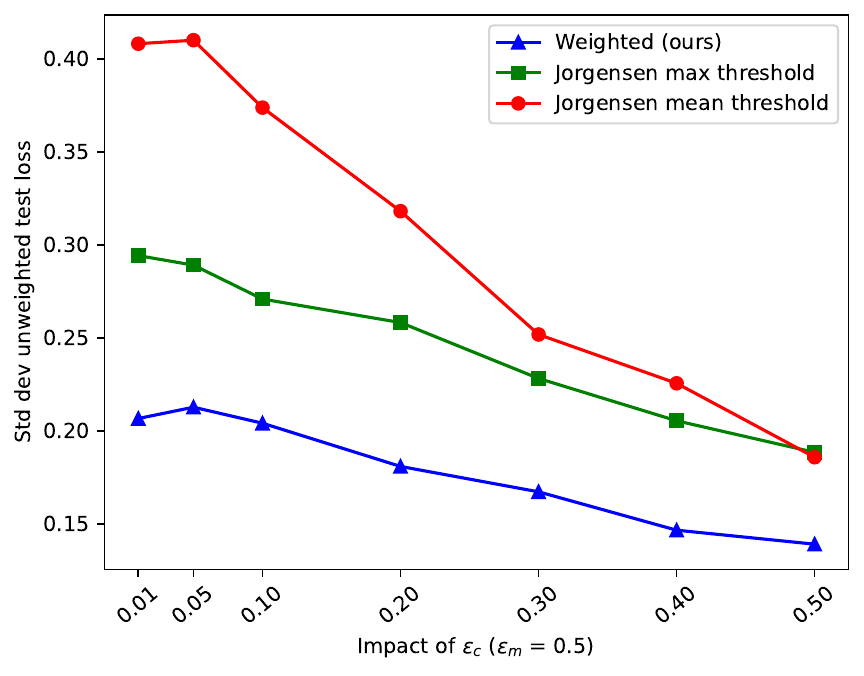}
    }
    \subfloat[Regularized Loss ($\lambda = 1$)]{\includegraphics[width=0.4\textwidth]{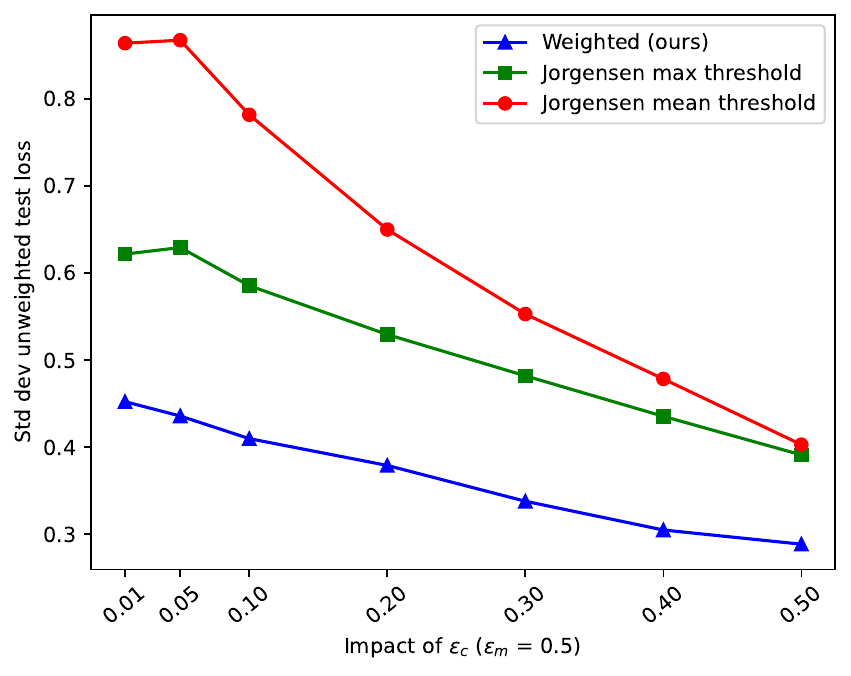}
    }
    \caption{Lower standard deviation compared to~\cite{jorgensen} on the Medical cost dataset, while varying the $\varepsilon_c$ (the privacy level of conservative users), keeping $\varepsilon_m = 0.5, \varepsilon_l=1.0, f_c = 0.54, f_m = 0.37, f_l = 0.09$. }\label{fig:insurance_jorgensen_epsc_std}
\end{figure*}

\begin{figure*}[!h]
    \centering
    \subfloat[Unregularized Loss ($\lambda = 50$)] {
        \includegraphics[width=0.4\textwidth]{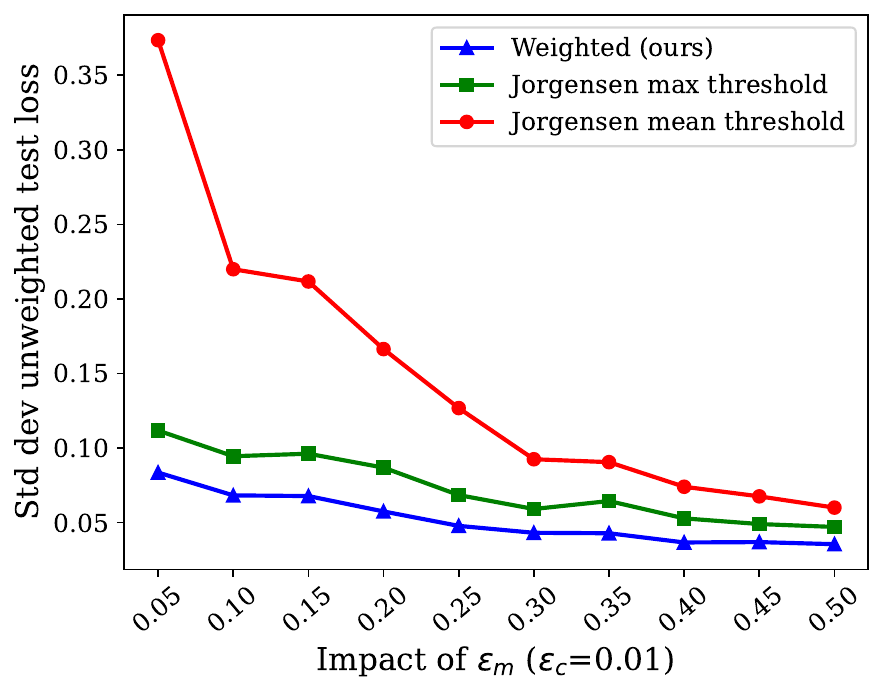}
    }
    \subfloat[Regularized Loss ($\lambda = 50$)]{
        \includegraphics[width=0.4\textwidth]{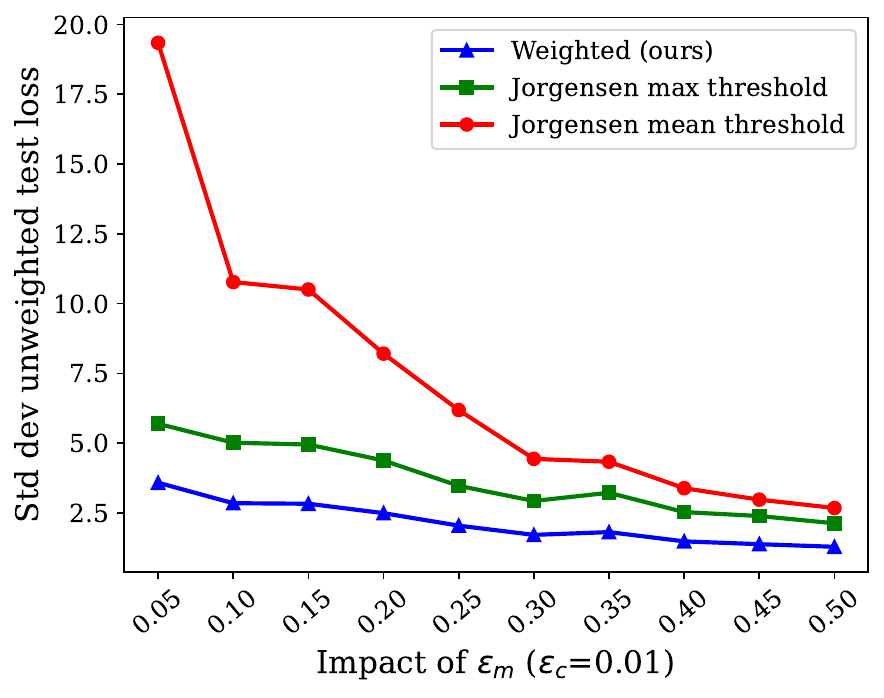}
        }
    \caption{Lower standard deviation compared to~\cite{jorgensen} on the synthetic dataset, while varying $\varepsilon_m$ (privacy level privacy level of the pragmatists), keeping $\varepsilon_c = 0.01, \varepsilon_l=1.0, f_c = 0.54, f_m = 0.37, f_l = 0.09$.}\label{fig:syn_jorgensen_epsm_std}
\end{figure*}

\begin{figure*}[!h]
    \centering
    \subfloat[Unregularized Loss ($\lambda = 1$)] {
        \includegraphics[width=0.4\textwidth]{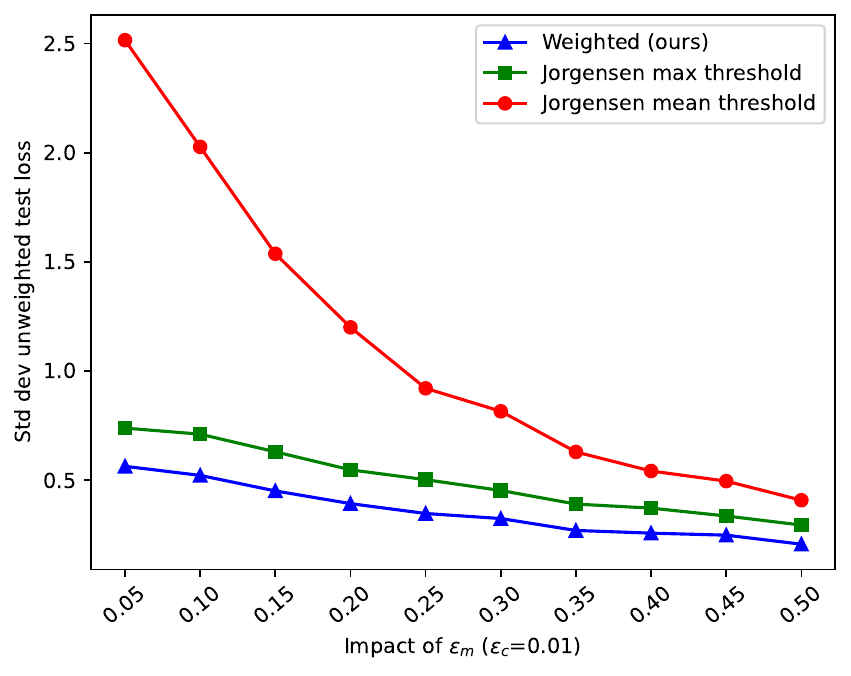}
    }
    \subfloat[Regularized Loss ($\lambda = 1$)]{
        \includegraphics[width=0.4\textwidth]{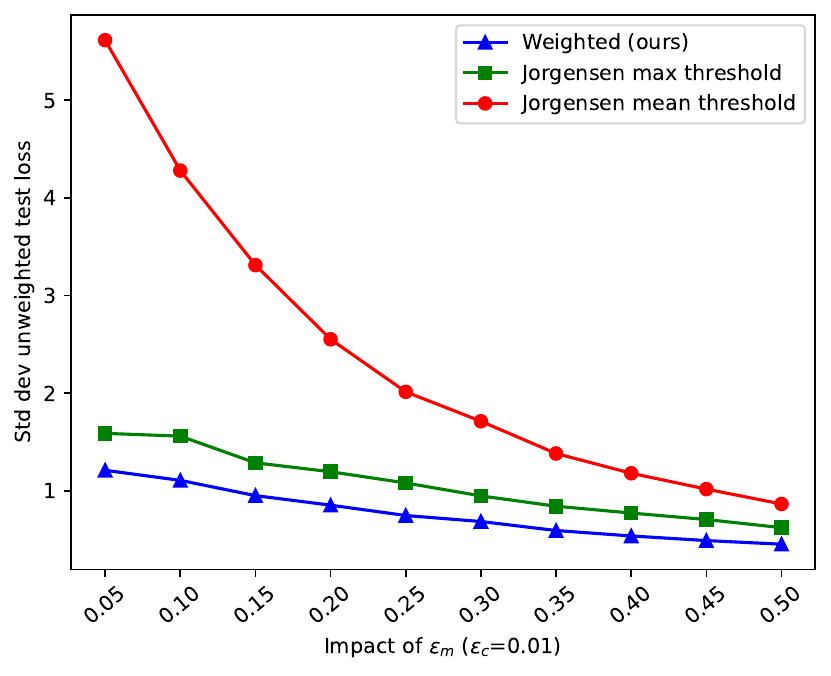}
        }
    \caption{Lower standard deviation compared to~\cite{jorgensen} on the Medical cost dataset, while varying $\varepsilon_m$ (privacy level privacy level of the pragmatists), keeping $\varepsilon_c = 0.01, \varepsilon_l=1.0, f_c = 0.54, f_m = 0.37, f_l = 0.09$.}\label{fig:insurance_jorgensen_epsm_std}
\end{figure*}

\begin{figure*}[!h]
    \centering
    \subfloat[Unregularized Loss ($\lambda = 100$)]{\includegraphics[width=0.4\textwidth]{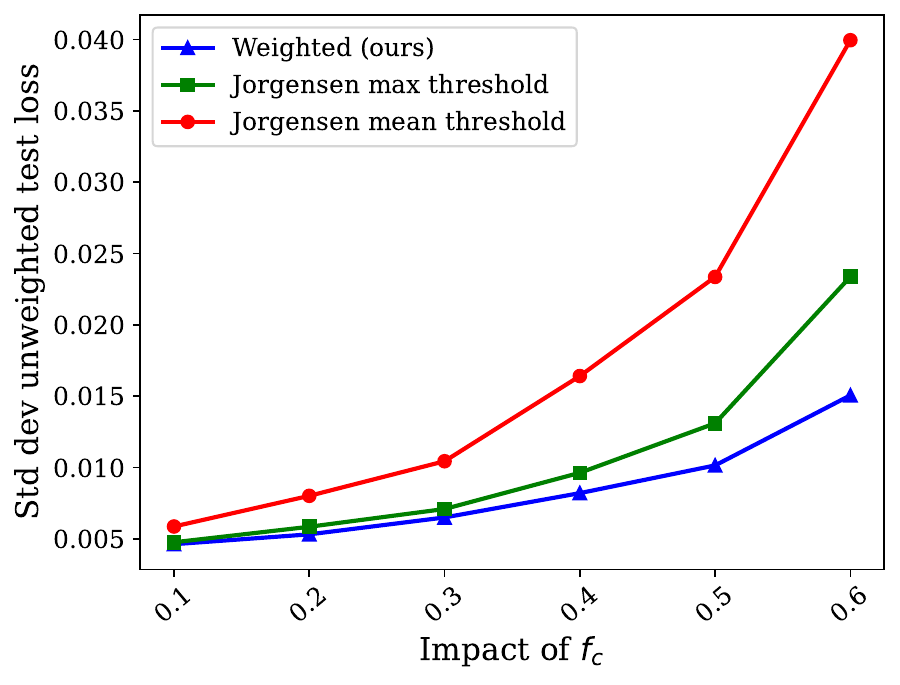}
    }
    \subfloat[Regularized Loss ($\lambda = 100$)]{
        \includegraphics[width=0.4\textwidth]{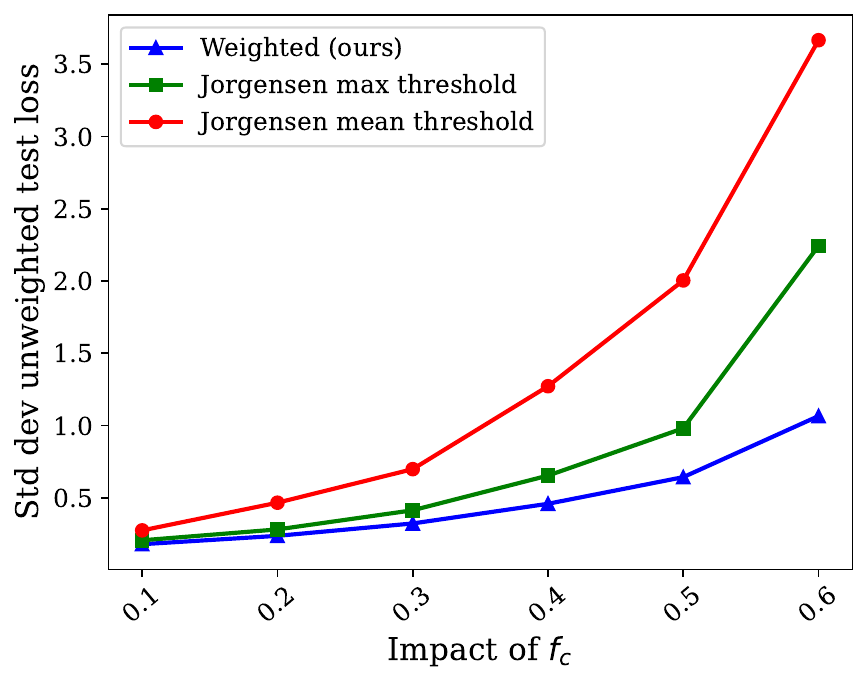}
    }
    \caption{Lower standard deviation compared to~\cite{jorgensen} on the synthetic dataset, while varying $f_c$ (the fraction of conservative users), keeping $\varepsilon_c=0.01, \varepsilon_m = 0.2, \varepsilon_l=1.0, f_m = 0.37, f_l = 1 - f_c - f_m = 0.63 - f_c$.}\label{fig:syn_jorgensen_fc_std}
\end{figure*}

\begin{figure*}[!h]
    \centering
    \subfloat[Unregularized Loss ($\lambda = 1$)]{\includegraphics[width=0.4\textwidth]{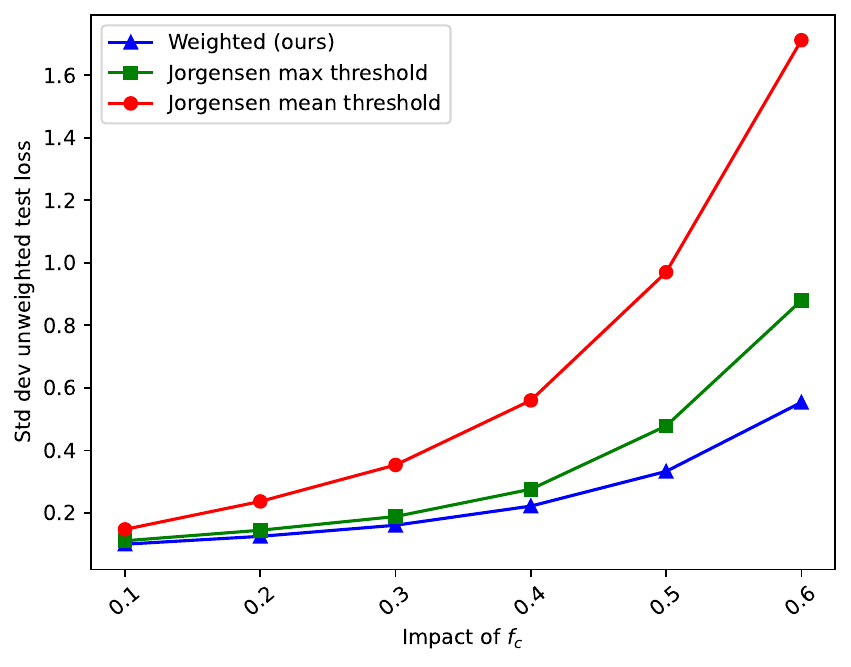
    }
    }
    \subfloat[Regularized Loss ($\lambda = 1$)]{
        \includegraphics[width=0.4\textwidth]{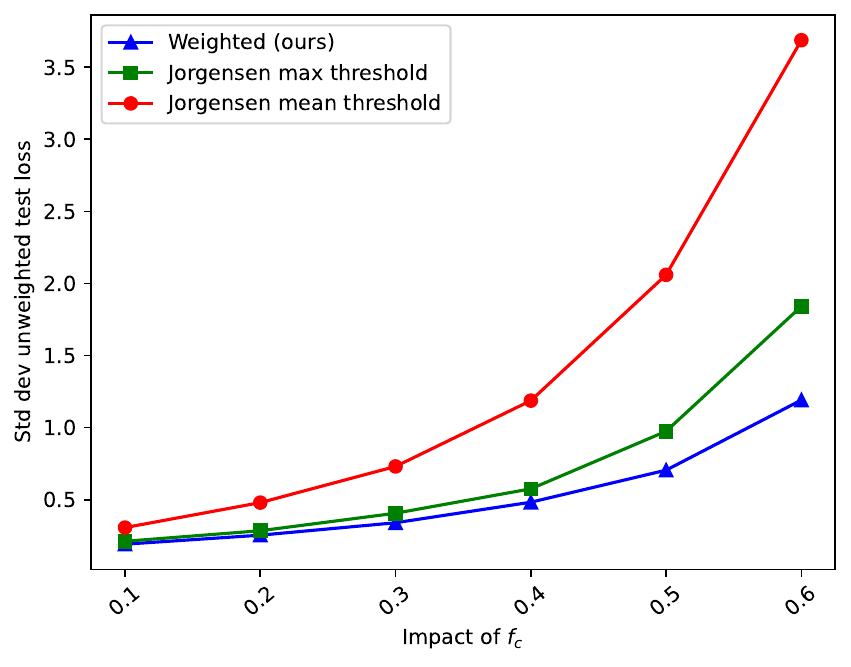}
    }
    \caption{Lower standard deviation compared to~\cite{jorgensen} on the Medical cost dataset, while varying $f_c$ (the fraction of conservative users), keeping $\varepsilon_c=0.01, \varepsilon_m = 0.2, \varepsilon_l=1.0, f_m = 0.37, f_l = 1 - f_c - f_m = 0.63 - f_c$.}\label{fig:insurance_jorgensen_fc_std}
\end{figure*}

\begin{figure*}[!h]
    \centering
    \subfloat[Unregularized Loss ($\lambda = 100$)]{
        \includegraphics[width=0.4\textwidth]{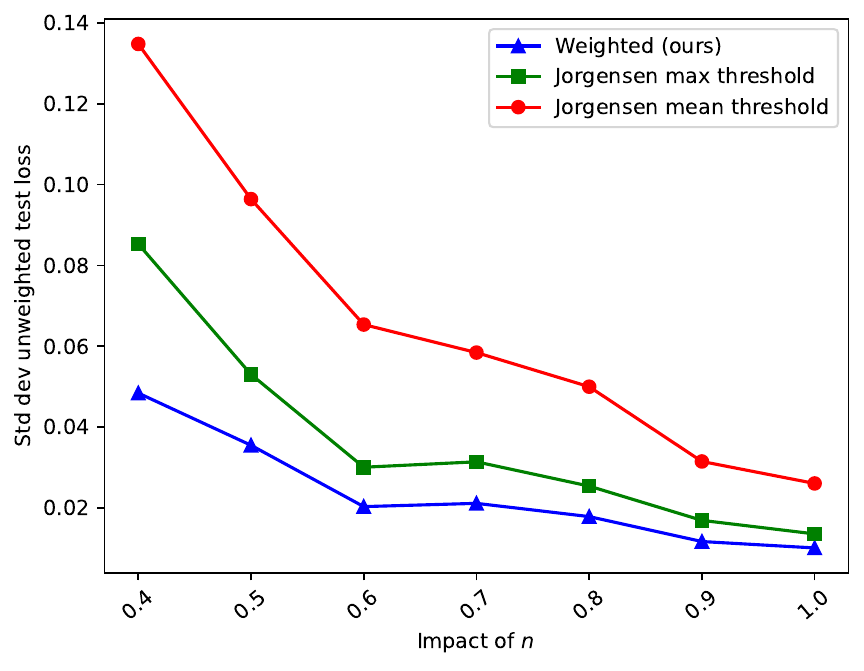}
    }
    \subfloat[Regularized Loss ($\lambda = 100$)]{
        \includegraphics[width=0.4\textwidth]{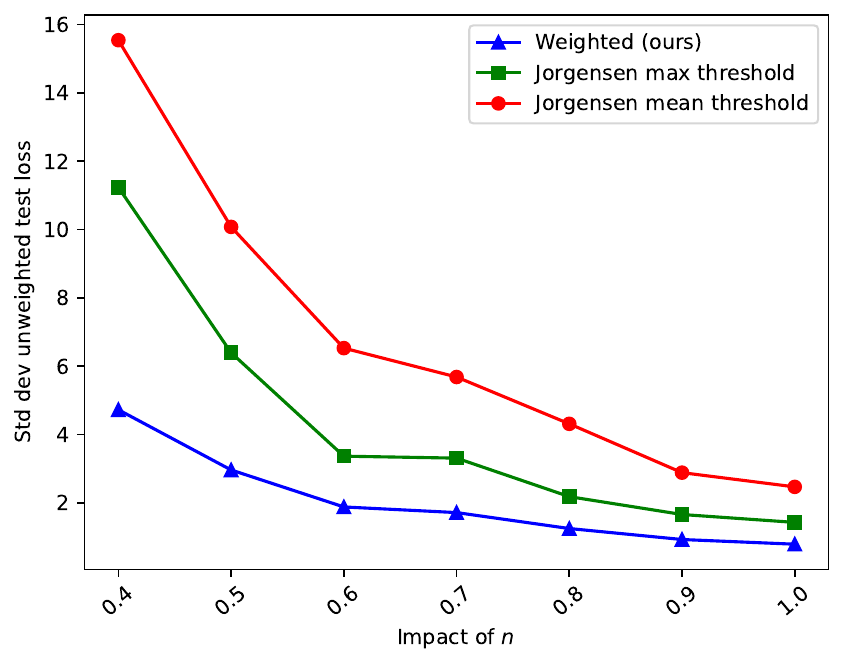}
    }
    \caption{Lower standard deviation compared to~\cite{jorgensen} on the synthetic dataset, while varying the parameter $n$ (the fraction of training samples used), keeping $\varepsilon_c=0.01, \varepsilon_m = 0.2, \varepsilon_l=1.0, f_c = 0.34, f_m = 0.43, f_l = 0.23$.}\label{fig:syn_jorgensen_n_std}
\end{figure*}

\begin{figure*}[!h]
    \centering
    \subfloat[Unregularized Loss ($\lambda = 1$)]{
        \includegraphics[width=0.4\textwidth]{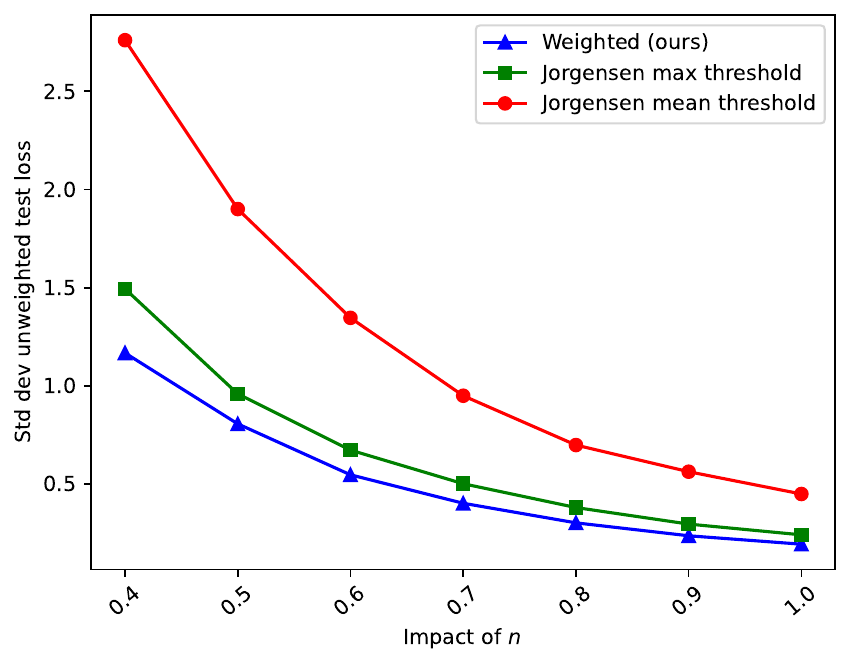}
    }
    \subfloat[Regularized Loss ($\lambda = 1$)]{
        \includegraphics[width=0.4\textwidth]{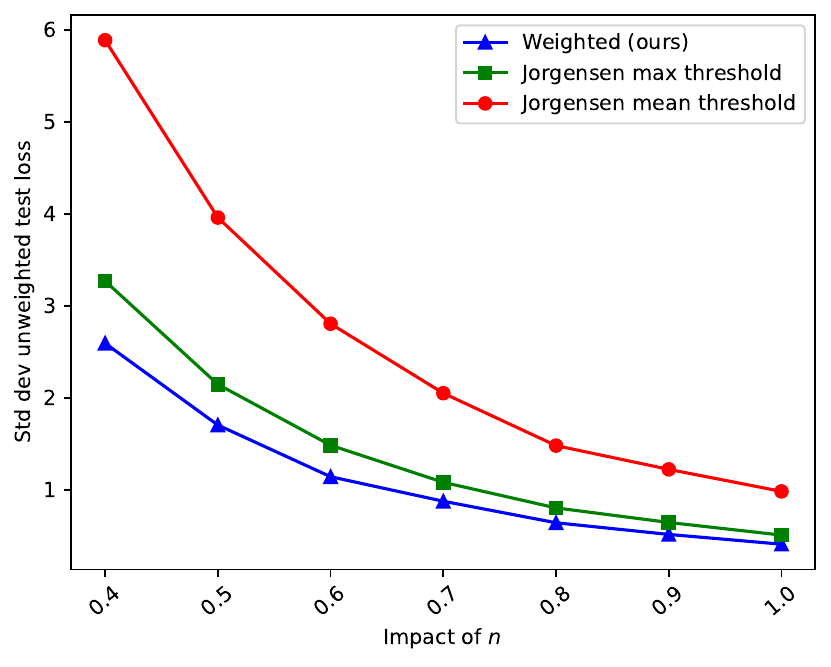}
    }
    \caption{Lower standard deviation compared to~\cite{jorgensen} on the Medical cost dataset, while varying the parameter $n$ (the fraction of training samples used), keeping $\varepsilon_c=0.01, \varepsilon_m = 0.2, \varepsilon_l=1.0, f_c = 0.34, f_m = 0.43, f_l = 0.23$.}\label{fig:insurance_jorgensen_n_std}
\end{figure*}

\begin{table*}[!h]
\centering
\resizebox{0.9\textwidth}{!}{%
\begin{tabular}{| c | ccc | ccc |}
\hline
\begin{tabular}[c]{@{}c@{}}Regularization\\ parameter\\ Lambda ($\lambda$)\end{tabular} &
  \begin{tabular}[c]{@{}c@{}}Unregularized\\ test loss \\ (\ours)\end{tabular} &
  \begin{tabular}[c]{@{}c@{}}Unregularized\\ test loss\\ (Jorgensen max)\end{tabular} &
  \begin{tabular}[c]{@{}c@{}}Unregularized\\ test loss\\ (Jorgensen mean)\end{tabular} &
  \begin{tabular}[c]{@{}c@{}}Regularized\\ test loss \\ (\ours)\end{tabular} &
  \begin{tabular}[c]{@{}c@{}}Regularized\\ test loss\\ (Jorgensen max)\end{tabular} &
  \begin{tabular}[c]{@{}c@{}}Regularized\\ test loss\\ (Jorgensen mean)\end{tabular} \\
\hline
1.00   & $\mathbf{9.69 \cross 10^{2}}$  & $1.34 \cross 10^{3}$  & $2.19 \cross 10^{3}$  & $\mathbf{1.53 \cross 10^{3}}$  & $2.01 \cross 10^{3}$  & $3.47 \cross 10^{3}$  \\
3.00   & $\mathbf{4.33 \cross 10^{1}}$  & $5.63 \cross 10^{1}$  & $9.98 \cross 10^{1}$  & $\mathbf{1.45 \cross 10^{2}}$  & $1.99 \cross 10^{2}$  & $3.39 \cross 10^{2}$  \\
5.00   & $\mathbf{1.13 \cross 10^{1}}$  & $1.41 \cross 10^{1}$  & $2.52 \cross 10^{1}$  & $\mathbf{5.65 \cross 10^{1}}$  & $7.69 \cross 10^{1}$  & $1.30 \cross 10^{2}$  \\
7.00   & $\mathbf{4.43}$                  & $5.50$                & $1.02 \cross 10^{1}$  & $\mathbf{3.15 \cross 10^{1}}$  & $4.20 \cross 10^{1}$  & $7.14 \cross 10^{1}$  \\
10.00  & $\mathbf{1.74}$                  & 2.18                  & 4.09                  & $\mathbf{1.72 \cross 10^{1}}$  & $2.32 \cross 10^{1}$  & $3.98 \cross 10^{1}$  \\
15.00  & $\mathbf{6.14 \cross 10^{-1}}$ & $7.69 \cross 10^{-1}$ & 1.30                  & $\mathbf{8.73}$         & $1.20 \cross 10^{1}$  & $2.03 \cross 10^{1}$  \\
20.00  & $\mathbf{2.88 \cross 10^{-1}}$ & $3.66 \cross 10^{-1}$ & $6.44 \cross 10^{-1}$ & $\mathbf{5.52}$                  & 7.43                  & $1.29 \cross 10^{1}$  \\
25.00  & $\mathbf{1.59 \cross 10^{-1}}$ & $2.15 \cross 10^{-1}$ & $3.79 \cross 10^{-1}$ & $\mathbf{3.91}$                  & 5.22                  & 9.15                  \\
50.00  & $\mathbf{2.37 \cross 10^{-2}}$ & $3.10 \cross 10^{-2}$ & $5.50 \cross 10^{-2}$ & $\mathbf{1.13}$                  & 1.53                  & 2.62                  \\
75.00  & $\mathbf{8.54 \cross 10^{-3}}$ & $1.11 \cross 10^{-2}$ & $1.88 \cross 10^{-2}$ & $\mathbf{5.73 \cross 10^{-1}}$ & $7.85 \cross 10^{-1}$ & 1.34                  \\
100.00 & $\mathbf{4.70 \cross 10^{-3}}$ & $5.64 \cross 10^{-3}$ & $9.85 \cross 10^{-3}$ & $\mathbf{3.72 \cross 10^{-1}}$ & $5.08 \cross 10^{-1}$ & $8.75 \cross 10^{-1}$ \\
200.00 & $\mathbf{1.33 \cross 10^{-3}}$ & $1.56 \cross 10^{-3}$ & $2.40 \cross 10^{-3}$ & $\mathbf{1.45 \cross 10^{-1}}$ & $1.99 \cross 10^{-1}$ & $3.41 \cross 10^{-1}$ \\
300.00 & $\mathbf{7.59 \cross 10^{-4}}$ & $8.62 \cross 10^{-4}$ & $1.26 \cross 10^{-3}$ & $\mathbf{8.77 \cross 10^{-2}}$ & $1.21 \cross 10^{-1}$ & $2.07 \cross 10^{-1}$ \\
400.00 & $\mathbf{5.41 \cross 10^{-4}}$ & $6.14 \cross 10^{-4}$ & $8.72 \cross 10^{-4}$ & $\mathbf{6.38 \cross 10^{-2}}$ & $8.48 \cross 10^{-2}$ & $1.48 \cross 10^{-1}$ \\
500.00 & $\mathbf{4.23 \cross 10^{-4}}$ & $4.79 \cross 10^{-4}$ & $6.45 \cross 10^{-4}$ & $\mathbf{4.92 \cross 10^{-2}}$ & $6.74 \cross 10^{-2}$ & $1.16 \cross 10^{-1}$ \\
\hline
\end{tabular}%
}
\caption{Improvements in variability of the test loss: Standard deviation of our algorithm is always lower, results on synthetic dataset with $d = 30,~ n = 100)$, while varying the regularization parameter $\lambda$, keeping $\varepsilon_c=0.01, \varepsilon_m = 0.2, \varepsilon_l=1.0, f_c = 0.34, f_m = 0.43$.}
\label{tab:stddev_syn_4_2_2_plevel_34_43_23}
\end{table*}

\begin{table}[!h]
\centering
\resizebox{0.9\textwidth}{!}{%
\begin{tabular}{|c|ccc|ccc|}
\hline
\begin{tabular}[c]{@{}c@{}}Regularization\\ parameter\\ Lambda ($\lambda$)\end{tabular} &
  \begin{tabular}[c]{@{}c@{}}Unregularized\\ test loss\\ (PDP-OP)\end{tabular} &
  \begin{tabular}[c]{@{}c@{}}Unregularized\\ test loss\\ (Jorgensen max)\end{tabular} &
  \begin{tabular}[c]{@{}c@{}}Unregularized\\ test loss\\ (Jorgensen mean)\end{tabular} &
  \begin{tabular}[c]{@{}c@{}}Regularized\\ test loss\\ (PDP-OP)\end{tabular} &
  \begin{tabular}[c]{@{}c@{}}Regularized\\ test loss\\ (Jorgensen max)\end{tabular} &
  \begin{tabular}[c]{@{}c@{}}Regularized\\ test loss\\ (Jorgensen mean)\end{tabular} \\
\hline
0.01 & $\mathbf{1.25 \cross 10^5}$     & $1.58 \cross 10^5$    & $3.12 \cross 10^5$    & $\mathbf{1.25 \cross 10^5}$    & $1.56 \cross 10^5$    & $3.01 \cross 10^5$    \\
0.05 & $\mathbf{1.08 \cross 10^3}$     & $1.32 \cross 10^3$    & $2.60 \cross 10^3$    & $\mathbf{1.13 \cross 10^3}$    & $1.37 \cross 10^3$    & $2.76 \cross 10^3$    \\
0.10 & $\mathbf{1.38 \cross 10^2}$     & $1.75 \cross 10^2$    & $3.28 \cross 10^2$    & $\mathbf{1.54 \cross 10^2}$    & $1.96 \cross 10^2$    & $3.75 \cross 10^2$    \\
0.50 & $\mathbf{1.35}$                  & 1.78                  & 3.26                  & $\mathbf{2.10}$                & 2.73                  & 4.99                  \\
0.60 & $\mathbf{8.33 \cross 10^{-1}}$  & 1.02                  & 1.96                  & $\mathbf{1.36}$                & 1.63                  & 3.21                  \\
0.70 & $\mathbf{5.31 \cross 10^{-1}}$  & $6.89 \cross 10^{-1}$ & 1.26                  & $\mathbf{9.54 \cross 10^{-1}}$ & 1.16                  & 2.25                  \\
0.80 & $\mathbf{3.64 \cross 10^{-1}}$  & $4.66 \cross 10^{-1}$ & $8.96 \cross 10^{-1}$ & $\mathbf{6.93 \cross 10^{-1}}$ & $9.05 \cross 10^{-1}$ & 1.73                  \\
0.90 & $\mathbf{2.67 \cross 10^{-1}}$  & $3.43 \cross 10^{-1}$ & $6.46 \cross 10^{-1}$ & $\mathbf{5.39 \cross 10^{-1}}$ & $6.82 \cross 10^{-1}$ & 1.28                  \\
1.00 & $\mathbf{1.98 \cross 10^{-1}}$  & $2.45 \cross 10^{-1}$ & $4.71 \cross 10^{-1}$ & $\mathbf{4.35 \cross 10^{-1}}$ & $5.27 \cross 10^{-1}$ & 1.00                  \\
2.00 & $\mathbf{3.96 \cross 10^{-2}}$  & $5.00 \cross 10^{-2}$ & $8.04 \cross 10^{-2}$ & $\mathbf{1.06 \cross 10^{-1}}$ & $1.28 \cross 10^{-1}$ & $2.43 \cross 10^{-1}$ \\
3.00 & $\mathbf{2.11 \cross 10^{-2}}$ & $2.33 \cross 10^{-2}$ & $3.84 \cross 10^{-2}$ & $\mathbf{4.80 \cross 10^{-2}}$ & $6.15 \cross 10^{-2}$ & $1.17 \cross 10^{-1}$ \\
5.00 & $\mathbf{1.11 \cross 10^{-2}}$  & $1.27 \cross 10^{-2}$ & $1.78 \cross 10^{-2}$ & $\mathbf{2.02 \cross 10^{-2}}$ & $2.50 \cross 10^{-2}$ & $4.77 \cross 10^{-2}$ \\
\hline
\end{tabular}%
}
\caption{Improvements in variability of the test loss: Standard deviation of our algorithm is always lower, results on Medical cost dataset, while varying the regularization parameter $\lambda$, keeping $\varepsilon_c=0.01, \varepsilon_m = 0.2, \varepsilon_l=1.0, f_c = 0.34, f_m = 0.43$.}
\label{tab:insurance_4_2_2_plevel_34_43_23}
\end{table}


\end{document}